\documentclass{article}

\usepackage{microtype}
\usepackage{graphicx}
\usepackage{subcaption}
\usepackage{booktabs} 

\usepackage{hyperref}

\usepackage[accepted]{icml2026}

\usepackage{amsmath}
\usepackage{amssymb}
\usepackage{mathtools}
\usepackage{amsthm}

\usepackage[capitalize,noabbrev]{cleveref}

\theoremstyle{plain}
\newtheorem{theorem}{Theorem}[section]
\newtheorem{proposition}[theorem]{Proposition}
\newtheorem{lemma}[theorem]{Lemma}
\newtheorem{corollary}[theorem]{Corollary}
\theoremstyle{definition}
\newtheorem{definition}[theorem]{Definition}
\newtheorem{assumption}[theorem]{Assumption}
\theoremstyle{remark}

\usepackage[textsize=tiny]{todonotes}

\icmltitlerunning{The Forgetting-Retention Dilemma: Certified Unlearning Theory in Continual Learning}

\begin{document}

\twocolumn[
  \icmltitle{The Forgetting-Retention Dilemma: Certified Unlearning Theory in Continual Learning}



\icmlsetsymbol{equal}{*}

\begin{icmlauthorlist}
  \icmlauthor{Yiting Hu}{sutd,hkustgz}
  \icmlauthor{Lingjie Duan}{hkustgz}
  \icmlauthor{Qian Zhang}{sutd}
\end{icmlauthorlist}

\icmlaffiliation{sutd}{Singapore University of Technology and Design, Singapore}
\icmlaffiliation{hkustgz}{The Hong Kong University of Science and Technology (Guangzhou), Guangzhou, China}

\icmlcorrespondingauthor{Lingjie Duan}{lingjieduan@hkust-gz.edu.cn}

  \icmlkeywords{Machine Learning, ICML}

  \vskip 0.3in
]



\printAffiliationsAndNotice{}  
\begin{abstract}
Machine unlearning aims to eliminate the influence of specific data from trained models to safeguard privacy. However, this presents a significant challenge in the context of continual learning (CL), where models update sequentially on dynamic datasets. A major limitation is that current certified unlearning algorithms fail to account for the complex, cumulative model evolution inherent to CL framework. In this work, we establish the first theoretical foundation bridging CL and machine unlearning. We formulate the CL's unlearning objective as the minimization of post-unlearning excess risk, which decomposes into CL excess risk and unlearning loss, characterizing the fundamental trade-off between preserving historical knowledge and targeted forgetting. Under mild assumptions, we first establish an upper bound for the CL excess risk in non-convex models. We then adapt two certified unlearning approaches, gradient-based and Hessian-based, to the CL framework. Our analysis reveals that while the gradient-based approach is less effective than the Hessian-based method in minimizing unlearning loss, it offers the distinct advantage of nearly zero storage overhead for enabling unlearning. This insight motivates a hybrid strategy that reduces storage costs while maintaining post-unlearning performance. Experimental results further validate our theoretical findings.
\end{abstract}

\section{Introduction}
As machine learning applications, ranging from large language models to healthcare systems, increasingly rely on user data for personalization, privacy concerns have grown substantially. To uphold the right to be forgotten and safeguard user privacy, the field of machine unlearning has advanced rapidly, developing methods to eliminate the influence of specific data without the prohibitive cost of retraining models from scratch \citep{gupta2021adaptive,warnecke2021machine,sekhari2021remember,suriyakumar2022algorithms,chien2022efficient,chien2024certified,qiao2024hessian}. Within this landscape, $(\varepsilon,\delta)$-certified unlearning \citep{guo2019certified} has emerged as a promising paradigm. By leveraging insights from differential privacy, these algorithms provide rigorous theoretical guarantees that the output distribution of the unlearning algorithm is indistinguishable from that of retraining on the remaining data.

Certified unlearning methods generally fall into two categories: Hessian-based approaches, which utilize second-order approximations to estimate the retrained model \citep{sekhari2021remember,suriyakumar2022algorithms,liu2023certified,qiao2024hessian,basaran2025a}, and gradient-based approaches, which fine-tune the model on the remaining data via noise-injected updates \citep{neel2021descent,chien2024certified,chien2024langevin,koloskova2025certified}. However, existing certified unlearning algorithms face a critical limitation. Existing methods either necessitate impractical full-data retention \citep{neel2021descent} or, when relaxing this constraint, fail to move beyond isolated, one-time unlearning requests \citep{sekhari2021remember}. Consequently, no existing certified unlearning framework simultaneously accommodates the dual constraints of limited data access and the dynamic, sequential nature of unlearning in CL contexts.

Modern machine learning paradigms, such as Large Language Models (LLMs) and dynamic computer vision systems, are increasingly shifting toward CL, in which models are trained on tasks sequentially. Because retaining historical datasets is often infeasible due to storage or privacy constraints, standard CL models struggle with catastrophic forgetting, where acquiring new knowledge severely degrades performance on previously learned tasks \citep{swartworth2023nearly,evron2022catastrophic,deng2025unlocking}.



When machine unlearning meets CL, two novel challenges emerge. First, unlearning requests often target data from tasks learned in the distant past. This is problematic because the model evolves dynamically over non i.i.d. tasks while encodes the information of past data in a complicated way, rendering current certified algorithms ineffective. Second, the goal of effective unlearning inherently conflicts with the objective of preventing forgetting in CL. Hence, a balance must be struck between minimizing the unlearning error and the excess risk of CL, while accounting for storage overhead and the specific influence of unlearning request patterns. Although some experimental works on unlearning in the CL setting have been studied recently, they are vulnerable in terms of privacy protection, as they all fail to satisfy the rigorous certified unlearning requirement. More related works are discussed in Appendix \ref{related}. We aim to answer the following two questions in this paper:

\textit{(i) How to adapt current certified unlearning method to work in CL framework?}

\textit{(ii) How can the objectives of CL (preventing forgetting) and machine unlearning (effective forgetting) be theoretically reconciled?}

Our main contributions are summarized as follows.
\vspace{-0.2cm}
\begin{itemize}
\item We present the first theoretical investigation into the problem of certified unlearning within a CL framework, referred as continual learning-unlearning (CLU) \cite{vahedifartowards}. Our research establishes a novel analytical connection, bridging the excess risk introduced by CL and the unlearning loss resulting from unlearning. To begin, we prove an excess-risk bound for the simple but fundamental $\ell_2$-regularized CL algorithm, extending prior results from linear to nonlinear models. 
\item  We adapt the existing gradient-based certified unlearning that leverages the forgetting effect inherent to CL, directly adding noise to achieve certified unlearning. Our analysis reveals that while this approach may suffer from high unlearning loss, particularly for unlearning recent tasks, its primary advantage of zero storage overhead makes it a crucial benchmark for CLU.
\item  We adapt Hessian-based unlearning to the CL framework, overcoming the challenge of handling arbitrary unlearning request sequences, achieving a significantly lower unlearning loss than the gradient-based method. To reduce the Hessian-based method's storage costs while maintaining post-unlearning performance, we incorporate the gradient-based unlearning to optimize Hessian-based unlearning. These theoretical findings are validated by experiments on real world dataset.
\end{itemize}
\vspace{-0.2cm}

\section{Problem formulation for CLU}\label{S2}
\begin{figure*}[t]
    \centering
    \includegraphics[width=0.7\linewidth]{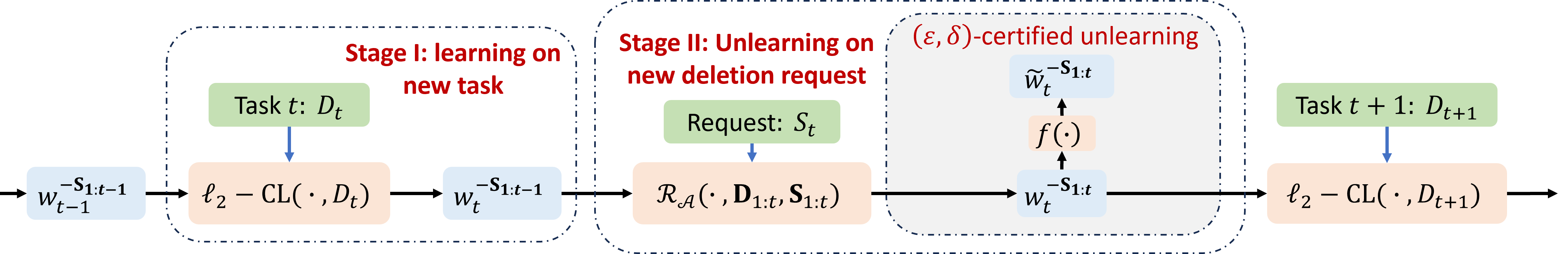}
    \caption{Two-stage CL and unlearning at time \(t\): starting from the last model $w_{t-1}^{-\mathbf{S}_{1:t-1}}$ at time $t-1$, we first train on task \(t\) with dataset $D_t$ to obtain \(w_t^{-\mathbf{S}_{1:t-1}}\) in Stage I. Upon receiving a possible deletion request \(S_t\), in Stage II, the unlearning scheme 
\(\mathcal{R}_{\mathcal A}(\cdot, \mathbf{D}_{1:t}, \mathbf{S}_{1:t})\) in (\ref{Umap}) updates the internal model \( w_t^{-\mathbf{S}_{1:t}}\), and publishes the final unlearning model \(\tilde w_t^{-\mathbf{S}_{1:t}}\), by noise adding mapping $f$ in (\ref{public}) to achieve \((\varepsilon,\delta)\)-certified continual unlearning in Definition \ref{D1}. Then, it feeds \( w_t^{-\mathbf{S}_{1:t}}\) to train on next task \(t+1\).
}\vspace{-0.8em}
    \label{F1}
\end{figure*}
In this section, we formalize CLU as a sequential two-stage process consisting of learning and unlearning, as illustrated in Fig.~\ref{F1}. We then define the $(\varepsilon,\delta)$-certified continual unlearning problem building upon this framework. Finally, we introduce the post-unlearning excess risk, which serves as the primary performance measure in the CLU setting and captures the interplay between CL and certified unlearning.
\subsection{Two-stage CLU framework}
CLU refers to a learning paradigm in which a model is trained sequentially on a stream of tasks, while being required to selectively remove the influence of tasks upon request. 
Specifically, we consider a discrete time horizon $t = 1, \ldots, T$. At each time step $t$, the learner first observes and update the model on a new task with dataset $D_t = \{z_i\}_{i=1}^{n_t} \subseteq \mathcal{Z}^{n_t}$, where $n_t$ denotes the sample size of task $t$. In addition, at time $t$ the learner may receive an unlearning request specified by an index set $S_t \subseteq \{1,2,\ldots,t\}$, requiring the immediate removal of the influence of datasets $\{D_i\}_{i \in S_t}$ from the current model.\footnote{We focus on task-level unlearning in our CLU framework, while our method can be readily extended to data-level deletion.} 

To facilite the demonstration, we further define the sequence of historical task dataset and historical unlearning index set upon time $t$ as $\mathbf{D}_{1:t}:=(D_1, \ldots, D_t)$ and $\mathbf{S}_{1:t}:=(S_1, \ldots, S_t)$, respectively.
For $t, t' = 1, \ldots, T$, we denote by $w_t^{-\mathbf{S}_{1:t'}}$ the model parameter produced by the CLU  process. 
The subscript $t$ indicates that task datasets up to time $t$ have been learned, while the superscript $-\mathbf{S}_{1:t'}$ denotes that the datasets indexed by $\mathbf{S}_{1:t'}$ have been unlearned.

\textbf{Stage I: Learning from $D_t$.} 
 Let $\ell(w,z)$ denote the training loss for a model parameter $w \in \mathcal{W} \subseteq \mathbb{R}^d$ evaluated on a data instance $z \in \mathcal{Z}$. As shown in Fig. \ref{F1}, at time $t$, we first train on the new task $D_t$ starting from the model obtained at the previous time step $w_{t-1}^{-\mathbf{S}_{1:t-1}}$. We employ the widely adapted $\ell_2$-regularized CL scheme for training, i.e., the model parameter is updated by solving 
    \begin{align}
    w^{-\mathbf{S}_{1:t-1}}_t\!\in\!\arg\min_{w\in\mathcal{W}} \sum_{z_i\in D_t}\frac{\ell(w,z_i)}{n_t}+\frac\lambda 2 \|w- w_{t-1}^{-\mathbf{S}_{1:t-1}}\|_2^2.
    \label{A1} \tag{$\ell_2$-CL} 
  \end{align}
The first term in the objective above minimizes the empirical loss on the current dataset, while the second term mitigates the forgetting of prior tasks by regularizing the model parameters. This method has shown practical effectiveness \cite{kirkpatrick2017overcoming}, and its tractable structure has also made it central to recent theoretical studies of CL \cite{evron2022catastrophic,lin2023theory}. We therefore initiate the study of certified unlearning in CL under this method. 

\textbf{Stage II: Unlearning of $S_t$.}  
%

As illustrated in Fig.~\ref{F1}, after completing the learning stage at time $t$ and obtain model $w_t^{-\mathbf{S}_{1:t-1}}$, the system may receive an unlearning request $S_t\subseteq \{1,\dots,t\} \setminus S_{\le t-1}$, where
$S_{\le t-1} = \bigcup_{i=1}^{t-1} S_i$ denotes the set of tasks previously requested for deletion, with $S_0 = \emptyset$. Given the current internal model state $w_t^{-\mathbf{S}_{1:t-1}}$, we apply an unlearning algorithm $\mathcal{R}_{\mathcal{A}}$ to remove the influence of the tasks indexed by $S_t$, resulting in an updated model state $w_t^{-\mathbf{S}_{1:t}}$.
If $S_t = \emptyset$, no unlearning is performed and $w_t^{-\mathbf{S}_{1:t}} = w_t^{-\mathbf{S}_{1:t-1}}$.
To achieve certified unlearning, the internal model is processed by a noise-adding mechanism $f$ to produce the publish model: $\tilde w_t^{-\mathbf{S}_{1:t}} = f(w_t^{-\mathbf{S}_{1:t}})$.
The internal model $w_t^{-\mathbf{S}_{1:t}}$ is then carried over to time $t+1$ for further training.

\subsection{Objectives and guarantees in CLU}

Given an unlearning-request history $\mathbf{S}_{1:t}$, the goal of certified unlearning is to produce an updated model that matches the model obtained by perfect retraining on the remaining data. We define an unlearning algorithm at time $t$ by
\begin{align}
    w_t^{-\mathbf{S}_{1:t}}
=
\mathcal{R}_{\mathcal A}\!\left(
w_t^{-\mathbf{S}_{1:t-1}},\,
\mathbf{D}_{1:t},\,
\mathbf{S}_{1:t}
\right),\label{Umap}
\end{align}
which aims to approximate the perfect retraining model
\begin{align}
    w_t^{-S_{\le t}}
=
\mathcal{A}\!\left((D_i)_{i\in [t]\setminus S_{\le t}}\right),
\qquad
S_{\le t}:=\bigcup_{i=1}^t S_i,
\end{align}
where $\mathcal{A}$ denotes the CL procedure \eqref{A1} applied to the sequence of remaining tasks $[t]\setminus S_{\le t}$.
Note that $w_t^{-S_{\le t}}$ depends only on the retained datasets, whereas $\mathcal{R}_{\mathcal A}$ may depend on the sequence of entire request history $\mathbf{S}_{1:t}$.
To provide a certified guarantee, we require the publish model
\begin{align}
    \tilde w_t^{-\mathbf{S}_{1:t}} = f\!\left(w_t^{-\mathbf{S}_{1:t}}\right)\label{public}
\end{align}
to be distributionally indistinguishable from the randomized retraining output $\tilde w_t^{-{S}_{\leq t}} = f\!\left(w_t^{-{S}_{\leq t}}\right)$.
We formalize this requirement by extending the $(\varepsilon,\delta)$-certified unlearning \cite{neel2021descent} to the continual setting.

\begin{definition}[{\bf $(\varepsilon,\delta)$-certified continual unlearning}]\label{D1}
Let $\varepsilon>0$ and $\delta\ge 0$. An unlearning algorithm $f\circ\mathcal{R}_{\mathcal A}$ satisfies \emph{$(\varepsilon,\delta)$-certified continual unlearning} if, for every time $t$, every dataset sequence, every admissible deletion-request history, and every measurable subset \(\mathcal O\subseteq\mathcal W\),
\begin{align}
\Pr\!\left(\tilde w_t^{-\mathbf{S}_{1:t}} \in \mathcal{O}\right)
&\le e^{\varepsilon}\Pr\!\left(\tilde w_t^{-{S}_{\leq t}} \in \mathcal{O}\right)+\delta, \\
\Pr\!\left(\tilde w_t^{-{S}_{\leq t}} \in \mathcal{O}\right)
&\le e^{\varepsilon}\Pr\!\left(\tilde w_t^{-\mathbf{S}_{1:t}} \in \mathcal{O}\right)+\delta.
\label{D1eq}
\end{align}
\end{definition}\vspace{-0.3em}
Under the $(\varepsilon,\delta)$-certified unlearning constraint, the objective of CLU is to minimize the post-unlearning excess risk over all retained tasks, defined as follows.

\begin{definition}[Post-unlearning excess risk]\label{D2} 

Let $\mathcal{D}_\tau$ denote the data-generating distribution of task $\tau$, from which the dataset $D_\tau$ is drawn, and define the population loss for task $\tau$ as
$
F_\tau(w)=\mathbb{E}_{z\sim\mathcal{D}_\tau}[\ell(w,z)].
$
At time $t$, the post-unlearning excess risk of a CLU algorithm is defined as
\begin{align}
\!\!\mathcal{E}_{\rm LU} \!:= 
\mathbb{E} \bigg[
\frac{1}{N_t}\!\!\!
\sum_{\substack{\tau=1\\ \tau\notin S_{\le t}}}^t\!\!\!
F_\tau(\tilde w_t^{-\mathbf{S}_{1:t}})
\bigg]
\!-\!
\min_{w\in\mathcal W}
\frac{1}{N_t}\!\!\!
\sum_{\substack{\tau=1\\ \tau\notin S_{\le t}}}^t\!\!\!
F_\tau(w),
\label{obj1}
\end{align}
where $N_t:=t-|S_{\le t}|$, and we consider \(t\) with \(N_t\ge 1\). Here, the expectation is taken over the randomness of the per-task datasets $\{D_\tau\}_{\tau=1}^t$ and the publish noise-adding map $f$ used to produce $\tilde w_t^{-\mathbf{S}_{1:t}}$.
\end{definition}
The post-unlearning excess risk in~\eqref{obj1} admits an decomposition 
as ${\mathcal{E}_{\rm LU} = \mathcal{E}_{\rm U} + \mathcal{E}_{\rm L}}$, 
where
\begin{align}
\mathcal{E}_{\rm U}\!&:= \!\!\ \underbrace{
\mathbb{E}\bigg[
\frac{1}{N_t}\!\!\!
\sum_{\substack{\tau=1\\ \tau\notin S_{\le t}}}^t
\bigl(
F_\tau(\tilde w_t^{-\mathbf{S}_{1:t}})
\!-\!
F_\tau(w_t^{-S_{\le t}})
\bigr)
\bigg],
}_{\text{unlearning loss}}
\label{obj2}\\
\mathcal{E}_{\rm L}\! &:=\!\!\ 
\underbrace{
\mathbb{E}\bigg[
\frac{1}{N_t}\!\!\!
\sum_{\substack{\tau=1\\ \tau\notin S_{\le t}}}^t
\!\!\!F_\tau(w_t^{-S_{\le t}})
\bigg]
\!-\!
\min_{w\in\mathcal W}
\frac{1}{N_t}\!
\!\!\sum_{\substack{\tau=1\\ \tau\notin S_{\le t}}}^t\!\!\!
F_\tau(w)
}_{\text{continual learning excess risk}}.
\label{obj3}
\end{align}
Here, $\mathcal{E}_{\rm U}$ captures the unlearning loss induced by the unlearning procedure, while $\mathcal{E}_{\rm L}$ corresponds to the excess risk incurred by the CL algorithm on the retained tasks.
Since the unlearning loss is controlled by the deterministic approximation error
$\|w_t^{-\mathbf{S}_{1:t}} - w_t^{-S_{\le t}}\|$,
we also refer to this term as the \emph{approximation error}.

This decomposition highlights a fundamental tension inherent in CLU: strictly mitigating forgetting to minimize the learning error $\mathcal{E}_{\rm L}$ in \eqref{obj3} can inadvertently exacerbate the unlearning error $\mathcal{E}_{\rm U}$ in \eqref{obj2}. This phenomenon stands in sharp contrast to previous certified unlearning studies in static settings \citep{sekhari2021remember,suriyakumar2022algorithms,liu2023certified}, where minimizing excess risk typically aligns with and facilitates the unlearning process. Consequently, CLU necessitates a novel trade-off: balancing performance retention during CL with the requirement for efficient and certified unlearning.

We introduce the following standing assumption on the loss function, which will be used throughout the paper.

\begin{assumption}[Local curvature and smoothness]\label{Assum1} 
There exists a radius $r\geq0$ such that for any $w \in \overline{\mathcal{B}}(w_0, r)$, the loss function $\ell(w, z)$ for any $z\in\mathcal{Z}$ is twice continuously differentiable almost everywhere. On the set of where \( \ell(\cdot,z) \) is twice continuously differentiable, its Hessian satisfies
\begin{align}
\mu I \preceq \nabla^2 \ell(w, z) \preceq M I, \; \forall z\in \operatorname{supp}(\mathcal D_t)
\end{align}
for $t=1\dots,T$ and for finite constants {$ \mu \le M$}.  Furthermore, $\ell(w, z)$ is $L$-Lipschitz continuous.
\end{assumption}
The assumption of almost everywhere (a.e.) twice continuous
differentiability is satisfied by many modern architectures, including
deep neural networks with piecewise smooth activations. In practice, the
Hessian may fail to exist only on nondifferentiable boundaries, such as
ReLU kink points or the origin in \(L_1\) regularization. These sets have
Lebesgue measure zero, so the Hessian is well-defined along almost every
point of the localized parameter region. The bounded-curvature
requirement is also mild because it is imposed only locally on the
bounded region visited by the training and unlearning trajectories, and
the constants \(\mu\) and \(M\) only need to be finite. In particular, allowing \(\mu<0\) accommodates non-convex
loss landscapes.

\begin{assumption}[Localized empirical minimizers]\label{Assum2}
Set the parameter space as $\mathcal W={\mathcal B}(w_0,r).$ For each task \(t\), define the set of empirical risk minimizers as $W_t = \{w \in \mathcal{W}$: $\sum_{z\in D_t}\ell(w,z)/n_t = \min_{w'\in\mathcal{W}}\sum_{z\in D_t}\ell(w',z)\}$.
We assume that there exists an empirical minimizer
\(\hat w_t\in W_t\).
Moreover, the deterministic internal states generated by
the CLU and retraining procedures exist and remain in \(\mathcal W\).
\end{assumption}

Assumption~\ref{Assum2} is widely applicable to modern CL settings where
training is restricted to a bounded or localized parameter region, as is
common in analyses of overparameterized models. Theoretical advancements
in deep learning, especially within the neural tangent kernel (NTK)
framework, demonstrate that sufficiently wide networks can converge to a
global minimizer while moving only a small distance from the initial
parameters \(w_0\)
\cite{NEURIPS2018_5a4be1fa,NEURIPS2019_ae614c55,pmlr-v97-oymak19a,NEURIPS2018_54fe976b}.
Thus, it is natural to focus on a localized parameter space around the
initialization, namely \(\mathcal W={\mathcal B}(w_0,r)\).

\section{Preliminary Results on Excess Risk in CL}\label{S22}
Before turning to the unlearning analysis, we establish a theoretical upper bound on the CL excess risk $\mathcal{E}_{\rm L}$ in~\eqref{obj3} for the $\ell_2$-regularized update in~\eqref{A1}. This result serves as a fundamental building block for our subsequent analysis of the post-unlearning excess risk.

\begin{theorem}\label{thm:CL_excess}
Let $\{\tau_1,\dots,\tau_{N_t}\} := [t] \setminus S_{\le t}$ denote the indices of the retained tasks at time $t$. Suppose Assumptions~\ref{Assum1} and \ref{Assum2} hold, and choose $\lambda>\max\{0,-\mu\}$. Let \(B:=\sup_{w\in\mathcal W,\,z\in\mathcal Z}|\ell(w,z)|\).
Then, for the perfect retraining model $w_t^{-S_{\le t}}=\mathcal{A}\!\left((D_i)_{i\in [t]\setminus S_{\le t}}\right)$, the CL excess risk defined in~\eqref{obj3} satisfies $\mathcal{E}_{\rm L}
\;\le\;
\mathcal{E}^{-S_{\le t}}(\lambda),$ where
{\begin{align} 
\!\!\mathcal{E}^{-S_{\le t}}(\lambda)\!
:=&
L\bigg(2r\kappa{\frac{(N_t-1)(1-\rho^{N_t})}{N_t(1-\rho)}}
+ r\rho^{N_t} \bigg)
\!\nonumber\\
&+\! \Gamma\bigl((n_{\tau_k})_{k=1}^{N_t}\bigl)
\label{CLG}
\end{align}}with $\rho := \frac{\lambda}{\mu+\lambda},\kappa:=\max\left\{\frac{M}{M+\lambda},\frac{|\mu|}{\mu+\lambda}\right\}$. Here, the generalization term $\Gamma\bigl((n_{\tau_k})_{k=1}^{N_t}\bigl)$ is
\begin{align*}
    &\Gamma\bigl((n_{\tau_k})_{k=1}^{N_t}\bigr)\\
:= &\sum_{k=1}^{N_t}
\left[
\frac{4B}{N_t}\sqrt{
\frac{2d\ln \bigl({2Lr\sqrt{n_{\tau_k}}}/{B}+1\bigr)}
{n_{\tau_k}}
}
+\frac{2B}{N_t\sqrt{n_{\tau_k}}}
\right].
\end{align*}
\end{theorem}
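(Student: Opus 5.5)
Write $w := w_t^{-S_{\le t}}$ and let $w^{(k)}$ denote the retraining iterate after the $k$-th retained task $\tau_k$, so that $w^{(0)}=w_0$ and $w^{(N_t)}=w$. The plan is to split $\mathcal{E}_{\rm L}$ into a uniform-convergence part and an optimization part. With the empirical loss $\hat F_\tau(w)=\frac1{n_\tau}\sum_{z\in D_\tau}\ell(w,z)$, I write
\[
\frac1{N_t}\sum_k F_{\tau_k}(w)-\min_{v}\frac1{N_t}\sum_k F_{\tau_k}(v)\le \frac1{N_t}\sum_k\bigl[\hat F_{\tau_k}(w)-\hat F_{\tau_k}(\hat w_{\tau_k})\bigr]+\text{(deviation terms)}.
\]
The deviation terms come from two sources. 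One is $F_{\tau_k}(w)-\hat F_{\tau_k}(w)$ for the data-dependent $w$, which needs a uniform bound over $\mathcal W$. The other is $\hat F_{\tau_k}(\hat w_{\tau_k})\le \hat F_{\tau_k}(v^*)$, followed by $\mathbb E[\hat F_{\tau_k}(v^*)]=F_{\tau_k}(v^*)$ for the fixed population minimizer $v^*$. For the uniform part I use an $\epsilon$-net of $\mathcal B(w_0,r)$ of size $(1+2r/\epsilon)^d$ together with $L$-Lipschitzness and Hoeffding with range $2B$, and then integrate the tail. Choosing $\epsilon=B/(L\sqrt{n})$ should give exactly the $4B\sqrt{2d\ln(2Lr\sqrt n/B+1)/n}$ term. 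The extra $2B/\sqrt{n}$ covers the discretization slack and the Hoeffding constant. Averaging over $k$ with weights $1/N_t$ yields $\Gamma$.

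The optimization part uses Lipschitzness: $\hat F_{\tau_k}(w)-\hat F_{\tau_k}(\hat w_{\tau_k})\le L\|w-\hat w_{\tau_k}\|$. I therefore need to bound $\|w^{(N_t)}-\hat w_{\tau_k}\|$, and I would do this through a one-step contraction lemma for \eqref{A1}. The first-order condition gives $\nabla\hat F_{\tau_j}(w^{(j)})+\lambda(w^{(j)}-w^{(j-1)})=0$, while $\nabla\hat F_{\tau_j}(\hat w_{\tau_j})=0$ for an interior minimizer. Integrating the Hessian along the segment, which is valid almost everywhere by Assumption~\ref{Assum1}, gives $(H+\lambda I)(w^{(j)}-\hat w_{\tau_j})=\lambda(w^{(j-1)}-\hat w_{\tau_j})$ with $\mu I\preceq H\preceq MI$. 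Since $\lambda>-\mu$, the matrix $H+\lambda I$ is invertible and
\[
\|w^{(j)}-\hat w_{\tau_j}\|\le\rho\|w^{(j-1)}-\hat w_{\tau_j}\|.
\]
In the same way, $w^{(j)}-w^{(j-1)}=-(H+\lambda I)^{-1}H(w^{(j-1)}-\hat w_{\tau_j})$, whose operator norm is at most $\kappa$. Hence each step moves the iterate by at most $2r\kappa$, because all points lie in $\mathcal B(w_0,r)$.

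For the last task I get $\|w-\hat w_{\tau_{N_t}}\|\le\rho^{N_t}\cdot r$ or a similar bound, from the contraction applied repeatedly with the diameter bound; this is where the $r\rho^{N_t}$ term comes from. For earlier tasks $k<N_t$, I use the triangle inequality
\[
\|w^{(N_t)}-\hat w_{\tau_k}\|\le\|w^{(k)}-\hat w_{\tau_k}\|+\sum_{j>k}\|w^{(j)}-w^{(j-1)}\|.
\]
Bounding each displacement in the sum by $2r\kappa\rho^{\,\cdot}$ and summing the geometric series should produce $2r\kappa(1-\rho^{N_t})/(1-\rho)$. Averaging over the $N_t-1$ non-final tasks then gives the factor $(N_t-1)/N_t$.

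The main obstacle is getting geometric decay in the drift terms rather than a bound linear in $N_t$. The displacement at step $j$ is controlled by $\|w^{(j-1)}-\hat w_{\tau_j}\|$, which is not obviously decaying. I would try to combine the $\kappa$ bound with the $\rho$-contraction to telescope, and if that fails I would accept the stated form as arising from a slightly loose bookkeeping of $\rho$-powers. A second worry is making the integrated-Hessian argument rigorous when the Hessian exists only almost everywhere. I would handle this by perturbing the segment so that it avoids the null set, using continuity of the gradient.
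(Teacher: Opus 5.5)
Your statistical part is sound and differs from the paper only in packaging. The paper uses population Lipschitzness plus an ERM excess-risk lemma (symmetrization, covering, Massart). You compare to the joint minimizer and use a net with Hoeffding. With $\epsilon=B/(L\sqrt n)$ this gives $B\sqrt{2d\ln(2Lr\sqrt n/B+1)/n}+2B/\sqrt n$ per task, which is below the corresponding term of $\Gamma$.

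The genuine gap is in the drift term, exactly where you flagged trouble, and two of your steps fail.

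\textbf{The last-task bound.} The claim $\|w^{(N_t)}-\hat w_{\tau_{N_t}}\|\le r\rho^{N_t}$ does not follow from ``repeated contraction''. The contraction at step $j$ is toward $\hat w_{\tau_j}$, which moves with $j$. The claim is in fact false. Take one-dimensional quadratic losses whose minimizers alternate between $\pm a$ with $a<r$. The iterate settles into an oscillation at distance about $2a\rho/(1+\rho)$ from the latest minimizer, while $r\rho^{N_t}\to 0$.

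\textbf{The displacement chain.} The same example shows that the displacements $\|w^{(j)}-w^{(j-1)}\|$ stay of constant size. So your triangle-inequality chain through the trajectory can only give bounds growing like $2r\kappa(N_t-k)$. No bookkeeping of $\rho$-powers rescues it, and ``accepting the stated form'' is not a proof.

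\textbf{Attribution of terms.} In the theorem, $r\rho^{N_t}$ is paid by every retained task, not only the last one. The factor $(N_t-1)/N_t$ comes from a vanishing diagonal term, not from averaging over the non-final tasks.

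\textbf{The missing idea.} Track the error relative to a \emph{fixed} anchor instead of bounding step sizes. Your Taylor identity is equivalent to $w^{(j)}=\tilde P_j w^{(j-1)}+(I-\tilde P_j)\hat w_{\tau_j}$ with $\tilde P_j=(\tilde H_j+\lambda I)^{-1}\lambda$. Hence, for any fixed $m$,
\[
w^{(j)}-\hat w_{\tau_m}=\tilde P_j\bigl(w^{(j-1)}-\hat w_{\tau_m}\bigr)+(I-\tilde P_j)\bigl(\hat w_{\tau_j}-\hat w_{\tau_m}\bigr).
\]
The injected term vanishes at $j=m$. Whatever is injected at step $j$ is afterwards multiplied by $\tilde P_{j+1},\dots,\tilde P_{N_t}$, which is where the geometric decay comes from.

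Unroll this from $w^{(0)}=w_0$. Equivalently, as in the paper's Lemma~\ref{lemmaAA}, write $w^{(N_t)}$ as a combination of $w_0,\hat w_{\tau_1},\dots,\hat w_{\tau_{N_t}}$ whose matrix weights sum to $I$. Then use $\|(\prod_{j>i}\tilde P_j)(I-\tilde P_i)\|\le\kappa\rho^{N_t-i}$, $\|\prod_{j}\tilde P_j\|\le\rho^{N_t}$, $\|\hat w_{\tau_i}-\hat w_{\tau_m}\|\le 2r$ and $\|w_0-\hat w_{\tau_m}\|\le r$. This gives, for every $m$,
\[
\|w^{(N_t)}-\hat w_{\tau_m}\|\le\sum_{i\neq m}2r\kappa\rho^{N_t-i}+r\rho^{N_t}.
\]
Summing over $m$ yields $(N_t-1)\,2r\kappa\frac{1-\rho^{N_t}}{1-\rho}+N_t\, r\rho^{N_t}$. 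Dividing by $N_t$ and multiplying by $L$ gives exactly the drift part of $\mathcal{E}^{-S_{\le t}}(\lambda)$.
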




The proof of Theorem \ref{thm:CL_excess} is provided in Appendix \ref{APPA1}. The bound in Theorem~\ref{thm:CL_excess} consists of two interpretable components.
The first group of terms reflects the cumulative effect of parameter drift across tasks and are controlled by the contraction factor $\rho = \lambda/(\mu+\lambda)$ introduced by the $\ell_2$ regularization. The second term corresponds to statistical generalization error arising from finite samples in each task. 
%


This result extends existing generalization analyses for \eqref{A1} of linear models (e.g.,~\citet{lin2023theory}).
The regularization weight $\lambda$ that minimizes the upper bound $\mathcal{E}^{-S_{\le t}}(\lambda)$ depends on the diameter of the parameter space. In addition, one can derive a refined bound whose minimizer also depends on the degree of task dissimilarity.
Notably, the upper bound does not vanish even as the per-task sample sizes $n_{\tau_i}$ grow large, reflecting an irreducible error arising from model heterogeneity and the non-i.i.d.\ nature of the data stream. We analyze the optimal $\lambda$ for $\mathcal{E}_{\rm L}$ within a concrete example in Appendix~\ref{APPA2}. Generally, the $\lambda$ that minimizes the excess risk $\mathcal{E}_{\rm L}$ can differ significantly from the one that minimizes the unlearning error $\mathcal{E}_{\rm U}$.

\section{Gradient-based CLU}\label{S3}

In this section, we analyze the natural forgetting phenomenon inherent in the classical CL paradigm as a preliminary baseline that exposes the fundamental trade-off between forgetting and knowledge preservation in CL. The insights gained here
will later be leveraged to guide the design and optimization of more powerful unlearning algorithms in subsequent sections. 
%

In particular, information associated with long-past data is gradually diminished through sequential updates in CL process. This process naturally aligns to the mechanics of gradient-based certified unlearning methods \cite{neel2021descent}. However, since natural forgetting can result in the total erasure of historical knowledge, directly contradicting the core objective of CL, this gradient-based unlearning method must be carefully controlled. Hence, we adapt the model state $w_t = \mathcal{A}(\mathbf{D}_{1:t})$ as the internal representation, bypass the explicit unlearning step $\mathcal{R}_\mathcal{A}$ and directly calibrate noise to mask the discrepancy between $w_t$ and the target model $w_t^{-S_{\le t}}$. This prevents indiscriminately removing information from all historical data.

Specifically, in Stage~I of Alg.~\ref{A2}, the model is updated according to the CL update rule in \eqref{A1} as new tasks arrive. In Stage II, if the set $S_{\le t}$ contains any unlearning requests up to time $t$, we perturb the model parameters with Gaussian noise, using a standard deviation determined by the quantity $\gamma_t(\mathbf{S}_{1:t})$ as specified in Alg.~\ref{A2} line 8. We next show that $\gamma_t(\mathbf{S}_{1:t})$ provides a rigorous upper bound on the approximation error $\|w_t^{-S_{\le t}} - w_t\|$. This Gaussian mechanism directly ensures $(\varepsilon,\delta)$-certified unlearning as a consequence of standard results from \cite{dwork2014algorithmic,qiao2024hessian}. 




\begin{algorithm}[h]
\caption{ Natural forgetting CLU}
\label{A2}
\small
\begin{algorithmic}[1]
\STATE Initialize $S_{\leq 0}=\emptyset$, $U_0=\emptyset$, and $\mathbf{S}_{1:0}=\emptyset$.

\FOR{$t\gets1$ to $T$}

\item[]\textbf{Stage~I. CL on task $t$}

  \STATE Receive dataset $D_t$,
         $w_{t}\gets\ell_2\text{-CL}(w_{t-1},D_t)$.

\item[]\textbf{Stage~II. Model Publishing}

  \STATE Receive deletion request $S_t\subseteq [t]\setminus S_{\leq t-1}$.
  \STATE Update $S_{\le t}\gets S_{\le t-1}\cup S_t$ and $\mathbf{S}_{1:t}\gets(\mathbf{S}_{1:t-1},S_t)$.
  
  \IF{$S_{\le t}=\emptyset$}
     \STATE $U_t\gets U_{t-1}$ and $\epsilon_t\gets \mathbf{0}$.
  \ELSE
     \STATE $U_t\gets U_{t-1}\cup\{t:S_t\neq\emptyset\}$.
     \STATE Draw $\epsilon_t\!\sim\mathcal N\!\bigl(\mathbf{0},\sigma^2\! I\bigr),\;\sigma= \gamma_t(\mathbf{S}_{1:t}){\sqrt{2\ln({1.25}/{\delta})}}/{\varepsilon}$, where $\gamma_t(\mathbf{S}_{1:t})$ is given by (\ref{loss1.1}).
  \ENDIF

  \STATE \textbf{Output}: $\tilde{w}_t^{-\mathbf{S}_{1:t}}\gets w_t+\epsilon_t$
  
\ENDFOR
\end{algorithmic}
\end{algorithm}

Let $U_t = \{t_1, \dots, t_k\}$ be the set of time indices at which unlearning requests were received. For any $i \in \{1, \dots, k\}$, let \(n^i_{[a,b]}\) denote the number of tasks in \([a,b]\) that have been deleted up to the \(i\)-th deletion request. The formal performance guarantees for Alg.~\ref{A2} are established in the following theorem.
\begin{theorem}\label{T1.1}
Let $\lambda>\max\{0,-\mu\}$, and define  
\begin{align}
    \gamma_t(\mathbf{S}_{1:t}):=\frac{L}{\lambda}\sum_{i=1}^k \sum_{s\in S_{t_i}}
\rho^{\,t - s - n_{[s+1,t]}^{k}}.
\label{loss1.1}
\end{align}
We have $\|w_t^{-S_{\le t}} - w_t\|\leq \gamma_t(\mathbf{S}_{1:t})$, and the CLU process following Alg.~\ref{A2} satisfies $(\varepsilon,\delta)$-certified continual unlearning as defined in Definition~\ref{D1}. 
Moreover, the output model 
$\tilde{w}_t^{-\mathbf{S}_{1:t}}$ produced by Alg.~\ref{A2} achieves a post-unlearning excess risk $\mathcal{E}_{\mathrm{LU}}$ that is upper bounded by
\begin{align}
    L\cdot \gamma_t(\mathbf{S}_{1:t}) \cdot \bigg({ \sqrt{2d \ln(\frac{1.25}{\delta})}}/{\varepsilon}+1\bigg)+\mathcal{E}^{-S_{\le t}}(\lambda),\label{loss111}
\end{align}
where $\mathcal{E}^{-S_{\le t}}(\lambda)$ denotes the excess risk of the underlying CL algorithm on the remaining tasks, as defined in \eqref{CLG}.
\end{theorem}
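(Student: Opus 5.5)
The proof has three parts, in order: a deterministic sensitivity bound $\|w_t^{-S_{\le t}}-w_t\|\le\gamma_t(\mathbf{S}_{1:t})$, the privacy guarantee via the Gaussian mechanism, and the excess-risk bound through the decomposition $\mathcal{E}_{\rm LU}=\mathcal{E}_{\rm U}+\mathcal{E}_{\rm L}$.

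\textbf{Two properties of the update map.} Write the $\ell_2$-CL update as $w_t=\Phi_t(w_{t-1})$. By Assumption~\ref{Assum1}, the objective in \eqref{A1} is $(\mu+\lambda)$-strongly convex on $\mathcal W$ once $\lambda>\max\{0,-\mu\}$. First, $\Phi_t$ is a contraction. The first-order optimality condition reads $\nabla\hat F_t(\Phi_t(u))+\lambda(\Phi_t(u)-u)=0$. Subtract this condition at two inputs $u,v$, take the inner product with $\Phi_t(u)-\Phi_t(v)$, and use $\mu$-monotonicity of $\nabla\hat F_t$ (the Hessian integrated along segments, valid a.e.). This gives
\[
\|\Phi_t(u)-\Phi_t(v)\|\le \rho\|u-v\|, \qquad \rho=\lambda/(\mu+\lambda).
\]
Second, we need a one-step deviation bound for the case where a task is skipped. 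Retraining skips a deleted task $s$, so at step $s$ retraining keeps $u$ while the CL run moves to $\Phi_s(u)$. The optimality condition together with $\|\nabla\hat F_s\|\le L$ gives $\|\Phi_s(u)-u\|=\|\nabla\hat F_s(\Phi_s(u))\|/\lambda\le L/\lambda$. Assumption~\ref{Assum2} keeps all iterates in $\mathcal W$, so these local bounds apply along the trajectories.

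\textbf{Telescoping over the two trajectories.} Next, compare the full trajectory $w_\tau$ with the retrained trajectory $w_\tau^{-S_{\le t}}$, which applies $\Phi_\tau$ only for $\tau\notin S_{\le t}$ and the identity otherwise. I will set up a hybrid argument by induction on $\tau$, in which the discrepancy $e_\tau$ obeys:
\begin{itemize}
\item if $\tau$ is retained, both runs apply $\Phi_\tau$, so $e_\tau\le\rho\, e_{\tau-1}$;
\item if $\tau$ is deleted, $e_\tau\le e_{\tau-1}+L/\lambda$, by the triangle inequality with the deviation bound, since the identity is not a contraction.
\end{itemize}
Unrolling, each deleted task $s$ contributes $L/\lambda$ times $\rho$ raised to the number of retained tasks in $(s,t]$. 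That number is $t-s-n^{k}_{[s+1,t]}$. Grouping the deleted tasks by the request times $t_i$ at which they were requested gives exactly \eqref{loss1.1}.

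The main obstacle is this bookkeeping: during deleted steps the retraining run does not contract, so I must check that the deletions lying between $s$ and $t$ are exactly what the $n^k_{[s+1,t]}$ correction removes from the exponent. I also need to confirm that deletion requests arriving after time $s$ do not invalidate the comparison. They do not, because the comparison is made at time $t$ against the final set $S_{\le t}$, and Alg.~\ref{A2} never modifies the internal state.

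\textbf{Privacy and excess risk.} For certification, both published models add the same Gaussian noise to points within distance $\gamma_t(\mathbf{S}_{1:t})$ of each other. The standard Gaussian mechanism with $\sigma=\gamma_t\sqrt{2\ln(1.25/\delta)}/\varepsilon$ then yields both inequalities of Definition~\ref{D1} at every $t$. This holds for arbitrary datasets and request histories, since $\gamma_t$ is data-independent. When $S_{\le t}=\emptyset$ the two models coincide, so the guarantee is trivial.

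For the excess risk, Lipschitzness gives $F_\tau(\tilde w_t)-F_\tau(w_t^{-S_{\le t}})\le L(\|w_t-w_t^{-S_{\le t}}\|+\|\epsilon_t\|)$. Taking expectations with $\mathbb E\|\epsilon_t\|\le\sigma\sqrt d$ yields $\mathcal{E}_{\rm U}\le L\gamma_t\bigl(\sqrt{2d\ln(1.25/\delta)}/\varepsilon+1\bigr)$. Adding the bound $\mathcal{E}_{\rm L}\le\mathcal{E}^{-S_{\le t}}(\lambda)$ from Theorem~\ref{thm:CL_excess} gives \eqref{loss111}.
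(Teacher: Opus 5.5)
Your proposal is correct and follows essentially the same route as the paper: the paper first derives the exact identity $w_t^{-S_{\le t}}-w_t=\sum_{j}\prod_{i>s_j,\,i\notin S_{\le t}}(\tilde H_i+\lambda I)^{-1}\lambda\,(w_{s_j-1}-w_{s_j})$ with integrated Hessians and only then takes norms, whereas you propagate the norm recursion step by step via monotonicity; this gives the identical bound, and the Gaussian-mechanism and Lipschitz-plus-Jensen steps are the same as the paper's. One wording fix: for $\mu<0$ one has $\rho>1$, so $\Phi_t$ is only $\rho$-Lipschitz rather than a contraction, but your unrolling only needs $\rho>0$, so nothing breaks.
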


The proof of Theorem~\ref{T1.1} is provided in Appendix~\ref{APPB1}. In \eqref{loss1.1}, each unlearned task $s$ contributes a drift error term proportional to $\rho^{t - s - n_{[s+1,t]}^{k}} \frac{L}{\lambda}$, which quantifies the residual influence of task $s$ on the current model. In particular, when the loss function is locally convex, i.e., $\mu > 0$, the contribution of each unlearned task decays exponentially with respect to the effective time gap $t - s - n_{[s+1,t]}^{k}$. This result formalizes the natural forgetting effect of $\ell_2$-regularized CL, where the influence of historical tasks diminishes.



Note that while the publish model \eqref{public} produced by Alg.~\ref{A2} satisfies $(\varepsilon,\delta)$-certified unlearning, the algorithm maintains an internal model state $w_t$ to support subsequent task updates. This internal state may still retain information from deleted datasets. To address this issue, we provide an extension of Alg.~\ref{A2} in Appendix~\ref{APPB.2}, which ensures a stronger unlearning guarantee for the internal model state as well.

Alg.~\ref{A2} is a storage-free baseline that effectively unlearns early tasks, but its dependence on natural forgetting leads to inconsistent performance on recent tasks. To address this, the next section introduces a Hessian-based approach that enables accurate, targeted unlearning for any possible task while maintaining computational and storage efficiency.
%

\section{Hessian-based CLU and Gradient Enhancement}\label{S4}
In this section, we first adapt Hessian-based unlearning methods to the CL setting. To reduce the Hessian-based method's storage costs while maintaining post-unlearning performance, we further incorporate the gradient-based unlearning to optimize Hessian-based unlearning.

\subsection{Hessian-based unlearning algorithm for CLU} 

\vspace{-0.2cm}
A pivotal challenge for Hessian-based unlearning in the CL setting arises from the temporal order of unlearning requests. When unlearning requests are \emph{forward-synchronous} as in Fig.~\ref{fig:unlear_seq}, the target tasks are learned strictly after the most recent unlearning event, where previously unlearned data do not affect the current model state, and no additional correction is required for earlier unlearning operations. In contrast, 
\emph{asynchronous} unlearning requests in Fig.~\ref{fig:unlear_seq} break this clean recursive structure. If an unlearning request targets tasks learned prior to the last unlearning event, their influence has already been propagated into subsequent model updates and implicitly encoded in the Hessian approximations. 

\begin{figure}[H]
    \centering
    \includegraphics[width=0.7\linewidth]{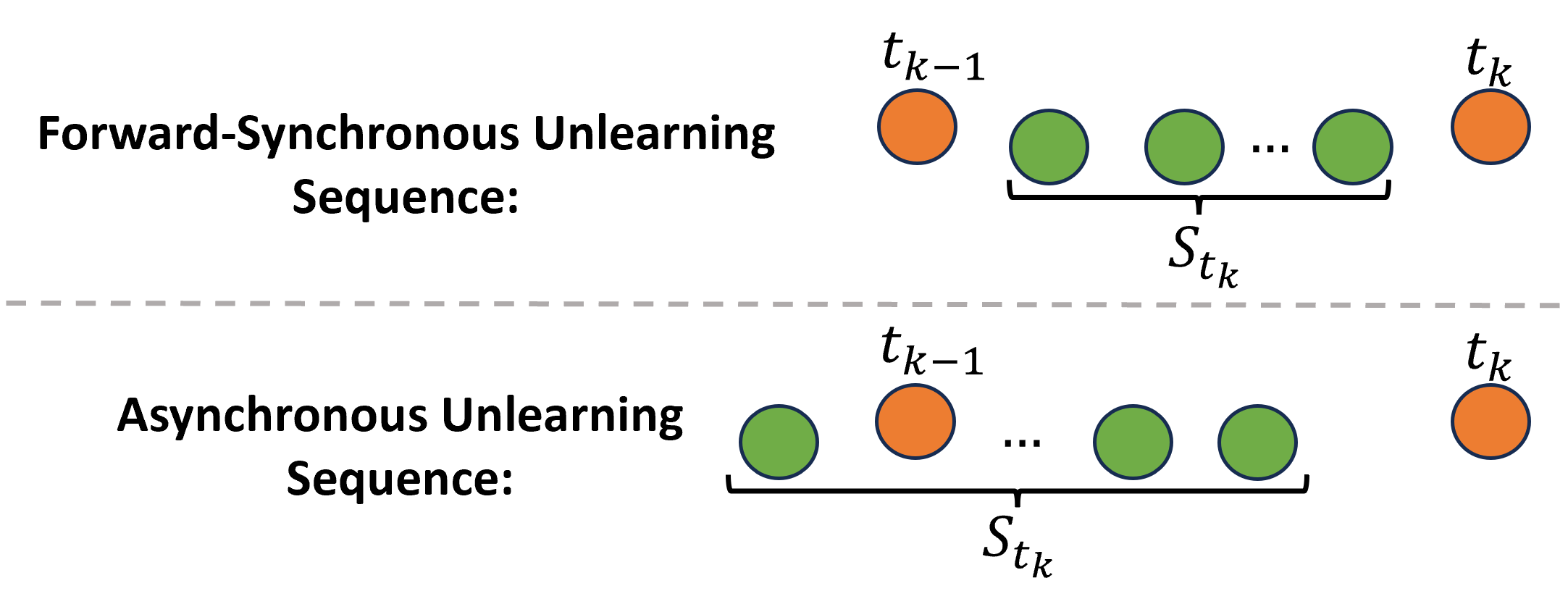}
    \caption{Let $t_{k-1}$ and $t_k$ denote two consecutive unlearning time points. An unlearning request sequence is said to be \textbf{forward-synchronous} if, at each unlearning time $t_k$, the set of requested tasks ${S}_{t_k}$ to unlearn contains only CL tasks learned strictly after the previous unlearning event $t_{k-1}$, i.e., $\forall\, j \in {S}_{t_k},\; j > t_{k-1}$. Otherwise, the unlearning sequence is termed \textbf{asynchronous}.}
    \label{fig:unlear_seq}
\end{figure}

We further elucidate this specific challenges of unlearning in CL by outlining the underlying analysis and intuition behind our single-step, Hessian-based update. Specifically, consider a time step $t_k$ at which $S_{t_k} \neq \emptyset$. Suppose two most unlearning requests occur at time $t_{k-1}$ and $t_k$. {The objective of the unlearning stage at time $t_k$} is to approximate the retrained model $w_{t_k}^{-S_{\le t_k}}$, using the current model state $w_{t_k}^{-\mathbf{S}_{1:t_k-1}}$. 
Let $\hat F_i(w) := \frac{1}{|D_i|} \sum_{z_i \in D_i} \ell(w, z_i)$ denote the empirical loss for task $i$. 
Under the update rule  \eqref{A1} { at time $t_k$}, the following first-order optimality conditions hold:
\begin{align*}
    \nabla \hat F_{t_k}(w_{t_k}^{-S_{\le {t_k}}})+\lambda (w_{t_k}^{-S_{\le {t_k}}}\!-w_{{t_k}-1}^{-S_{\le {t_k}}})=&\ 0,\\
    \nabla \hat F_{t_k}(w_{t_k}^{-\mathbf{S}_{1:{t_k}-1}})+\lambda (w_{t_k}^{-\mathbf{S}_{1:{t_k}-1}}\!-w_{{t_k}-1}^{-\mathbf{S}_{1:{t_k}-1}})=&\ 0.
\end{align*}
Taking the first-order Taylor expansion of $\nabla \hat F_{t_k}(w_{t_k}^{-S_{\le {t_k}}})$ at $w_{t_k}^{-\mathbf{S}_{1:{t_k}-1}}$ and combining the two equations above yield
\begin{align*}
    &w_{t_k}^{-S_{\le {t_k}}}-w_{t_k}^{-\mathbf{S}_{1:{t_k}-1}}\\
    \approx &\bigl(\nabla^2 \hat F_{t_k}(w_{t_k}^{-\mathbf{S}_{1:{t_k}-1}})+\lambda I\bigl)^{-1}\lambda \bigl(w_{{t_k}-1}^{-S_{\le {t_k}}}-w_{{t_k}-1}^{-\mathbf{S}_{1:{t_k}-1}}\bigl).
\end{align*}
However, this local approximation propagates in a complicated way under asynchronous unlearning. For a previous time index that has not been deleted by time \(t_k\), the corresponding retrained state is unchanged by removing that index from the deletion set. Recursively propagating this relation backward to an earlier time step $\tau$, we obtain a divergence between the model states $w_{\tau}^{-S_{\leq t_k} \setminus \{\tau, \dots, t_k\}}$ and $w_{\tau}^{-\mathbf{S}_{1:\tau}}$. 
In general, the task sets $S_{\leq t_k} \setminus \{\tau, \dots, t_k\}$ and $\mathbf{S}_{1:\tau}$ do not coincide, since unlearning requests may arrive arbitrarily in an asynchronous sequence. This misalignment introduces substantial heterogeneity into subsequent iterations, complicating the update process.

To address these challenges, we propose the following unlearning algorithm upon receiving an unlearning request:\vspace{-0.1em}
\begin{align}
    &\!{w}_{t}^{-\mathbf{S}_{1:t}}\!=\!\mathcal{R}_{\mathcal A}\!\left(\! w_t^{-\mathbf{S}_{1:t-1}}\!, \mathbf{D}_{1:t}, \mathbf{S}_{1:t}\! \right)\! =\! {w}_{t}^{-\mathbf{S}_{1:t-1}} \!\!+ \!\bar{\Delta}_t,\!\label{RA}
\end{align}
where $\bar{\Delta}_t$ is a Hessian-based correction term designed to approximate the retrained model in a single update:\begin{align}    \!\!\!\bar{\Delta}_t=&\sum_{s\in {{S}}_{ t}}\prod_{i=s+1,i\notin S_{\le t}}^t\!\!\!\!\!\bigl((\hat H_i +\lambda I)^{-1}\lambda I\bigl)\Delta_{s}\nonumber\\
    &+\bigl(\sum_{s\in {{S}}_{\le t-1}}\prod_{i=s+1,i\notin S_{\le t}}^t\!\!\!\!\!\bigl((\hat H_i +\lambda I)^{-1}\lambda I\bigl) \Delta_{s}\nonumber\\
    &-\sum_{\tau\in U_{t-1}}\prod_{i=\tau+1,i\notin S_{\le t}}^t\!\!\!\!\!\bigl((\hat H_i +\lambda I)^{-1}\lambda I\bigl)\bar{\Delta}_{\tau}\bigl).\label{bardelta}
\end{align}
Here $\hat H_t$ and $\Delta_t $ denote the stored Hessian information and the model update associated with task $t$, respectively, and are computed as
\begin{align}
              &\Delta_t \gets w_{t-1}^{-\mathbf{S}_{1:t-1}}-w_t^{-\mathbf{S}_{1:t-1}},\nonumber\\
      &\hat H_t \gets \sum_{z_i\in D_t}\hat{\nabla}^2\ell(w_t^{-\mathbf{S}_{1:t-1}},z_i)/{|D_t|},\label{model_H_update}
    \end{align}
where $\hat{\nabla}^2$ denotes an approximation of the exact Hessian. Intuitively, the first part in $\bar{\Delta}_t$ of \eqref{bardelta} isolates and removes the influence of the current unlearning requests $S_t$. The second part acts as a recursive correction that compensates for interactions between earlier unlearning requests $S_{\le t-1}$ and the historical correction terms $\{\bar{\Delta}_\tau\}_{\tau\in U_{t-1}}$ accumulated along the learning trajectory. 
Together, these components yield a second-order approximation of the retrained model for arbitrary unlearning orders.

Implementing the unlearning algorithm $\mathcal{R}_{\mathcal{A}}$ requires storing $\hat{H}_t$, $\Delta_t$, and $\mathbf{S}_{1:t}$ during Stage I of the Hessian-based CLU procedure. Subsequently, in Stage II, the model $w_t^{-\mathbf{S}_{1:t}}$ is updated via \eqref{RA} whenever $S_t \neq \emptyset$ before model publishing. The complete CLU procedure incorporating this unlearning step is summarized in Alg.~\ref{A3} of Appendix.

\textbf{Remark}. The algorithm $\mathcal{R}_{\mathcal{A}}$ in \eqref{RA} is designed to handle arbitrary unlearning requests. When the unlearning request is forward-synchronous, \eqref{bardelta} reduces to its first term, and the complete Hessian-based unlearning in Alg. \ref{A3} of the Appendix still works.


While our analysis is grounded in the theoretical properties of Hessian-based methods, we also account for practical considerations by incorporating numerical approximation $\hat{\nabla}^2$ to estimate the Hessian, avoiding the high computational cost of exact calculation in large-scale models. For instance, diagonal Hessian approximation reduces both computation and storage complexity from $O(d^3)$ and $O(d^2)$ to $O(d)$. A detailed discussion of the choice of approximation methods and its impact is deferred to Appendix~\ref{APPCapp}. Later we will further reduce the storage dependence on $t$.

\subsection{Post-unlearning excess risk of Hessian-based CLU}

Analogous to Theorem~\ref{T1.1}, the post-unlearning excess risk $\mathcal{E}_{\mathrm{LU}}$ of Alg.~\ref{A3} is governed by the approximation error $\|w_t^{-S_{\le t}}-w_t^{-\mathbf{S}_{1:t}}\|$. We begin by analyzing this approximation error induced by the Hessian-based unlearning in \eqref{RA}.

We introduce the following assumption on the accuracy of the Hessian approximation operator $\hat\nabla^2$ used in Alg. \ref{A3}, thereby formally incorporating the approximation into our analysis and bridges the gap between the theoretical guarantees and practical implementations.
\begin{assumption}\label{assum4}For any data instance $z \in \mathcal{Z}$ and parameter $w \in \mathcal{W}$, the approximation operator satisfies $\| \hat{\nabla}^2 \ell(w, z) - \nabla^2 \ell(w, z) \|_2 \leq \nu$ where $\|\cdot\|_2$ denotes the spectral norm and $\nu>0$ is a constant value.
\end{assumption}
The specific value of $\nu$ depends on the chosen approximation scheme. For example, sub-sampled Hessian methods satisfy this bound with high probability when the batch size is sufficiently large \cite{tropp2015introduction}.


Building on Assumption \ref{assum4}, Proposition \ref{T31} establishes a first-order upper bound on the approximation error. Furthermore, Proposition \ref{T32} derives a tighter second-order upper bound under the additional assumption that the Hessian is Lipschitz continuous with constant $L_3$. The detailed proofs for both propositions are deferred to Appendix \ref{APPC}.
We focus on nondegenerate cases where \(\mu \neq 0\) and \(\mu-\nu \neq 0\).

\begin{proposition}\label{T31}
Let $U_t = \{t_1, \dots, t_k\}$ be the set of time indices at which unlearning requests were received up to time $t$. Choose $\lambda>\max\{0,\nu-\mu\}$. Then, the approximation error $\|w_t^{-S_{\le t}}-w_t^{-\mathbf{S}_{1:t}}\|$ of the unlearning operator $\mathcal{R}_{\mathcal A}$ in \eqref{RA} is bounded by\begin{align}
\label{loss3}
\gamma_t(\mathbf{S}_{1:t}):=&
C_2 \rho^{t-t_k}\sum_{i=1}^{k}\sum_{s\in S_{t_i}}\beta^1_{i,s}+\nonumber\\
&C_2 \rho^{t-t_k}\sum_{i=1}^{k}\sum_{s\in S_{t_i}}\hat\rho^{t_k\!-s\!-n_{[s+1,t_k]}^k} \!\!\!\sum_{a=i+1}^{k} \beta^2_{i,s,a} ,
\end{align}where the constant $C_2 :=\frac{L(M-\mu+\nu)}{\nu \lambda}$, and 
\begin{align*}
\beta^1_{i,s}
:=&\
\hat\rho^{t_k-s-n^k_{[s+1,t_k]}}
\alpha^{
t_k-t_i-n^k_{[t_i+1,t_k]}}
\\[-1mm]
&\quad\times
\left[
1-
\alpha^{
t_i-s-n^k_{[s+1,t_i]}}
\right],\\
\beta^2_{i,s,a}
:=&\
 C_{k-i}^{k-a+1}(t_k,t_i,s)\alpha^{
t_k-t_a-n^k_{[t_a+1,t_k]}}
\\[-1mm]
&\quad\times
\left[
1-
\alpha^{
t_a-t_{a-1}-n^k_{[t_{a-1}+1,t_a]}}
\right].
\end{align*}with $\hat{\rho}:=\frac{\lambda}{\lambda+\mu-\nu}$ and $\alpha:={\rho}/{\hat\rho}$.

\end{proposition}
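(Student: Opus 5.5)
The proof tracks the error between the retrained trajectory and the maintained trajectory through a linearized recursion. It separates the error into a part due to the Hessian surrogate and a part due to exact nonlinear propagation, and it bounds the operators involved uniformly.

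\emph{Step 1: exact one-step propagation.} For two runs of \eqref{A1} on the same task $i$ from different previous states $u,v$, subtract the first-order optimality conditions and apply the integral mean-value theorem to $\nabla\hat F_i$. This gives
\begin{align*}
w_i(u)-w_i(v)=(\bar H_i+\lambda I)^{-1}\lambda(u-v),
\end{align*}
where $\bar H_i$ is the averaged Hessian along the segment, so that $\mu I\preceq \bar H_i\preceq MI$ by Assumption~\ref{Assum1}. Hence the exact transfer operator $P_i:=(\bar H_i+\lambda I)^{-1}\lambda$ has norm at most $\rho$. The surrogate operator $\hat P_i:=(\hat H_i+\lambda I)^{-1}\lambda$ uses $\hat H_i\succeq(\mu-\nu)I$ by Assumption~\ref{assum4}, so $\|\hat P_i\|\le\hat\rho$. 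Their difference obeys
\begin{align*}
\|P_i-\hat P_i\|\le \hat\rho\,\|\bar H_i-\hat H_i\|/(\mu+\lambda)\le \hat\rho\,(M-\mu+\nu)/(\mu+\lambda).
\end{align*}
A single task contributes $\|\Delta_s\|\le L/\lambda$, using the first-order condition and the Lipschitz constant $L$.

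\emph{Step 2: unrolled expressions.} Next, expand the retrained state $w_t^{-S_{\le t}}$ and the maintained state $w_t^{-\mathbf S_{1:t}}$ as telescoping sums over deleted tasks. Skipping task $s$ injects the perturbation $\Delta_s$, which is then transported by the exact operators $P_i$ over the retained indices. The algorithm instead transports $\Delta_s$ by the products $\prod\hat P_i$ in \eqref{bardelta}. I will prove by induction on $k$, the number of unlearning events, that \eqref{bardelta} reproduces this structure exactly with $P$ replaced by $\hat P$. In the asynchronous case, the subtraction of the earlier $\bar\Delta_\tau$ cancels the previously applied corrections, so that after $t_k$ the error is
\begin{align*}
\sum_s\Bigl(\prod P_i-\prod\hat P_i\Bigr)\Delta_s
\end{align*}
plus carried-over errors from earlier events. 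After $t_k$ no correction is applied and the carried error is multiplied by exact operators, which gives the factor $\rho^{t-t_k}$.

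\emph{Step 3: bounding product differences.} Bound each difference via
\begin{align*}
\Bigl\|\prod_{j=1}^m P_j-\prod_{j=1}^m\hat P_j\Bigr\|\le\sum_{j}\rho^{j-1}\hat\rho^{m-j}\|P_j-\hat P_j\|.
\end{align*}
This is a geometric sum in $\alpha=\rho/\hat\rho$, equal to $\hat\rho^m(1-\alpha^m)/(1-\alpha)$ times the per-step discrepancy. Since $1-\alpha=\nu/(\lambda+\mu)$ up to the factor $\hat\rho$, the combination with $L/\lambda$ yields $C_2=L(M-\mu+\nu)/(\nu\lambda)$. Splitting the segment $[s,t_k]$ at $t_i$, where the correction was made, produces $\beta^1_{i,s}$: the exact part over $(t_i,t_k]$ gives $\alpha^{t_k-t_i-\cdots}$ relative to $\hat\rho$, and the approximate discrepancy over $(s,t_i]$ gives the bracket. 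The effective exponents count only retained indices, which is the role of $n^k_{[\cdot]}$.

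\emph{Step 4: the recursive corrections (main obstacle).} The hard part is the second term. Under asynchronous requests, each later event $a>i$ re-corrects the contribution of $s$ over the sub-interval $(t_{a-1},t_a]$. This generates nested sums over chains of events, whose combinatorial count I expect to be exactly the coefficient $C^{k-a+1}_{k-i}(t_k,t_i,s)$. I would set up a recursion for the error $e_k$ in terms of $e_{k-1},\dots,e_i$, then solve it by induction, verifying that each interval contributes a bracket $[1-\alpha^{\cdot}]$ and an exact-propagation factor $\alpha^{t_k-t_a-\cdots}$. Matching the multiplicities to $C^{k-a+1}_{k-i}$ is where I expect the bookkeeping to be delicate.
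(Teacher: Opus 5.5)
Your Steps 1--3 are sound and match the paper. Step 1 is Lemma~\ref{lemmaaS} together with the per-step estimate behind Lemma~\ref{inter2}. The identity in Step 2 is Lemma~\ref{lC2}. And $\beta^1_{i,s}=\rho^{m_1}(\hat\rho^{m_2}-\rho^{m_2})$, where $m_1,m_2$ count indices retained under $S_{\le t_k}$ in $(t_i,t_k]$ and $(s,t_i]$, is indeed exact propagation after $t_i$ times the $\tilde P$-versus-$\hat P$ discrepancy before it. For $i<k$, though, that comparison is against $\hat P$-products over the retained set of $S_{\le t_k}$, not the products actually used when $\bar\Delta_{t_i}$ was formed.

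That discrepancy is where the gap lies. Step 4 carries the entire asynchronous content of the proposition, and you both leave it open and anticipate the wrong mechanism. $C^{x}_{k-i}(t_k,t_i,s)$ is not a multiplicity of chains of events. Each step of the recursion \eqref{ccc} multiplies by $\frac{\mu+\lambda-\nu}{\mu-\nu}\hat\kappa\,\hat\rho^{\,n^k_{[s+1,t_i]}-n^{i+1}_{[s+1,t_i]}}\bigl(1-\hat\rho^{\,n^{i+1}_{[s+1,t_i]}-n^{i}_{[s+1,t_i]}}\bigr)$, and nothing in your toolkit produces $\hat\kappa$. Its source is an index-set mismatch. $\bar\Delta_{t_i}$ uses $\hat P$-products over $(s,t_i]$ that skip only $S_{\le t_i}$, while at $t_k$ the products skip $S_{\le t_k}$. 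So when a later request deletes tasks inside $(s,t_i]$, the two products differ by removed factors. The paper handles this with Lemma~\ref{inter}: replace each extra factor $\hat P_r$ by $I$ one at a time, use $\|I-\hat P_j\|=\|(\hat H_j+\lambda I)^{-1}\hat H_j\|\le\hat\kappa$, and sum a geometric series in $\hat\rho$. This estimate is unrelated to $\|P_i-\hat P_i\|$ and cannot come from your Step 3 telescoping. It is also what makes the second term vanish for forward-synchronous sequences, where the exponent $n^{i+1}_{[s+1,t_i]}-n^{i}_{[s+1,t_i]}$ is $0$.

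The bookkeeping also needs a different object than a recursion for $e_k$ in terms of $e_{k-1},\dots,e_i$. The earlier corrections $\bar\Delta_\tau$ in Lemma~\ref{lC2} are not errors; they are linear combinations of the $\Delta_s$. You must instead track, for $s\in S_{t_{k-j}}$, the matrix coefficient $D_j(t_k,s)$ of $\Delta_s$.
\begin{itemize}
\item By Lemma~\ref{lCC}, $D_{j+1}(t_k,s)=D_j(t_k,s)-D_j(t_k,t_{k-j-1})\prod_{a=s+1,\,a\notin S_{\le t_{k-j-1}}}^{t_{k-j-1}}\hat P_a$.
\item Lemma~\ref{lemmaD9} then adds and subtracts products over the intermediate retained sets $S_{\le t_{k-j+i}}$. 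This splits $D_{j+1}$ into a term of the same algebraic form as $D_j$ with anchor moved from $t_{k-j}$ to $t_{k-j-1}$, handled by induction on $j$. The remainder is $D_j(t_k,t_{k-j-1})$ times a set-mismatch difference, handled by Lemma~\ref{inter}.
\item Merging brackets via $1-\alpha^{m+m'}=(1-\alpha^{m})+\alpha^{m}(1-\alpha^{m'})$ gives exactly \eqref{ccc}.
\end{itemize}
The coefficient recursion and the index-set mismatch estimate are the two ingredients your proposal is missing.
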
\vspace{-0.5em}
Here, the coefficient $C_{k-i}^{k-a+1}(t_k, t_i, s)$ quantifies the temporal disorder of the unlearning sequence. More asynchronous unlearning requests lead to larger values of ${C}(\cdot)$, whereas all $ C(\cdot)$ terms vanish in strictly forward-synchronous unlearning sequences. The explicit forms of $ C(\cdot)$ are provided in Proposition~\ref{lll3} of Appendix~\ref{APPC}, with the special case $\nu = 0$ treated separately in Corollary~\ref{Clll3}.

As established in Proposition~\ref{T31}, each unlearning request for a task $s$ processed at time $t_i$ introduces an unavoidable error term $\beta^1_{i,s}$, while the second term in \eqref{loss3} captures the additional approximation error induced by temporal disorder. This term depends on deviations from a forward-synchronous unlearning sequence (Fig.~\ref{fig:unlear_seq}), and its magnitude is governed by the coefficients $C(\cdot)$. Consequently, a forward-synchronous unlearning can help the unlearning process.
\begin{proposition}\label{T32}
    Suppose the loss function $\ell(w,z)$ is Hessian-Lipschitz with constant $L_3$. Choose $\lambda>\max\{\nu-\mu,0\}$ in Alg.~\ref{A3}.
    Then the approximation error $\|w_t^{-S_{\le t}}-w_t^{-\mathbf{S}_{1:t}}\|$ of the unlearning algorithm $\mathcal{R}_{\mathcal{A}}$ with \eqref{bardelta} admits the following second-order upper bound $\gamma_t(\mathbf{S}_{1:t})$:
  \begin{align}
\gamma_t(\mathbf{S}_{1:t}):=
        &C_3\!\!\!\!\!\sum_{\substack{\tau=\min S_{\le t}+1 \\ \tau \notin S_{\le t}}}^{t_k}\!\!\!\!\! \beta^1_\tau 
        +\sum_{i=1}^{k-1} \beta^2_{t_i} +\frac{L}{\lambda}\sum_{s\in S_{\le t}}\beta^3_s,\label{loss33}
    \end{align}where $C_3=\frac{L_3}{2(\mu+\lambda)}$, and
    \begin{align*}
        \beta^1_\tau = &\ \rho^{t-\tau-n^k_{[\tau+1,t_k]}}\|w_\tau^{-S_{\le t}\setminus\{\tau+1,\dots,t\}}-w_\tau^{-\mathbf{S}_{1:\tau-1}}\|^2, \\
        \beta^2_{t_i} = &\ \rho^{t-t_k}(\hat{\rho}^{t_k-t_i-n^k_{[t_i+1,t_k]}}-{\rho}^{t_k-t_i-n^k_{[t_i+1,t_k]}})\|\bar{\Delta}_{t_i}\|, \\
        \beta^3_{s} = &\ \rho^{t-t_k}(\hat{\rho}^{t_k-s-n^k_{[s+1,t_k]}}-{\rho}^{t_k-s-n^k_{[s+1,t_k]}}).
    \end{align*}
\end{proposition}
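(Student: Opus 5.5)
The plan is to track the error $e_\tau := w_\tau^{-S_{\le t}\setminus\{\tau+1,\dots,t\}} - w_\tau^{-\mathbf{S}_{1:\tau}}$ along the trajectory and derive a one-step recursion from the first-order optimality conditions of \eqref{A1}, exactly as in the heuristic derivation preceding \eqref{RA}. The only change is that the first-order Taylor remainder is now kept explicitly, via the Hessian-Lipschitz property. For a retained index $\tau\notin S_{\le t}$, subtracting the optimality conditions of the retrained and internal states gives
\begin{align*}
\bigl(H_\tau+\lambda I\bigr)\,e'_\tau = \lambda\, e_{\tau-1} - r_\tau ,\qquad \|r_\tau\|\le \tfrac{L_3}{2}\|e'_\tau\|^2 ,
\end{align*}
where $H_\tau$ is the exact Hessian of $\hat F_\tau$ at $w_\tau^{-\mathbf{S}_{1:\tau-1}}$ and $e'_\tau$ is the pre-unlearning difference. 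Here $e'_\tau = w_\tau^{-S_{\le t}\setminus\{\tau+1,\dots,t\}} - w_\tau^{-\mathbf{S}_{1:\tau-1}}$, which is the quantity that appears squared in $\beta^1_\tau$. Since $\lambda>-\mu$, $\|(H_\tau+\lambda I)^{-1}\lambda\|\le\rho$ and $\|(H_\tau+\lambda I)^{-1}\|\le 1/(\mu+\lambda)$. This yields the exact propagator $P_\tau := (H_\tau+\lambda I)^{-1}\lambda I$ plus a remainder of size at most $C_3\|e'_\tau\|^2$.

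Next I would compare this exact recursion with the surrogate propagator $\hat P_\tau := (\hat H_\tau+\lambda I)^{-1}\lambda I$ used in \eqref{bardelta}. By Assumption~\ref{assum4}, $\|\hat P_\tau\|\le\hat\rho$. I would unroll the true error from $\min S_{\le t}$ up to $t$: deleted indices are skipped, retained indices are multiplied by $P_\tau$, and at each unlearning time $t_i$ the correction $\bar\Delta_{t_i}$ is added. Because $\bar\Delta_t$ in \eqref{bardelta} is built precisely as the $\hat P$-propagated sum of the $\Delta_s$ minus the $\hat P$-propagated earlier corrections, the unrolled error splits into three parts.

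\begin{enumerate}
\item The accumulated Taylor remainders, each propagated by at most $\rho^{\#\text{retained steps}}$. The factor $\rho^{t-t_k}$ comes from the steps after $t_k$, where no correction is applied. This gives the $C_3\sum\beta^1_\tau$ term.
\item Differences of products $\prod P_i-\prod\hat P_i$ applied to $\bar\Delta_{t_i}$, for $i<k$.
\item The same product differences applied to $\Delta_s$, using $\|\Delta_s\|\le L/\lambda$. This bound follows from the optimality condition of \eqref{A1} and the $L$-Lipschitz property, $\lambda\|\Delta_s\|=\|\nabla\hat F_s\|\le L$.
\end{enumerate}

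For the product differences I would use the telescoping identity $\prod_{j=1}^m P_j-\prod_{j=1}^m \hat P_j=\sum_j (\prod_{>j}P)(P_j-\hat P_j)(\prod_{<j}\hat P)$. Each factor satisfies $\|P_j-\hat P_j\|\le \lambda\nu/((\mu+\lambda)(\mu+\lambda-\nu))=\hat\rho-\rho$. Summing the resulting geometric-type series gives $\sum_j \rho^{m-j}(\hat\rho-\rho)\hat\rho^{j-1}=\hat\rho^m-\rho^m$, which matches $\beta^2_{t_i}$ and $\beta^3_s$ with $m$ equal to the number of retained steps in the relevant window.

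The main obstacle is the bookkeeping under asynchronous requests. An earlier correction $\bar\Delta_{t_i}$ was propagated using the deletion set known at time $t_i$, but by time $t$ further indices in $(t_i,t_k]$ have been deleted. The subtraction term in \eqref{bardelta} must therefore be shown to cancel exactly the stale propagation, so that only the $\hat P$ versus $P$ discrepancies over the current retained window (counted by $t_k-t_i-n^k_{[t_i+1,t_k]}$) survive. I would handle this by induction on $k$. The hypothesis is that after the $k$-th correction the internal state equals the exact linearized retrained state, up to the three error families, with windows measured using the current deletion set. The inductive step would then check that inserting $S_{t_{k+1}}$ only removes factors from each product, consistent with how the exponents $n^k$ are defined.
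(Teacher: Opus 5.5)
Your proposal is correct and is essentially the paper's proof: the exact-Hessian propagator $P_\tau$ with the Hessian-Lipschitz remainder bounded by $C_3\|e'_\tau\|^2$, a single unrolling against the final deletion set (identity plus $\Delta_s$ at deleted indices, $-\bar\Delta_{t_i}$ at request times), substitution of \eqref{bardelta} for $\bar\Delta_{t_k}$, the telescoping bound $\hat\rho^m-\rho^m$, and the tail factor $\rho^{t-t_k}$ from Lemma~\ref{lemmaaS} (which uses the integrated Hessian, so no remainders appear after $t_k$). The induction on $k$ you plan for asynchronous requests is not needed: because $e_\tau$ is measured against $S_{\le t}$, each earlier $\bar\Delta_{t_i}$ already enters as a fixed vector multiplied by the true $P_j$ over the current retained window, and the subtraction term of $\bar\Delta_{t_k}$ uses $\hat P_j$ over that same window, so substituting only the last correction directly gives the family $\sum_{i<k}(\prod P_j-\prod \hat P_j)\bar\Delta_{t_i}$, with no stale propagation left to cancel.
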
\vspace{-0.5em}
As established by the preceding results, the primary advantage of the Hessian-based approach in Alg.~\ref{A3} over the natural forgetting mechanism in Alg.~\ref{A2} lies in its superior second-order approximation accuracy. In particular, Proposition~\ref{T32} shows that, in the locally convex case, the Hessian-based method admits a tighter second-order upper bound of the approximation error on its dominant term $\beta^1_i$. When $\beta^1_i$ is smaller than one, the Hessian-based update reduces the error at a quadratic rate, leading to a substantial improvement in unlearning accuracy compared to first-order methods. 


Analogous to Theorem~\ref{T1.1}, we show that Alg.~\ref{A3} guarantees the certified unlearning performance in Corollary \ref{theorem 54}.

While Alg.~\ref{A3} attains a lower post-unlearning excess risk than Alg.~\ref{A2}, it incurs additional storage overhead of order $O(t\hat d)$ by time $t$, due to the need to store approximate Hessian information of dimension $\hat d$. In contrast, Alg.~\ref{A2} requires zero storage. To mitigate this cost while retaining a small post-unlearning excess risk, we next integrate the natural forgetting of Alg.~\ref{A2} into the Hessian-based unlearning, yielding a more storage-efficient unlearning strategy.

\subsection{Forgetting enhanced Hessian-based algorithm}
Intuitively, by integrating the natural forgetting effect inherent in CL with Hessian-based techniques, we can leverage natural forgetting to handle earlier tasks while reserving Hessian-based methods for more recent unlearning requests. A key component of this hybrid approach is determining a temporal threshold that governs the transition between the two mechanisms. Motivated by Proposition \ref{T31}, which demonstrates that the disorder term $C(\cdot,\cdot,\cdot)$ vanishes when unlearning requests are well ordered, we propose the following modification to the unlearning algorithm $\mathcal{R}_{\mathcal{A}}$ in \eqref{RA}.

\textit{Define\vspace{-0.5em}
\[
S_t'=\{s\in S_t: s>\max((U_t\setminus\{t\})\cup\{0\})\}.
\] Update $w_t^{-\mathbf{S}_{1:t}}$:\begin{align}
        w_t^{-\mathbf{S}_{1:t}} \!=\!w_t^{-\mathbf{S}_{1:t-1}}\!\!+\!\! \sum_{s\in S_t'}\! \prod_{\substack{i=s+1,\\ i\notin S_{\le t}}}^t\!\!((\hat H_i +\lambda I)^{-1}\lambda I) \Delta_{s}.\label{modified_A3}
    \end{align}Then, discard the stored information of $\hat{H}_i$ and $\Delta_i$ on all tasks $i$ before $t$.}

Under this modification, the storage of Alg.~\ref{A3} reduces to $O\!\left(\max_{t_i,t_{i-1}\in U_t} (t_i-t_{i-1})\hat d\right)$, based on the maximum distance between any two consecutive unlearning times. 
\begin{proposition}\label{T3.1}
Let $U_t = \{t_1, \dots, t_k\}$ be the set of time indices at which unlearning requests were received. Choose $\lambda>\max\{0,\nu-\mu\}$. Then, the approximation error $\|w_t^{-S_{\le t}}-w_t^{-\mathbf{S}_{1:t}}\|$ of the unlearning operator $\mathcal{R}_{\mathcal A}$ in~\eqref{modified_A3} is bounded by\vspace{-0.5em}
\begin{align}
    &\gamma_t(\mathbf{S}_{1:t}):=\sum_{i=1}^{k }\sum_{s\in S_{t_i}'}
\bigg(\rho^{t-t_i-n^k_{[t_i+1,t]}}
\beta ^3_{i,s}C_2\nonumber\\
&+\frac{L(\mu+\lambda)}{\lambda\mu}
{\kappa}\rho^{t-s-n^k_{[s+1,t]}}(1-{\rho}^{n^k_{[s+1,t_i]}-n^i_{[s+1,t_i]}})\bigg)\nonumber\\
&+\sum_{i=1}^{k }\sum_{s\in S_{t_i}\setminus S_{t_i}'}
\rho^{t-s-n^k_{[s+1,t]}}\frac{L}{\lambda}.
\label{eeee}
\end{align}
where $\beta^3_{i,s}=
\hat\rho^{t_i-s-n^i_{[s+1,t_i]}}
-
\rho^{t_i-s-n^i_{[s+1,t_i]}}$, and $C_2$ is given in Proposition~\ref{T31}.
\end{proposition}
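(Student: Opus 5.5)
The plan is to decompose the approximation error $\|w_t^{-S_{\le t}}-w_t^{-\mathbf{S}_{1:t}}\|$ by the triangle inequality into contributions indexed by the deleted tasks, grouped by the request time $t_i$ at which each was deleted. Each such contribution is then propagated forward to time $t$ using the contraction of the $\ell_2$-CL map. The basic tool is a one-step perturbation lemma, already used in Theorem~\ref{T1.1}. If two runs of \eqref{A1} on the same task $D_\tau$ start from $u$ and $v$, the first-order optimality conditions together with $\mu I\preceq\nabla^2\ell$ give that their outputs differ by at most $\rho\|u-v\|$. If the task is skipped in one run but not the other, the outputs differ by at most $\|u-v\|+L/\lambda$. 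Iterating this along the retained indices gives the factors $\rho^{t-s-n_{[s+1,t]}^k}$, where the exponent counts only non-deleted steps between $s$ and $t$.

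The terms fall into three groups, which I would treat in order.

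\textbf{(i) Tasks $s\in S_{t_i}\setminus S'_{t_i}$.} These receive no Hessian correction in \eqref{modified_A3}; they are handled purely by natural forgetting. Exactly as in Theorem~\ref{T1.1}, each contributes its update size $L/\lambda$ propagated forward, that is $\frac{L}{\lambda}\rho^{t-s-n^k_{[s+1,t]}}$.

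\textbf{(ii) Tasks $s\in S'_{t_i}$: Hessian error at time $t_i$.} These satisfy $s>t_{i-1}$, so the request is forward-synchronous relative to the last event. Hence \eqref{modified_A3} coincides with the first term of \eqref{bardelta}. I would reuse the single-request case of Proposition~\ref{T31}, where all $C(\cdot)$ coefficients vanish. At time $t_i$ this gives an error of at most $C_2\beta^3_{i,s}$, with $\beta^3_{i,s}=\hat\rho^{m}-\rho^{m}$ and $m=t_i-s-n^i_{[s+1,t_i]}$. This is obtained by comparing the product of exact contractions $(H+\lambda I)^{-1}\lambda$ (norm $\le\rho$) with the approximate ones (norm $\le\hat\rho$). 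A telescoping sum of the per-factor differences, each bounded through $\nu$ and $M-\mu$, yields the constant $C_2$. That error is then pushed to time $t$ via the contraction, giving the factor $\rho^{t-t_i-n^k_{[t_i+1,t]}}$.

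\textbf{(iii) Tasks $s\in S'_{t_i}$: mismatch from later deletions.} The correction \eqref{modified_A3} was computed at time $t_i$ along the trajectory with deletions $S_{\le t_i}$. It was never revised when later requests deleted further tasks in $[s+1,t_i]$, because the stored $\hat H_i,\Delta_i$ were discarded. As a result, the influence of $s$ is propagated through $n^k_{[s+1,t_i]}-n^i_{[s+1,t_i]}$ extra steps that should now be skipped. I would bound this discrepancy as a difference of two propagated influence vectors. The first is the product over the retained set at time $t_i$. The second is the product over the retained set at time $t$, which has fewer factors. Their difference is bounded by the norm of the shorter product times the deviation of the remaining factors from the identity. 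The shorter product has norm at most $\rho^{t-s-n^k_{[s+1,t]}}$. Each of the remaining factors deviates from $I$ by at most $\kappa$ in norm. Summing the resulting geometric series gives $(1-\rho^{n^k-n^i})$. The influence magnitude of $s$ is at most $L/\lambda$. Using $\sum_j\rho^j\le(\mu+\lambda)/\mu$ produces the prefactor $\frac{L(\mu+\lambda)}{\lambda\mu}\kappa$.

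The main obstacle is step (iii). It requires carefully identifying which intermediate retrained states $w_\tau^{-S_{\le t}\setminus\{\tau+1,\dots\}}$ are compared, and showing that the interaction between the stale correction and later deletions collapses to a single geometric term. It must do so without the $C(\cdot)$ disorder coefficients, which are avoided precisely because older data are no longer corrected but left to natural forgetting. Finally, summing (i)–(iii) over $i$ and $s$ by the triangle inequality gives \eqref{eeee}.
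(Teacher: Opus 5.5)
Your three groups are exactly the three terms of the paper's decomposition. Your bounds for (ii) and (iii) are the paper's telescoping estimate (Lemma~\ref{inter2}) and its factor-by-factor replacement by the identity (Lemma~\ref{inter}).

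What the proposal does not supply is the identity that justifies splitting the error into per-task contributions in the first place, and the tool you name cannot supply it. The $\ell_2$-CL map is nonlinear, so separate contributions cannot be pushed through it independently. A norm recursion on the total error ($\|u-v\|\mapsto\rho\|u-v\|$ or $\|u-v\|+L/\lambda$) never exposes the cancellation between $\prod\tilde P\,\Delta_s$ and the stored correction $\prod\hat P\,\Delta_s$ that (ii) and (iii) rely on.

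The paper instead fixes the \emph{final} target from the outset and uses the mean-value form of the two optimality conditions. Set $A_\tau:=w_\tau^{-S_{\le t_k}\setminus\{\tau+1,\dots,t_k\}}-w_\tau^{-\mathbf S_{1:\tau-1}}$ and $\tilde P_\tau=(\tilde H_\tau+\lambda I)^{-1}\lambda$, where $\tilde H_\tau$ is the Hessian of $\hat F_\tau$ averaged over the segment joining these two states. Then $A_{\tau+1}=\tilde P_{\tau+1}(A_\tau-\bar\Delta_\tau)$ if task $\tau+1$ is retained, and $A_{\tau+1}=A_\tau-\bar\Delta_\tau+\Delta_{\tau+1}$ if it is deleted (with $\bar\Delta_\tau=\mathbf 0$ when $S_\tau=\emptyset$). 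Unrolling gives the exact linear identity
\[
w_{t_k}^{-S_{\le t_k}}-w_{t_k}^{-\mathbf S_{1:t_k}}
=\sum_{i=1}^{k}\sum_{s\in S_{t_i}}\Big(\prod_{a=s+1,\,a\notin S_{\le t_k}}^{t_k}\tilde P_a\Big)\Delta_s
-\sum_{i=1}^{k}\Big(\prod_{a=t_i+1,\,a\notin S_{\le t_k}}^{t_k}\tilde P_a\Big)\bar\Delta_{t_i}.
\]
The steps in $[t_k+1,t]$ then contribute only the factor $\rho^{t-t_k}$.

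Once this identity is written, the obstacle you flag in (iii) disappears. Substitute $\bar\Delta_{t_i}=\sum_{s\in S'_{t_i}}\prod_{j=s+1,\,j\notin S_{\le t_i}}^{t_i}\hat P_j\,\Delta_s$. The modified correction carries no recursive $\bar\Delta_\tau$ terms, so each $\Delta_s$ with $s\in S'_{t_i}$ appears exactly twice. Its coefficient is $\prod_{a=t_i+1,\,a\notin S_{\le t_k}}^{t_k}\tilde P_a$ times $\prod_{j\notin S_{\le t_k}}\tilde P_j-\prod_{j\notin S_{\le t_i}}\hat P_j$, with both products over $j\in[s+1,t_i]$. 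Adding and subtracting $\prod_{j\notin S_{\le t_i}}\tilde P_j$ splits this into your (iii) and (ii). A task $s\in S_{t_i}\setminus S'_{t_i}$ keeps only its first-sum term, which is your (i).

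Two smaller corrections:
\begin{itemize}
\item In (ii), the $\tilde P_j$ are those of this single identity, defined relative to the final deletion set. They are not the matrices of the single-request case of Proposition~\ref{T31}. The bound carries over only because the telescoping uses nothing beyond $\mu I\preceq\tilde H_j\preceq MI$ and $\hat H_j$ being within $\nu$ of an exact Hessian, which is also why the per-factor gap is $M-\mu+\nu$.
\item In (iii), the prefactor does not come from $\sum_j\rho^j\le(\mu+\lambda)/\mu$. That inequality would lose the factor $1-\rho^m$, and it is false for $\mu<0$, which is allowed here. The prefactor comes from the exact sum $\sum_{\ell=0}^{m-1}\rho^\ell=\frac{\mu+\lambda}{\mu}(1-\rho^m)$ with $m=n^k_{[s+1,t_i]}-n^i_{[s+1,t_i]}$, multiplied by $\kappa\rho^{t-s-n^k_{[s+1,t]}}$ and $\|\Delta_s\|\le L/\lambda$.
\end{itemize}
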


 The complete theoretical results and the proofs of the modified Alg.~\ref{A3} is provided in Appendix~\ref{APPD}. We also provide a stronger version of certified unlearning with the model's internal state via Alg. \ref{A5} (Appendix), where the performance guarantee is established in Corollary \ref{csT3.1}.

\section{Experimental Validation}
In this section, we simulate the CLU process on MNIST, CIFAR-10, and CIFAR-100. The main text focuses on the more challenging CIFAR-100 setting, while additional results, experimental details, and parameter-estimation procedures are provided in Appendix~\ref{App:experiment}. For CIFAR-100, we split the training set into $T=30$ non-i.i.d. tasks and use a pretrained ResNet-18 \cite{he2016deep} backbone with a trainable three-layer classification head. We compare natural forgetting in Alg.~\ref{A2}, Hessian-based unlearning in Alg.~\ref{A3} using Gauss--Newton and diagonal Hessian approximations, their forgetting-enhanced variants in~\eqref{modified_A3}, and full retraining.

\begin{table*}[t]
\centering
\caption{Runtime, memory, and final empirical approximation error $\| w_t^{-S_{\le t}}-w_t^{-\mathbf{S}_{1:t}} \|$ on CIFAR-100.}
\label{tab:cost-error}
\resizebox{\textwidth}{!}{
\begin{tabular}{l|cccccc}
\hline
Metric
& Retraining
& G--N Hessian
& Enhanced G--N
& Diag Hessian
& Enhanced diag
& Natural forgetting\\
\hline
Time (s) $\downarrow$
& 950.69 & 269.11 & 264.84 & 283.69 & 277.91 & 260.77 \\
Peak GPU Mem. (MB) $\downarrow$
& 932.10 & 5678.90 & 2232.40 & 1129.66 & 1113.52 & 974.30 \\
Empirical approximation error $\downarrow$
& 0 & 0.0087 & 0.0463 & 0.0094 & 0.0451 & 0.0835 \\
\hline
\end{tabular}
}
\end{table*}
\paragraph{Effect of unlearning order.}
We first compare two request sequences that delete the same set of CIFAR-100 tasks but in different orders: a forward-synchronous (Fwd-Sync) sequence and an asynchronous (Async) sequence as illustrated in Fig.~\ref{fig:unlear_seq}. The detailed unlearning sequences are listed in Table~\ref{tab:unlearning-tasks} of Appendix. Fig.~\ref{fig:SYN} shows that under both Gauss-Newton and diagonal Hessian approximations, the approximation error of two unlearning sequences coincide before time 15, when their deletion histories are identical. In the Gauss-Newton Hessian case, from time 17 onward, Fwd-Sync has already deleted more tasks than Async but nevertheless maintains a smaller approximation error. By the final task, the Async error is about twice the Fwd-Sync error. For the diagonal Hessian, the approximation error of Fwd-Sync can be no smaller at intermediate times such as 15 and 25 because it has processed more deletions by then. Once the two schedules have removed the same set of tasks, Fwd-Sync again gives the smaller error. This supports our Proposition~\ref{T31}: well-ordered deletion requests remove the temporal-disorder term in the bound, while asynchronous requests accumulate extra error.

\begin{figure}[H]
    \centering
    \includegraphics[width=0.4\textwidth]{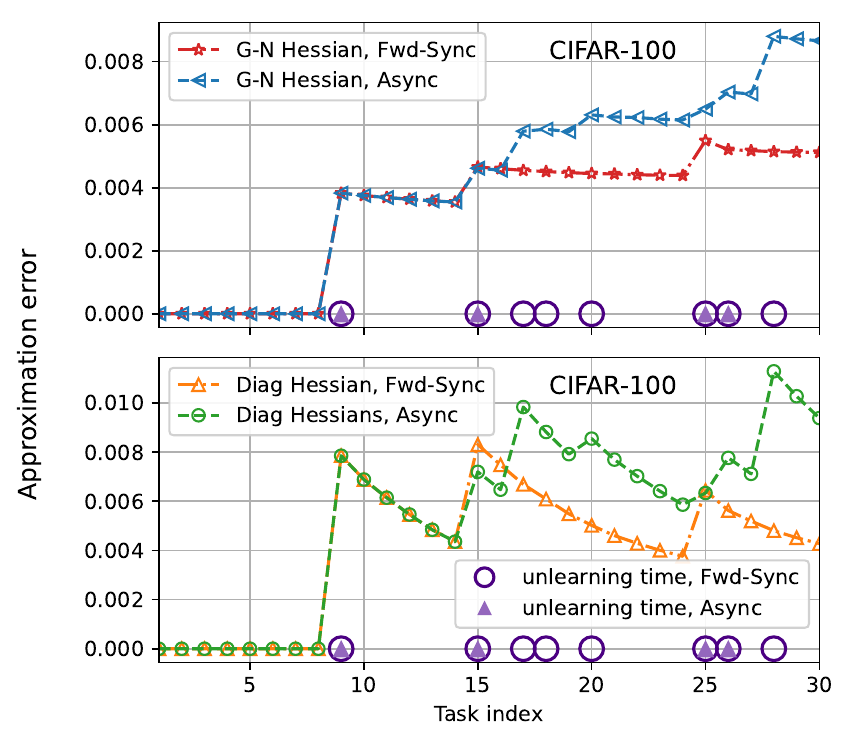}
    \vspace{-0.5em}
    \caption{Approximation error $\| w_t^{-S_{\le t}}-w_t^{-\mathbf{S}_{1:t}} \|$ during the CLU process on CIFAR-100 for Fwd-Sync and Async unlearning sequences. The upper and lower panels correspond to the Gauss-Newton Hessian and diagonal Hessian, respectively.}
    \label{fig:SYN}
\end{figure}


\paragraph{Efficiency and approximation error.}

We next compare runtime, peak GPU memory, and final approximation error under the Async CIFAR-100 sequence.  Fig.~\ref{unlearning_loss_100_3_C100} shows that natural forgetting in Alg.~\ref{A2} has the largest error throughout the process. The pure Hessian-based methods in Alg.~\ref{A3} are closest to retraining, while the forgetting-enhanced variants in~\eqref{modified_A3} trade a modest increase in error for lower memory usage. Table~\ref{tab:cost-error} quantifies the computation-storage trade-off. Gauss-Newton and diagonal Hessian-based unlearning both achieve approximation errors of around $0.009$. The Gauss-Newton approximation is faster, but incurs the largest peak memory usage because it stores Jacobian information with complexity $O(Tvd)$, where $v=100$ and $d=419684$. In contrast, the diagonal variant stores only $O(Td)$ Hessian information, reducing memory usage by nearly a factor of five, but is slower due to the computation of diagonal second-order information. The forgetting-enhanced variants discard older Hessian information and further reduce memory usage, at the cost of increasing the final error to about $0.045$. Full retraining has zero approximation error but is roughly three times slower than the unlearning methods, confirming that our Hessian-based CLU provides a practical middle ground between exact retraining and storage-free natural forgetting.

\begin{figure}[t]
    \centering
    \includegraphics[width=0.42\textwidth]{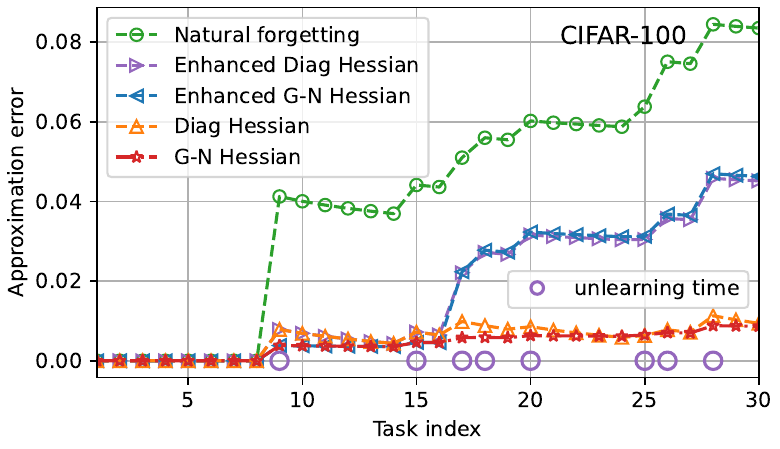}
    \caption{Approximation error across the CIFAR-100 CLU process under the Async schedule in Table~\ref{tab:unlearning-tasks}.}
    \label{unlearning_loss_100_3_C100}
\end{figure}

\paragraph{Certified noise and final utility.}
Finally, we instantiate the theoretical upper bounds $\gamma_T$ from Theorem~\ref{T1.1} and Proposition~\ref{T3.1}, and calibrate Gaussian noise accordingly to output the publish model $\tilde{w}_T^{-\mathbf{S}_{1:T}}$ that achieves certified unlearning. Table~\ref{tab:approx-error} confirms that the computed $\gamma_T$ values upper-bound the empirical approximation errors.  Fig.~\ref{C100_test_accuracy_bar} then reports the final test accuracy under $\tilde{w}_T^{-\mathbf{S}_{1:T}}$ with noise added according to $\gamma_T$ and empirical $\|w_t^{-S_{\le t}}-w_t^{-\mathbf{S}_{1:t}} \|$, where the latter indicates the utility limit if the bound were tight. It shows that Hessian-based methods retain accuracy close to retraining under both calibrations, whereas natural forgetting is much more sensitive to bound looseness.

\begin{table}[t]
\centering
\caption{Theoretical upper bound $\gamma_T$ of $\|w_t^{-S_{\le t}}-w_t^{-\mathbf{S}_{1:t}} \|$ and its empirical value on CIFAR-100.}
\label{tab:approx-error}
\small
\setlength{\tabcolsep}{4pt}\renewcommand{\arraystretch}{1.15}
\begin{tabular}{l|cccc}
\hline
Metric
& Retrain.
& G--N Hess.
& Diag Hess.
& Nat. Forget. \\
\hline
$\gamma_T$
& 0
& 0.0222
& 0.0227
& 0.0899 \\
Emp. error
& 0
& 0.0040
& 0.0037
& 0.0465 \\
\hline
\end{tabular}
\end{table}

\begin{figure}[H]
    \centering
    \includegraphics[width=0.8\columnwidth]{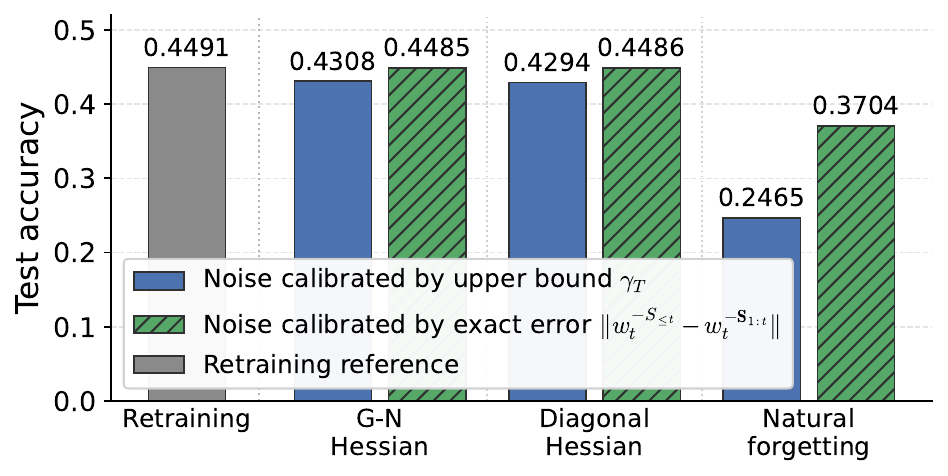}
    \caption{Final CIFAR-100 test accuracy of the published model $\tilde{w}_T^{-\mathbf{S}_{1:T}}$. Gaussian noise is calibrated using either the theoretical upper bound $\gamma_T$ from Theorem~\ref{T1.1} and Proposition~\ref{T3.1}, or the measured exact approximation error.}
    \label{C100_test_accuracy_bar}
\end{figure}


\section{Conclusion}
We establish a theoretical foundation for certified unlearning in regularization-based continual learning by formulating CLU through a post-unlearning excess risk objective, which decomposes into the CL excess risk and the unlearning loss. This decomposition reveals a fundamental tension between knowledge retention and targeted forgetting: the regularization that improves continual learning performance may not minimize the error required for certified unlearning. Within this framework, we adapt both gradient-based and Hessian-based certified unlearning methods to the continual setting. The former exploits natural forgetting with no additional storage, while the latter enables more accurate targeted unlearning at the cost of curvature storage. We further develop storage-efficient Hessian-based variants that balance computation, storage, and unlearning accuracy.

This work focuses on the fundamental \(\ell_2\)-regularized CL algorithm, which enables a clean theoretical characterization of certified unlearning in CL. An important direction for future work is to extend the analysis to more sophisticated CL algorithms, including memory-based methods and richer regularization-based approaches such as Elastic Weight Consolidation (EWC).

\section*{Acknowledgments}
This work was supported in part by the Guangdong Provincial Key Lab of Integrated Communication, Sensing and Computation for Ubiquitous Internet of Things (No. 2023B1212010007).

\section*{Impact Statement}
This paper presents work whose goal is to advance the field of machine learning. There are many potential societal consequences of our work, none of which we feel must be specifically highlighted here.

\bibliography{example_paper}
\bibliographystyle{icml2026}

\newpage
\appendix
\onecolumn




\section{Related works}\label{related}

In continual learning, an increasing number of recent works have begun to investigate the theoretical performance of forgetting and generalization loss induced by CL algorithms (\cite{evron2022catastrophic,lin2023theory,swartworth2023nearly,zhao2024statistical,deng2025unlocking}). Due to the inherent difficulty of the CL problem, most prior work has focused on simple regularization-based CL algorithms and linear models for the ease of deriving theoretical results. 


Current studies that examine CL and unlearning together are experimental, lacking theoretical guarantees (\cite{liu2022continual,chatterjee2024unified,cha2024learning,huang2025unified}). For example, \citet{chatterjee2024unified} uses a controlled knowledge distillation where a CL teacher preserves past knowledge while another unlearning teacher drives targeted forgetting, both guiding a student model to update. \citet{huang2025unified} proposes a gradient descent approach that achieves both unlearning and CL by designing a combined loss function, while assuming that the dataset to be unlearned is available again at the time of the unlearning request. These methods merely forget the designated tasks, whereas certified continual unlearning demands a strict approximation of the model that would result from perfect retraining. Furthermore, the efficacy of non-certified unlearning is probabilistic; recent studies reveal that it is still possible to recover supposedly forgotten information even after applying current non-certified techniques \cite{hu2025unlearning}. This highlights the need to develop and rigorously analyze unlearning algorithms for CL. 

\citet{basaran2025a} study certified unlearning without access to the original dataset by using a surrogate dataset. However, their work is limited to the static learning case, and their method cannot be adapted to continual learning, where the model evolves with more complex dynamics.
\section{Notation and Conventions}\label{app:notation}

Throughout the paper, we use the following conventions.

For any matrix sequence \(P_1,\dots,P_T\), products are ordered from
right to left:
\[
\prod_{n=i}^{j} P_n := P_jP_{j-1}\cdots P_i,
\qquad i\le j.
\]
When the index set is empty, the product is the identity matrix:
\[
\prod_{n=i}^{j} P_n := I,
\qquad i>j.
\]
Similarly, empty sums are defined as zero:
\[
\sum_{n=i}^{j} F_n := 0,
\qquad i>j.
\]

Although Alg.~\ref{A3} computes \(\bar\Delta_t\) only when
\(S_t\neq\emptyset\), we allow \(\bar\Delta_t\) to be referenced at any
time \(t\) by adopting the convention
\[
\bar\Delta_t=\mathbf{0}
\qquad\text{whenever } S_t=\emptyset.
\]

Whenever a displayed bound contains factors \(1/\mu\) or
\(1/(\mu-\nu)\), we assume \(\mu\neq0\) and \(\mu\neq\nu\), respectively.
The degenerate cases can be handled by keeping the corresponding finite
geometric sums without rewriting them in closed form.

For the retrained model \(w_a^{S_{\le b}}\), if \(a < b\), tasks deleted after time \(a\) do not affect the model at time \(a\), then
\[
w_a^{S_{\le b}} = w_a^{S_{\le b}\setminus\{a+1,\dots,b\}}.
\]

\section{Missing details from Section \ref{S22}}\label{APPA}

\subsection{Proof of Theorem \ref{thm:CL_excess}}\label{APPA1}
Without loss of generality, we prove the results on the sequence $\{1,\dots,t\}$, which can be generalized to any retraining sequence $\{\tau_1,\dots,\tau_{N_t}\} = \{1,\dots,t\}\setminus S_{\le t}$. Define $\hat{F}_t(w)=\sum_{z\in D_t}\ell(w,z)/n_t$ as the empirical risk. We prove Theorem \ref{thm:CL_excess} using Lemmas \ref{lemmaA} and \ref{lemmaAA}:
\begin{align}
    \mathbb E[\frac{1}{t}\sum_{\tau=1}^t F_\tau (w_t)-\min_w \frac{1}{t}\sum_{\tau=1}^tF_\tau(w)]
    \leq &\mathbb E[\frac{1}{t}\sum_{\tau=1}^t F_\tau (w_t)- \frac{1}{t}\sum_{\tau=1}^t\min_wF_\tau(w)]\nonumber\\
    =&\mathbb E[\frac{1}{t}\sum_{\tau=1}^t F_\tau (w_t)-\frac{1}{t}F_\tau(\hat w_\tau)+\frac{1}{t}F_\tau(\hat w_\tau) -\sum_{\tau=1}^t\min_w\frac{1}{t}F_\tau(w)]\nonumber\\
    \le &\mathbb E\bigg[\frac{1}{t}\sum_{\tau=1}^t L\|w_t-\hat w_\tau\|+\frac{4B}{t}\sqrt{\frac{2d\ln (\frac{2Lr\sqrt{n_\tau}}{B} + 1)}{n_\tau}} + \frac{2B}{t\sqrt{n_\tau}}\bigg],\label{eq:pppp1}
\end{align}
where the last line follows from the $L$-Lipschitz property of $F_t$ and Lemma \ref{lemmaA}. Substituting Lemma \ref{lemmaAA} and rearranging terms, we obtain

\[
\begin{aligned}
\sum_{\tau=1}^t \|w_t-\hat w_\tau\|\leq&\sum_{m=1}^{t}
\left(
\sum_{\substack{i=1\\ i\neq m}}^{t}
2r\kappa \rho^{t-i}
+ \rho^{t}r
\right)
\\
=&
\sum_{m=1}^{t}
\left(
\sum_{i=1}^{t} 2r\kappa \rho^{t-i}
\right)
+
t\rho^{t}r
-
\sum_{m=1}^{t} 2r\kappa \rho^{t-m}
\\
=&
2tr\kappa\frac{1-\rho^{t}}{1-\rho}
+
t\rho^{t}r
-
2r\kappa\frac{1-\rho^{t}}{1-\rho}
\\
=&
(t-1)2r\kappa\frac{1-\rho^{t}}{1-\rho}
+
t\rho^{t}r .
\end{aligned}
\]
Substituting this bound into \eqref{eq:pppp1} completes the proof.

\begin{lemma}\label{lemmaA}
    Let $w^*_t$ and $\hat w_t$ be the population-optimal model and empirical risk minimizer in $\mathcal{W}$ respectively on dataset $D_t$ with size $n_t = |D_t|$. Let $B$ be the upper bound of loss function:
    $$B=\sup_{w\in\mathcal W,z}|\ell(w,z)|.
$$
    Then we have:
    $$\mathbb{E}[F_t(\hat w_t)] - \min_wF_t(w) \le 4B\sqrt{\frac{2d\ln \bigl(\frac{2Lr\sqrt{n_t}}{B} + 1\bigl)}{n_t}} + \frac{2B}{\sqrt{n_t}}.$$
\end{lemma}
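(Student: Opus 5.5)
The plan is the standard uniform-convergence argument over a finite $\epsilon$-net of the parameter ball $\mathcal W=\mathcal B(w_0,r)\subset\mathbb R^d$. Write $\hat F_t(w)=\frac1{n_t}\sum_{z\in D_t}\ell(w,z)$ and let $w^*_t\in\arg\min_{w\in\mathcal W}F_t(w)$. Decompose
\[
F_t(\hat w_t)-F_t(w^*_t)=\bigl[F_t(\hat w_t)-\hat F_t(\hat w_t)\bigr]+\bigl[\hat F_t(\hat w_t)-\hat F_t(w^*_t)\bigr]+\bigl[\hat F_t(w^*_t)-F_t(w^*_t)\bigr].
\]
The middle bracket is $\le 0$ by optimality of $\hat w_t$ over $\mathcal W$ (Assumption~\ref{Assum2}). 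The last bracket has mean zero, since $w^*_t$ is deterministic. In any case it is a mean of $n_t$ i.i.d.\ bounded variables in $[-B,B]$, so its expectation is at most $2B/\sqrt{n_t}$ by Cauchy--Schwarz / Hoeffding. This accounts for the additive $2B/\sqrt{n_t}$ term; I would keep this slack rather than use the exact zero, to match the stated form.

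The first bracket needs $\mathbb E\sup_{w\in\mathcal W}|F_t(w)-\hat F_t(w)|$. Fix $\epsilon>0$ and take an $\epsilon$-net $\mathcal N_\epsilon$ of $\mathcal B(w_0,r)$ with $|\mathcal N_\epsilon|\le(2r/\epsilon+1)^d$, using the volumetric bound $(1+2r/\epsilon)^d$. By $L$-Lipschitzness of $\ell(\cdot,z)$ (Assumption~\ref{Assum1}), both $F_t$ and $\hat F_t$ are $L$-Lipschitz. Hence
\[
\sup_{w\in\mathcal W}|F_t-\hat F_t|\le \max_{v\in\mathcal N_\epsilon}|F_t(v)-\hat F_t(v)|+2L\epsilon.
\]
For each fixed $v$, Hoeffding with range $2B$ gives a sub-Gaussian tail with variance proxy $B^2/n_t$. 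The maximal inequality over the finite net then yields
\[
\mathbb E\max_{v\in\mathcal N_\epsilon}|F_t(v)-\hat F_t(v)|\le B\sqrt{2\ln(2|\mathcal N_\epsilon|)/n_t}.
\]

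Choose $\epsilon=B/(L\sqrt{n_t})$, so that $2r/\epsilon=2Lr\sqrt{n_t}/B$, which produces the logarithm $\ln(2Lr\sqrt{n_t}/B+1)$ in the statement. The discretization error is then $2L\epsilon=2B/\sqrt{n_t}$, which can be absorbed. Combining, the first bracket is at most
\[
B\sqrt{\frac{2d\ln(2Lr\sqrt{n_t}/B+1)+2\ln 2}{n_t}}+\frac{2B}{\sqrt{n_t}}.
\]

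The main obstacle is bookkeeping the constants so that everything fits under $4B\sqrt{2d\ln(\cdot)/n_t}+2B/\sqrt{n_t}$. This requires absorbing the $\ln 2$ term, the $2L\epsilon$ term and the last-bracket term into the generous factor $4$. That works whenever $d\ln(2Lr\sqrt{n_t}/B+1)$ is bounded below by a constant. In the degenerate regime where the logarithm is tiny, meaning $Lr\sqrt{n_t}\ll B$, I would argue directly instead. There, $F_t(\hat w_t)-F_t(w_t^*)\le L\cdot 2r$ since both points lie in $\mathcal W$, and $2Lr\le \frac{B}{\sqrt{n_t}}\cdot\frac{2Lr\sqrt{n_t}}{B}$ is controlled by the $\ln(1+x)\ge x/(1+x)$ inequality. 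If this case split proves awkward, a fallback is to double the Hoeffding constant (using range $2B$, so variance proxy $4B^2/n_t$). That supplies the factor $4$ directly and leaves enough room to absorb the $\ln 2$ term.
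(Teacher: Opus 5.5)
Your argument is correct, but it takes a different route from the paper. The paper symmetrizes, bounding $\mathbb E\sup_w(F_t-\hat F_t)$ by $2\,\mathbb E[R(\ell\circ\mathcal W\circ D_t)]$. It then covers the set of loss vectors in $\mathbb R^{n_t}$ at scale $\zeta$, applies Massart's lemma to that cover, and transfers the covering number to the parameter ball via Lipschitzness. You stay in parameter space throughout: an $\epsilon$-net of $\mathcal B(w_0,r)$, Hoeffding, and the finite sub-Gaussian maximal inequality.

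Both arguments use the same covering estimate at the same scale: your $\epsilon=B/(L\sqrt{n_t})$ is exactly the paper's $\zeta/(L\sqrt{n_t})$ with $\zeta=B$. In both, the additive $2B/\sqrt{n_t}$ is the discretization error. It does not come from the last bracket, whose expectation is exactly zero. Your route is more elementary and gives a sharper constant. The paper's factor $4$ is one factor $2$ from symmetrization times one factor $2$ from Massart's centering bound $\|a'-\bar a'\|_2\le 2\sqrt{n_t}B$. The Rademacher route buys modularity: it would extend to classes where a parameter-space net is not the natural object.

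You do not need the case split or the fallback. The first bracket only requires the one-sided quantity $\sup_w(F_t-\hat F_t)$. The maximal inequality over $|\mathcal N_\epsilon|$ variables then gives $B\sqrt{2d\ln(1+x)/n_t}$ with $x:=2Lr\sqrt{n_t}/B$, with no $\ln 2$. Taking the last bracket as exactly zero, you obtain the statement with constant $1$ in place of $4$.

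Your case split does go through if you keep it. For $x\le 2$, use $F_t(\hat w_t)-F_t(w_t^*)\le 2Lr=Bx/\sqrt{n_t}\le 2B/\sqrt{n_t}$. For $x>2$, $\ln(1+x)>1$ leaves ample room for absorption.

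The fallback, however, is off. Hoeffding with range $2B$ gives variance proxy $B^2/n_t$, not $4B^2/n_t$. Inflating the proxy only loosens your own bound. It cannot absorb a $\ln 2$ term that stays fixed while $\ln(1+x)\to 0$.
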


\begin{proof}
Let $w_t^* \in \arg\min_w F_t(w)$. By the definition of $\hat w_t$,
\begin{align*}
F_t(\hat w_t)-F_t(w_t^*)
&=F_t(\hat w_t)-\hat F_t(\hat w_t)
+\hat F_t(\hat w_t)-\hat F_t(w_t^*)
+\hat F_t(w_t^*)-F_t(w_t^*)\\
&\le F_t(\hat w_t)-\hat F_t(\hat w_t)
+\hat F_t(w_t^*)-F_t(w_t^*).
\end{align*}
Taking expectation over $D_t$, the second term vanishes since $w_t^*$ is fixed. Hence
\begin{align}
\mathbb E\!\left[F_t(\hat w_t)-F_t(w_t^*)\right]
&\le
\mathbb E\left[
\sup_{w\in\mathcal W}\bigl(F_t(w)-\hat F_t(w)\bigr)
\right]\le
2\mathbb E_{D_t\sim \mathcal{D}_t^{n_t}}\left[R(\ell\circ\mathcal W\circ D_t)\right],\label{eq1000}
\end{align}
where the last inequality follows from Lemma 26.2 in \cite{Shalev-Shwartz_Ben-David_2014}, and $R(l\circ\mathcal W\circ D_t)$ is the Rademacher complexity of $\ell\circ\mathcal W=\{z\rightarrow \ell(w,z):w\in \mathcal{W}\}$. 
    
    We next upper bound $\mathbb E_{D_t\sim \mathcal{D}_t^{n_t}}\left[R(\ell\circ\mathcal W\circ D_t)\right]$. Consider the set of loss vectors $A$ conditioned on the fixed dataset $D_t$:$$A = \left\{ a \in \mathbb{R}^{n_t} : a = (\ell(w, z_1), \dots, \ell(w, z_{n_t})), \, w \in \mathcal{W} \right\}.$$
    
    Since any loss function is upper bounded by $B$, the $\ell_2$-norm of any element $a \in A$ is bounded by $\sqrt{n_t}B$.
    
    Let $A'$ be a minimal $\zeta$-covering set of $A$ with respect to the $\ell_2$-norm, such that $|A'| = \mathcal{N}(\zeta, A)$. We ensure that every element $a' \in A'$ satisfies the same norm bound, i.e., $\|a'\|_2 \leq \sqrt{n_t}B$. This is guaranteed to exist because projecting any point in a covering set onto the ball $\mathcal{B}(0, \sqrt{n_t}B)$ reduces the distance to the elements of $A$ (which lie within the ball), thereby preserving the $\zeta$-covering property. Furthermore, since $A$ is bounded, the covering number $\mathcal{N}(\zeta, A)$ is finite.
    
    Define the projection mapping $\pi: A \to A'$ such that $\pi(a) \in \arg\min_{a' \in A'} \|a - a'\|_2$. By definition, we have $\|a - \pi(a)\|_2 \leq \zeta$ for all $a \in A$. Then, by the definition of Rademacher complexity: 
    \begin{align*}
R(\ell\circ\mathcal W\circ D_t)
&=
\frac{1}{n_t}\mathbb{E}_{\boldsymbol{\sigma}}
\left[\sup_{w\in\mathcal W}\sum_{i=1}^{n_t}\sigma_i\ell(w,z_i)\right]\\
&=
\frac{1}{n_t}\mathbb{E}_{\boldsymbol{\sigma}}
\left[\sup_{a\in A}\langle a,\boldsymbol{\sigma}\rangle\right]\\
&=
\frac{1}{n_t}\mathbb{E}_{\boldsymbol{\sigma}}
\left[\sup_{a\in A}
\left(
\langle \pi(a),\boldsymbol{\sigma}\rangle
+
\langle a-\pi(a),\boldsymbol{\sigma}\rangle
\right)\right]\\
&\le
\frac{1}{n_t}\mathbb{E}_{\boldsymbol{\sigma}}
\left[\sup_{a'\in A'}\langle a',\boldsymbol{\sigma}\rangle\right]
+
\frac{1}{n_t}\mathbb{E}_{\boldsymbol{\sigma}}
\left[\sup_{a\in A}\langle a-\pi(a),\boldsymbol{\sigma}\rangle\right]\\
&\le
R(A'\circ D_t)+\frac{\zeta}{\sqrt{n_t}}.
\end{align*}

By Massart's Lemma (Lemma 26.8 in \cite{Shalev-Shwartz_Ben-David_2014}), the Rademacher complexity of the finite set $A'$ is bounded by:
    \begin{align*}
        R(A'\circ D_t) \le \frac{\max_{a' \in A'} \|a'-\bar a'\|_2 \sqrt{2\ln|A'|}}{n_t},
    \end{align*}
   where $\bar a'=\sum_{a\in A'}\frac{a}{|A'|}$. As established, $\|a'\|_2 \leq \sqrt{n_t}B$ (where $B$ is the maximum scalar loss). Substituting this norm bound:
    \begin{align*}
       R(A'\circ D_t) \leq 2B \sqrt{\frac{2\ln N(\zeta, A)}{n_t}}.
    \end{align*}

Next, we relate the covering number of the loss vectors to the parameter space.
For any $w,w'\in\mathcal W$, since $\ell(\cdot,z)$ is $L$-Lipschitz for every $z$,
\[
\left\|
(\ell(w,z_1),\dots,\ell(w,z_{n_t}))
-
(\ell(w',z_1),\dots,\ell(w',z_{n_t}))
\right\|_2
\le
L\sqrt{n_t}\|w-w'\|_2.
\]
Therefore,
\[
N(\zeta,A)
\le
N\!\left(\frac{\zeta}{L\sqrt{n_t}},\mathcal W,\|\cdot\|_2\right).
\]
Since $\mathcal W= {\mathcal B}(w_0,r)$, by Corollary 4.2.11 in \cite{Vershynin_2018} and translation
invariance of covering numbers,
\[
N\!\left(\frac{\zeta}{L\sqrt{n_t}},\mathcal W,\|\cdot\|_2\right)
\le
\left(1+\frac{2Lr\sqrt{n_t}}{\zeta}\right)^d.
\]

    Then:
    \begin{align*}
        R(\ell\circ\mathcal W\circ D_t) \leq 2B \sqrt{\frac{2d \ln (1 + \frac{2Lr\sqrt{n_t}}{\zeta})}{n_t}} + \frac{\zeta}{\sqrt{n_t}}.
    \end{align*}
    
    Substituting this back into \eqref{eq1000} and choosing $\zeta = B$, the expected excess risk is bounded by:
    \begin{align} 
        4B\sqrt{\frac{2d\ln (\frac{2Lr\sqrt{n_t}}{B} + 1)}{n_t}} + \frac{2B}{\sqrt{n_t}}. \label{eq:risk_bound}
    \end{align}
    This completes the proof.
\end{proof}

\begin{lemma}\label{lemmaAA}
    Let $w_t$ denote the model obtained by running (\ref{A1}) on tasks $\mathbf{D}_{1:t}$ in the CL setting. Let \(\hat w_t\in W_t\cap \operatorname{int}(\mathcal W)\)
 be any empirical risk minimizer in $\mathcal{B}(w_0,r)$. Define
    \[
        \rho:=\frac{\lambda}{\lambda+\mu},
        \qquad
        \kappa:=\max\left\{\frac{M}{M+\lambda},\frac{|\mu|}{\mu+\lambda}\right\}.
    \]
    Then we have the following for any $m\in[t]$:
    \begin{align*}
        \mathbb E[\|w_t-\hat w_m\|]
        \leq
        \sum_{i=1, i\neq m}^t 2r\kappa \rho^{t-i}
        +\rho^{t}r.
    \end{align*}
\end{lemma}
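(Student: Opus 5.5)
The plan is to compare the trajectory $w_t$ with the fixed empirical minimizer $\hat w_m$ through a one-step perturbation recursion and then unroll it. Write $e_i := w_i-\hat w_m$. The first-order optimality condition of \eqref{A1} at step $i$ gives
\[
\nabla\hat F_i(w_i)+\lambda(w_i-w_{i-1})=0 .
\]
Since $\hat w_m$ is an interior minimizer, $\nabla \hat F_m(\hat w_m)=0$ when $i=m$. For $i\neq m$, I will write $\nabla\hat F_i(w_i)=\nabla\hat F_i(w_i)-\nabla\hat F_i(\hat w_m)+\nabla\hat F_i(\hat w_m)$. I will apply the integral mean-value form along the segment $[\hat w_m,w_i]\subset\mathcal W$; the a.e.\ twice-differentiability makes this valid along almost every segment, which is a technicality to address. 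This gives $\nabla\hat F_i(w_i)-\nabla\hat F_i(\hat w_m)=\bar H_i e_i$ with $\mu I\preceq\bar H_i\preceq MI$, and hence
\[
e_i=(\bar H_i+\lambda I)^{-1}\lambda\, e_{i-1}-(\bar H_i+\lambda I)^{-1}\nabla\hat F_i(\hat w_m).
\]
Because $\lambda>\max\{0,-\mu\}$, the matrix $\bar H_i+\lambda I$ is positive definite and $\|(\bar H_i+\lambda I)^{-1}\lambda\|\le \lambda/(\mu+\lambda)=\rho$.

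The key step is bounding the forcing term for $i\ne m$ by $2r\kappa$. I will not bound $\nabla\hat F_i(\hat w_m)$ directly, since it can be as large as $L$. Instead I will use the ERM $\hat w_i$ of task $i$: $\nabla\hat F_i(\hat w_m)=\nabla \hat F_i(\hat w_m)-\nabla\hat F_i(\hat w_i)=\tilde H_i(\hat w_m-\hat w_i)$. An equivalent route is to rewrite the update as $e_i=(\bar H_i+\lambda I)^{-1}[\lambda e_{i-1}-\tilde H_i(\hat w_m-\hat w_i)]$ and bound $\|(\bar H_i+\lambda I)^{-1}\tilde H_i\|$. If one uses a common averaged Hessian for both, then $(H+\lambda I)^{-1}H$ has eigenvalues $h/(h+\lambda)$ with $h\in[\mu,M]$. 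Their absolute values are at most $\max\{M/(M+\lambda),|\mu|/(\mu+\lambda)\}=\kappa$, which explains the form of $\kappa$. Together with $\|\hat w_m-\hat w_i\|\le 2r$ (both lie in $\mathcal B(w_0,r)$), this yields a forcing term of at most $2r\kappa$.

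Handling two different averaged Hessians $\bar H_i$ and $\tilde H_i$ is the main obstacle. I would first try to avoid it by comparing $w_i$ with the minimizer of $\hat F_i+\frac\lambda2\|\cdot-w_{i-1}\|^2$ expressed as a convex-combination-like map of $w_{i-1}$ and $\hat w_i$. Failing that, I would absorb the mismatch into the constant, since only $\|\cdot\|\le 2r\kappa$ is needed.

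Finally, unrolling with $\|e_0\|=\|w_0-\hat w_m\|\le r$ and dropping the $i=m$ forcing term, which vanishes, gives
\[
\|w_t-\hat w_m\|\le\rho^t r+\sum_{i\ne m}\rho^{t-i}\,2r\kappa .
\]
The bound is deterministic, so taking expectation over the datasets preserves it.
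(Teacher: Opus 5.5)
The weak point is the one you flag yourself, and your fallback does not close it. Linearizing around $\hat w_m$ gives the contraction matrix $(\bar H_i+\lambda I)^{-1}\lambda$, with $\bar H_i$ averaged over $[\hat w_m,w_i]$. The forcing term $(\bar H_i+\lambda I)^{-1}\tilde H_i(\hat w_m-\hat w_i)$, however, involves a second matrix $\tilde H_i$ averaged over $[\hat w_i,\hat w_m]$. For two unrelated matrices in $[\mu I,MI]$, the only general bound is $\|(\bar H_i+\lambda I)^{-1}\tilde H_i\|\le \max\{M,|\mu|\}/(\mu+\lambda)$, attained for example by $\bar H_i=\mu I$, $\tilde H_i=MI$. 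This is strictly larger than $\kappa$ whenever $M>|\mu|$. For instance, in the locally convex case $0<\mu<M$ you have $\kappa=M/(M+\lambda)<M/(\mu+\lambda)$. So ``absorbing the mismatch into the constant'' proves a weaker inequality, not the lemma with the stated $\kappa$.

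The route you list as your first attempt is exactly the paper's proof, and it removes the mismatch entirely: never linearize around $\hat w_m$ at all. Expand $\nabla\hat F_i(w_i)$ around $\hat w_i$ only. Since $\nabla\hat F_i(\hat w_i)=0$, the optimality condition becomes $w_i=(\tilde H_i+\lambda I)^{-1}(\tilde H_i\hat w_i+\lambda w_{i-1})$, with the single matrix $\tilde H_i=\int_0^1\nabla^2\hat F_i(\hat w_i+u(w_i-\hat w_i))\,du$. Because $(\tilde H_i+\lambda I)^{-1}\tilde H_i+(\tilde H_i+\lambda I)^{-1}\lambda=I$, subtracting $\hat w_m$ gives
\[
e_i=(\tilde H_i+\lambda I)^{-1}\lambda\,e_{i-1}+(\tilde H_i+\lambda I)^{-1}\tilde H_i\,(\hat w_i-\hat w_m).
\]
Both coefficient matrices are functions of the same symmetric $\tilde H_i$, so your eigenvalue computation bounds their norms by $\rho$ and $\kappa$ respectively. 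The forcing term also vanishes trivially at $i=m$. The paper unrolls this affine identity all the way down to $w_0$ first, using that the resulting matrix weights sum to $I$, and only then takes norms; this is equivalent to your one-step recursion. With this replacement, the rest of your argument goes through unchanged: $\|e_0\|\le r$, $\|\hat w_i-\hat w_m\|\le 2r$, the unrolling, and the deterministic bound surviving the expectation.
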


\begin{proof}
Since \(\mathcal W\) is open, all minimizers and deterministic internal
states considered in the analysis are interior points of \(\mathcal W\), hence the first-order optimality conditions hold. Moreover, since
\(\mathcal W={\mathcal B}(w_0,r)\) is convex, the line segments
used in the Taylor expansions remain in \(\mathcal W\), where
Assumption~\ref{Assum1} applies. 

According to the training algorithm in (\ref{A1}), we have
\begin{align*}
    \nabla \hat F_t(w_t)+\lambda(w_t-w_{t-1})=0.
\end{align*}
Expanding $\nabla \hat F_t(w_t)$ around $\hat{w}_t$ by Taylor's theorem gives
\begin{align*}
    &\nabla \hat F_t(\hat{w}_t)
    + \int^1_0\nabla^2 \hat F_t(\hat{w}_t+u(w_t-\hat{w}_t))du(w_t-\hat{w}_t)
    + \lambda(w_t-w_{t-1})=0 \\
    \Longrightarrow\quad
    &w_t=
    \bigg(\int^1_0\nabla^2 \hat F_t(\hat{w}_t+u(w_t-\hat{w}_t))du+\lambda I\bigg)^{-1}
    \bigg(\int^1_0\nabla^2 \hat F_t(\hat{w}_t+u(w_t-\hat{w}_t))du\;\hat{w}_t
    + \lambda w_{t-1}\bigg),
\end{align*}
where the last equality uses $\nabla \hat F_t(\hat{w}_t)=\mathbf{0}$ since $\hat{w}_t$ is the minimizer of $\hat F_t$.
Denote
\[
\tilde H_t=\int^1_0\nabla^2 \hat F_t(\hat{w}_t+u(w_t-\hat{w}_t))du.
\]
According to the iteration above, we can write $w_t$ as follows:
\begin{align*}
    w_t
    =&(\tilde H_t+\lambda I)^{-1}(\tilde H_t\hat w_t+\lambda w_{t-1})\\
    =&(\tilde H_t+\lambda I)^{-1}\tilde H_t\hat w_t
    +(\tilde H_t+\lambda I)^{-1}\lambda w_{t-1}\\
    =&(\tilde H_t+\lambda I)^{-1}\tilde H_t\hat w_t
    +(\tilde H_t+\lambda I)^{-1}\lambda I
    (\tilde H_{t-1}+\lambda I)^{-1}
    (\tilde H_{t-1}\hat w_{t-1}+\lambda w_{t-2})\\
    =&(\tilde H_t+\lambda I)^{-1}\tilde H_t\hat w_t
    +(\tilde H_t+\lambda I)^{-1}\lambda I
    (\tilde H_{t-1}+\lambda I)^{-1}\tilde H_{t-1}\hat w_{t-1}
    +\cdots+
    \prod_{i=1}^t((\tilde H_i+\lambda I)^{-1}\lambda I) w_0\\
    =&\sum_{i=1}^t
    \prod_{j=i+1}^t ((\tilde H_j+\lambda I)^{-1}\lambda I)
    (\tilde H_i+\lambda I)^{-1}\tilde H_i \hat w_i
    +
    \prod_{j=1}^t ((\tilde H_j+\lambda I)^{-1}\lambda I) w_0,
\end{align*}
Then, for any $m$, we have
\begin{align*}
    &\mathbb E[\|w_t-\hat w_m\|]\\
    =&\mathbb E\Bigg[
    \Bigg\|
    \sum_{i=1, i\neq m}^t
    \left(\prod_{j=i+1}^t (\tilde H_j+\lambda I)^{-1}\lambda I\right)
    (\tilde H_i+\lambda I)^{-1}\tilde H_i
    (\hat w_i-\hat w_m)
    +
    \left(\prod_{j=1}^t(\tilde H_j+\lambda I)^{-1}\lambda I\right)
    ( w_0-\hat w_m)
    \Bigg\|
    \Bigg]\\
    \leq&
    \mathbb E\Bigg[
    \sum_{i=1, i\neq m}^t
    \left\|
    \left(\prod_{j=i+1}^t (\tilde H_j+\lambda I)^{-1}\lambda I\right)
    (\tilde H_i+\lambda I)^{-1}\tilde H_i
    (\hat w_i-\hat w_m)
    \right\|+
    \left\|
    \left(\prod_{j=1}^t(\tilde H_j+\lambda I)^{-1}\lambda I\right)
    (w_0-\hat w_m)
    \right\|
    \Bigg]\\
    \leq&
    \sum_{i=1, i\neq m}^t
    \kappa \rho^{t-i}\mathbb E[\|\hat w_i-\hat w_m\|]
    +
    \rho^{t}\mathbb E[\| w_0-\hat w_m\|]\\
    \leq&
    \sum_{i=1, i\neq m}^t
    2r\kappa \rho^{t-i}
    +
    \rho^{t}r.
\end{align*}
Here, the first equality follows from
\[
\sum_{i=1}^t
\left(\prod_{j=i+1}^t(\tilde H_j+\lambda I)^{-1}\lambda I\right)
(\tilde H_i+\lambda I)^{-1}\tilde H_i
+
\prod_{j=1}^t(\tilde H_j+\lambda I)^{-1}\lambda I
=I.
\]
The second inequality is obtained from Assumption \ref{Assum1}, which implies
\[
\|(\tilde H_j+\lambda I)^{-1}\lambda I\|\leq \rho,
\qquad
\|(\tilde H_j+\lambda I)^{-1}\tilde H_j\|\leq \kappa.
\]
Then we complete the proof.
\end{proof}

\subsection{Examples of Theorem \ref{thm:CL_excess}}\label{APPA2}

For simplicity, assume that the per-task sample sizes are sufficiently
large so that the generalization term is negligible. Consider
\(N_t=2\) and \(M=\mu>0\), we have
\(
\kappa=\frac{\mu}{\mu+\lambda},
\;
\rho=\frac{\lambda}{\mu+\lambda}\) and $\kappa=1-\rho$. Based on the proof of
Theorem~\ref{thm:CL_excess}, the drift component of the bound can be
written as
\begin{align*}
&\frac{L}{2}\big[
\kappa\|\hat w_{\tau_1}-\hat w_{\tau_2}\| +\rho^2\big(
\|w_0-\hat w_{\tau_2}\|
+\|w_0-\hat w_{\tau_1}\|
\big) +\rho\kappa\big(\|\hat w_{\tau_1}-\hat w_{\tau_2}\|
\big)
\big]\\
=&
\frac{L}{2}\Big[
(1-\rho^2)\|\hat w_{\tau_1}-\hat w_{\tau_2}\|
+\rho^2\big(
\|w_0-\hat w_{\tau_1}\|
+\|w_0-\hat w_{\tau_2}\|
\big)
\Big].
\end{align*}

Consequently, the value of \(\lambda\) that minimizes the excess-risk
bound depends on the relative task dissimilarities
\(\|w_0-\hat w_{\tau_1}\|\),
\(\|w_0-\hat w_{\tau_2}\|\), and
\(\|\hat w_{\tau_1}-\hat w_{\tau_2}\|\).
This illustrates that the optimal regularization strength is determined
by the geometry of the task sequence.

\section{Missing details from Section \ref{S3}}\label{APPB}
\subsection{Proof of Theorem \ref{T1.1}}\label{APPB1}

We first prove the upper bound on the approximation error
\(\|w_t-w_t^{-S_{\le t}}\|\) given by
\(\gamma_t(\mathbf{S}_{1:t})\) in \eqref{loss1.1}. Write
\(S_{\le t}=\{s_1,\dots,s_n\}\), where the elements are ordered
increasingly. If \(t\in S_{\le t}\), then \(t=s_n\), and
\(w_t^{-S_{\le t}}=w_{t-1}^{-S_{\le t}\setminus\{t\}}\), since task
\(t\) is removed when retraining on the remaining tasks
\([t]\setminus S_{\le t}\). Thus,
\begin{align}
    w_t^{-S_{\le t}}-w_t
    =
    w_{t-1}^{-S_{\le t}}-w_t
    =
    w_{s_n-1}^{-S_{\le t}}-w_{s_n-1}+\Delta_{s_n},
    \label{eq:ppppp1}
\end{align}
where \(\Delta_{s_n}=w_{s_n-1}-w_{s_n}\).

If \(t\notin S_{\le t}\), then by the CL update rule \eqref{A1}, the
perfect retraining model \(w_t^{-S_{\le t}}\) and the model \(w_t\)
satisfy
\begin{align*}
    \nabla\hat F_t(w_t^{-S_{\le t}})
    +\lambda (w_t^{-S_{\le t}}-w_{t-1}^{-S_{\le t}})
    =0,\;
    \nabla\hat F_t(w_t)
    +\lambda (w_t-w_{t-1})
    =0.
\end{align*}
Let
\[
\tilde H_t
=
\int_0^1
\nabla^2\hat F_t\!\left(
w_t+u(w_t^{-S_{\le t}}-w_t)
\right)du .
\]
Expanding \(\nabla \hat F_t(w_t^{-S_{\le t}})\) around \(w_t\) gives
\begin{align*}
&\nabla\hat F_t(w_t)
+\tilde H_t(w_t^{-S_{\le t}}-w_t)
+\lambda (w_t^{-S_{\le t}}-w_{t-1}^{-S_{\le t}})
=0.
\end{align*}
Using \(\nabla\hat F_t(w_t)=-\lambda(w_t-w_{t-1})\), we obtain
\begin{align*}
    w_t^{-S_{\le t}}-w_t
    =
    (\tilde H_t+\lambda I)^{-1}\lambda
    (w_{t-1}^{-S_{\le t}}-w_{t-1}).
\end{align*}

Iterating this relation backward from \(t\) to the latest deleted task
\(s_n\), we have
\begin{align*}
w_t^{-S_{\le t}}-w_t
=
\left(
\prod_{i=s_n+1}^t(\tilde H_i+\lambda I)^{-1}\lambda I
\right)
(w_{s_n}^{-S_{\le t}}-w_{s_n}).
\end{align*}
Since task \(s_n\) is removed in the retraining trajectory,
\(w_{s_n}^{-S_{\le t}}=w_{s_n-1}^{-S_{\le t}\setminus\{s_n\}}\). Hence
\begin{align*}
w_t^{-S_{\le t}}-w_t
=&
\left(
\prod_{i=s_n+1}^t(\tilde H_i+\lambda I)^{-1}\lambda I
\right)
\bigl(
w_{s_n-1}^{-S_{\le t}\setminus\{s_n\}}-w_{s_n-1}
\bigr)+
\left(
\prod_{i=s_n+1}^t(\tilde H_i+\lambda I)^{-1}\lambda I
\right)
(w_{s_n-1}-w_{s_n}).
\end{align*}
The case \(t=s_n\) in \eqref{eq:ppppp1} is also covered by the display
above under the empty-product convention. Repeating this argument for
all deleted tasks \(s_j\in S_{\le t}\) yields
\begin{align*}
w_t^{-S_{\le t}}-w_t
=&
\prod_{\substack{i=s_1+1\\ i\notin S_{\le t}}}^t
((\tilde H_i+\lambda I)^{-1}\lambda I)
\bigl(
w_{s_1-1}^{-S_{\le t}\setminus\{s_1,\dots,s_n\}}
-w_{s_1-1}
\bigr)+
\sum_{j=1}^n
\prod_{\substack{i=s_j+1\\ i\notin S_{\le t}}}^t
((\tilde H_i+\lambda I)^{-1}\lambda I)
(w_{s_j-1}-w_{s_j})\\
=&
\sum_{j=1}^n
\prod_{\substack{i=s_j+1\\ i\notin S_{\le t}}}^t
((\tilde H_i+\lambda I)^{-1}\lambda I)
(w_{s_j-1}-w_{s_j}),
\end{align*}
where the last equality follows because no task before \(s_1\) is
deleted, so
\(w_{s_1-1}^{-S_{\le t}\setminus\{s_1,\dots,s_n\}}=w_{s_1-1}\).

Taking norms gives
\begin{align}
\|w_t^{-S_{\le t}}-w_t\|
&\le
\sum_{j=1}^n
\prod_{\substack{i=s_j+1\\ i\notin S_{\le t}}}^t
\|(\tilde H_i+\lambda I)^{-1}\lambda I\|
\cdot
\|w_{s_j-1}-w_{s_j}\|.
\label{eqq1}
\end{align}
By Assumption~\ref{Assum1},
\[
\|(\tilde H_i+\lambda I)^{-1}\lambda I\|
\le
\frac{\lambda}{\mu+\lambda}
=:\rho .
\]
Moreover, by the first-order condition of \eqref{A1}, $\lambda(w_{s_j-1}-w_{s_j})
=
\nabla \hat F_{s_j}(w_{s_j}),$ and the \(L\)-Lipschitzness of the loss implies $\|w_{s_j-1}-w_{s_j}\|
\le
\frac{L}{\lambda}.$
Substituting these two bounds into \eqref{eqq1} gives
\[
\|w_t^{-S_{\le t}}-w_t\|
\le
\frac{L}{\lambda}
\sum_{i=1}^k\sum_{s\in S_{t_i}}
\rho^{t-s-n^k_{[s+1,t]}}
=
\gamma_t(\mathbf S_{1:t}),
\]
which proves the desired approximation-error bound.

Given this bound, the Gaussian noise mechanism in Alg.~\ref{A2}
ensures \((\varepsilon,\delta)\)-certified continual unlearning by the
standard Gaussian-mechanism argument for certified unlearning
\citep{dwork2014algorithmic,qiao2024hessian}.

It remains to bound the post-unlearning excess risk. By the decomposition
in \eqref{obj2} and \eqref{obj3}, it suffices to upper bound the
unlearning loss:
\begin{align*}
\mathbb{E}\bigg[
\frac{1}{t-|S_{\le t}|}
\sum_{\substack{\tau=1\\ \tau\notin S_{\le t}}}^t
\Bigl(
F_\tau(\tilde w_t^{-\mathbf{S}_{1:t}})
-
F_\tau(w_t^{-S_{\le t}})
\Bigr)
\bigg]
&\le
\mathbb{E}\bigl[
L\|\tilde w_t^{-\mathbf{S}_{1:t}}-w_t^{-S_{\le t}}\|
\bigr]\\
&\le
\mathbb{E}\bigl[
L\|\tilde w_t^{-\mathbf{S}_{1:t}}-w_t^{-\mathbf{S}_{1:t}}\|
\bigr]
+
\mathbb{E}\bigl[
L\|w_t^{-\mathbf{S}_{1:t}}-w_t^{-S_{\le t}}\|
\bigr]\\
&\le
L\sqrt{
\mathbb{E}\bigl[
\|\tilde w_t^{-\mathbf{S}_{1:t}}-w_t^{-\mathbf{S}_{1:t}}\|^2
\bigr]}
+
L\gamma_t(\mathbf S_{1:t})\\
&=
L\gamma_t(\mathbf{S}_{1:t})
\frac{\sqrt{2d\ln(1.25/\delta)}}{\varepsilon}
+
L\gamma_t(\mathbf S_{1:t}).
\end{align*}
Here, the first inequality follows from the \(L\)-Lipschitzness of
\(F_\tau\), the third inequality follows from Jensen's inequality and
the approximation-error bound, and the last equality follows from the
variance of the Gaussian noise in Alg.~\ref{A2}. In Alg.~\ref{A2}, the
natural forgetting baseline uses \(w_t^{-\mathbf{S}_{1:t}}=w_t\).
Combining the unlearning-loss bound above with
\(\mathcal E^{-S_{\le t}}(\lambda)\) completes the proof.

\subsection{A stronger version of $(\varepsilon,\delta)$-certified unlearning}\label{APPB.2}

To further unlearn the model and ensure certified unlearning even at the system side, we extend Alg. \ref{A2} by removing all internal information of deleted tasks between iterations. Specifically, we make two changes to Alg. \ref{A2}:
\begin{itemize}
    \item After computing  $
    \tilde w_t^{- \mathbf{S}_{1:t}} = f\bigl(w_t^{- \mathbf{S}_{1:t}}\bigr)
  $ in line 12, we discard $w_t$ and use $\tilde w_t^{- \mathbf{S}_{1:t}}$ instead for training on next task $t+1$. 
  \item After training on task $t+1$ starting from noisy $\tilde w_t^{-\mathbf{S}_{1:t}}$, the updated model $w_{t+1}^{-\mathbf{S}_{1:t}}$ still satisfies the certified unlearning guarantee relative to the perfectly retraining model $w_{t+1}^{-\!S_{\le t}}$, by the post-processing immunity of differential privacy \cite{dwork2014algorithmic}. Therefore, we only need to inject noise when a nonempty deletion request $S_t$ arrives, changing the condition in line 5 from $S_{\le t}=\emptyset$ to $S_t=\emptyset$.

\end{itemize}

With these modifications, Theorem \ref{T1.1} extends to the corollary below for strong privacy protection. 

\begin{corollary}\label{T1.2}
Let $\lambda>\max\{0,-\mu\}$, and define$\gamma_t(\mathbf{S}_{1:t})$ below:
    \begin{align}
    \gamma_t(\mathbf{S}_{1:t}):=\sum_{i=1}^k \sum_{s\in S_{t_i}}
\rho^{t - s - n_{[s+1,t]}^{k}}\frac{L}{\lambda}(1+\xi)^{k-i+1},
\label{loss1.2}
\end{align}
where $\xi=\dfrac{d\sqrt{4\ln({1.25}/{\delta})\ln(2dT/\delta_2)}}{\varepsilon}$. With probability at least $1-\delta_2$, we have $\|w_t^{-S_{\le t}} - \tilde w_t^{-\mathbf{S}_{1:t}}\|\leq \gamma_t(\mathbf{S}_{1:t})$, and the CLU process following modified Alg.~\ref{A2} satisfies $(\varepsilon,\delta)$-certified continual unlearning as defined in Definition~\ref{D1} with repect to both the public model $\tilde w_t^{-\mathbf{S}_{1:t}}$ and the internal state of the system. Moreover, the output model $\tilde{w}_t^{-\mathbf{S}_{1:t}}$ of modified Alg. \ref{A2} achieves a post-unlearning excess risk in (\ref{obj1}) that is upper bounded by the following expression with probability at least $1-\delta_2$:
\begin{align*}
    L\gamma_t(\mathbf{S}_{1:t})\bigg(\frac{ \sqrt{2d \ln(\frac{1.25}{\delta})}}{\varepsilon}+1\bigg)+\mathcal{E}^{-S_{\le t}}(\lambda),
\end{align*}
where $\mathcal{E}^{-S_{\le t}}(\lambda)$ is given in Theorem \ref{thm:CL_excess}, and $\gamma_t(\mathbf{S}_{1:t})$ is in (\ref{loss1.2}).

\end{corollary}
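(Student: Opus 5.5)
The bound comes from an induction over the unlearning times $t_1<\dots<t_k$, tracking how the gap between the noisy trajectory and the perfect-retraining trajectory propagates. The new feature relative to Theorem~\ref{T1.1} is that after each noise injection the internal state is the noisy model $\tilde w_{t_i}^{-\mathbf S_{1:t_i}}$. Later training therefore starts from a perturbed point, and this perturbation must itself be contracted forward.

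\textbf{Step 1: one-step contraction.} For any two states $u,v\in\mathcal W$ trained on the same retained task $\tau\notin S_{\le t}$ by \eqref{A1}, I reuse the first-order-condition and integral-Taylor argument from the proof of Theorem~\ref{T1.1}. It gives $u'-v'=(\tilde H_\tau+\lambda I)^{-1}\lambda(u-v)$, so $\|u'-v'\|\le\rho\|u-v\|$. For a deleted task $s$, retraining skips it while our trajectory moves by $\Delta_s$, which has norm at most $L/\lambda$. Iterating exactly as in \eqref{eqq1}, the deterministic part of the discrepancy at time $t$ is bounded by $\sum_s\rho^{t-s-n^k_{[s+1,t]}}L/\lambda$. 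The noise terms $\epsilon_{t_i}$, injected at $t_i$ and propagated to $t$, add $\rho^{\,t-t_i-n^k_{[t_i+1,t]}}\|\epsilon_{t_i}\|$ (the published output at $t$ additionally includes $\epsilon_t$ when $t\in U_t$).

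\textbf{Step 2: controlling noise norms.} Each $\epsilon_{t_i}\sim\mathcal N(0,\sigma_i^2I)$ has $\sigma_i=\gamma_{t_i}\sqrt{2\ln(1.25/\delta)}/\varepsilon$. I apply the Gaussian tail bound coordinatewise, $|\epsilon_{t_i,j}|\le\sigma_i\sqrt{2\ln(2dT/\delta_2)}$, with a union bound over the $d$ coordinates and at most $T$ injection times. With probability at least $1-\delta_2$ this gives $\|\epsilon_{t_i}\|\le\sqrt d\,\sigma_i\sqrt{2\ln(2dT/\delta_2)}\le\xi\gamma_{t_i}$. The bound with $d$ rather than $\sqrt d$ is what matches the stated $\xi$.

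\textbf{Step 3: induction, the main obstacle.} I prove by induction on $k$ that $\|w_t^{-S_{\le t}}-\tilde w_t^{-\mathbf S_{1:t}}\|\le\gamma_t$ in the form \eqref{loss1.2}. The deviation at $t$ is the deterministic drift plus $\sum_i\rho^{t-t_i-n}\xi\gamma_{t_i}$. Substituting the inductive form of $\gamma_{t_i}$, each deleted-task term $\rho^{t_i-s-n}$ multiplies with $\rho^{t-t_i-n}$ to give $\rho^{t-s-n^k_{[s+1,t]}}$, using additivity of the effective gaps. Summing the resulting factors $(1+\xi)^{j}$ over later injections $a>i$ then needs $1+\xi\sum_{a=i}^{k}(1+\xi)^{k-a}$-type bookkeeping to collapse to $(1+\xi)^{k-i+1}$. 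I would check this by proving the stronger recursion $\gamma_{t_k}\le(1+\xi)\bigl(\text{drift}+\rho^{\cdot}\gamma_{t_{k-1}}\bigr)$, which yields the geometric powers directly. The delicate points are two: matching the exponents $n^k$ versus $n^i$, since tasks deleted later than $t_i$ change the effective gaps, and making sure the final injection at $t$ accounts for the outermost factor of $(1+\xi)$.

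\textbf{Step 4: certification and excess risk.} Certification at each $t_i$ follows from the Gaussian mechanism with sensitivity $\gamma_{t_i}$, as in Theorem~\ref{T1.1}. Between injections it follows from post-processing immunity, which covers the internal state because that state is a function of the noisy release and non-deleted data. The excess-risk bound repeats the final chain in the proof of Theorem~\ref{T1.1}, on the high-probability event, with $\gamma_t$ from \eqref{loss1.2}.
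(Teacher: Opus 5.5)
The paper gives no argument for this corollary beyond the remark that injected noise accumulates over cycles. Your unroll-and-induct plan is the natural way to make that precise, but as set up Step 3 cannot close, for two reasons.

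\textbf{Calibration versus the quantity bounded.} In Step 2 you calibrate $\sigma_{t_a}$ to $\gamma_{t_a}$ from \eqref{loss1.2} itself, so $\|\epsilon_{t_a}\|\le\xi\gamma_{t_a}$. Step 3, however, claims that the \emph{post}-injection gap $\|w_t^{-S_{\le t}}-\tilde w_t^{-\mathbf S_{1:t}}\|$ is at most $\gamma_t$. At $t=t_k$ this is circular: the triangle inequality only gives (pre-injection gap)$\,+\,\xi\gamma_{t_k}$, which exceeds $\gamma_{t_k}$ once $\xi\ge1$. Carrying out your bookkeeping with this calibration, a task $s\in S_{t_i}$ gets coefficient $1+\xi\sum_{a=i}^{k}(1+\xi)^{a-i+1}=(1+\xi)^{k-i+2}-\xi$, which is one power too many.

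Your fallback recursion $\gamma_{t_k}\le(1+\xi)(\text{drift}+\rho^{\cdot}\gamma_{t_{k-1}})$ is valid only if the noise at $t_k$ is calibrated to the bracket, i.e.\ to the pre-injection bound $\gamma_{t_k}/(1+\xi)$. So take $\sigma_{t_a}=\frac{\gamma_{t_a}}{1+\xi}\sqrt{2\ln(1.25/\delta)}/\varepsilon$. Then $\|\epsilon_{t_a}\|\le\xi\gamma_{t_a}/(1+\xi)$, and the coefficient becomes $1+\xi\sum_{a=i}^{k}(1+\xi)^{a-i}=(1+\xi)^{k-i+1}$ exactly, with the outermost factor coming from $\epsilon_{t_k}$. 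The Gaussian mechanism still certifies, because $\gamma_{t_a}/(1+\xi)$ bounds the pre-injection gap, and the stated risk bound follows from the chain in Theorem~\ref{T1.1}. You need to make this change explicitly; with $\sigma_{t_a}\propto\gamma_{t_a}$ the first claim of the corollary does not follow.

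\textbf{Stale exponents.} The step ``$\rho^{t_i-s-n}\cdot\rho^{t-t_i-n}=\rho^{t-s-n^k_{[s+1,t]}}$ by additivity'' is not correct. The noise at $t_a$ is calibrated at time $t_a$, so its bound carries $\rho^{t_a-s-n^a_{[s+1,t_a]}}$. After propagation you get $\rho^{t-s-n^k_{[s+1,t]}}\cdot\rho^{\,n^k_{[s+1,t_a]}-n^a_{[s+1,t_a]}}$. The extra exponent is positive whenever some task in $(s,t_a]$ is deleted after $t_a$, i.e.\ for asynchronous requests.

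Dropping that factor is legitimate only if $\rho\le1$, i.e.\ $\mu\ge0$. The hypothesis $\lambda>\max\{0,-\mu\}$ allows $\mu<0$, where $\rho>1$. The exact coefficient $(1+\xi)^{k-i+1}$ then leaves no slack to absorb the factor. Hence your argument establishes \eqref{loss1.2} only for $\mu\ge0$ or for forward-synchronous sequences; otherwise you must keep the mixed exponents. The drift terms are fine, since in your Step 1 identity they are measured directly against $S_{\le t}$.

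Two smaller points. First, the certificate at $t_a$ is conditional on the good event for the earlier injections, so unconditionally it is only $(\varepsilon,\delta+\delta_2)$. Second, Step 1's Taylor step needs the noisy internal states to stay in $\mathcal W$, which Assumption~\ref{Assum2} guarantees only for deterministic states.
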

$\gamma_t(\mathbf{S}_{1:t})$ in (\ref{loss1.2}) is a high‑probability upper bound that carries an extra multiplicative factor $(1+\xi)^{\,k-i+1}$ relative to (\ref{loss1.1}). This arises because, in our stronger privacy protection, we feed the noised model $\tilde w_t^{-\mathbf{S}_{1:t}}$ into the next task, which, in turn, increases the required noise to mask the unlearning loss at next unlearning time, so each round’s injected noise accumulates and increases over the learning–unlearning cycles, adding extra random deviation.

\section{Missing details from Section \ref{S4}}\label{APPC}
\begin{algorithm}
\caption{Hessian-based CLU}
\label{A3}
\begin{algorithmic}[1]
\STATE Initialize $U_0=\emptyset$, $S_{\le 0}=\emptyset$, and $\mathbf{S}_{1:0}=\emptyset$.
\FOR{$t=1$ to $T$}
    \item[]\textbf{Stage I. learning and precomputation on task $t$}
    \STATE Receive $D_t$, update ${w}_{t}^{-\mathbf{S}_{1:t-1}}\gets \ell_2\text{-CL}\!(w_{t-1}^{-\mathbf{S}_{1:t-1}},\,D_t)$. 
    \STATE Compute and store $\Delta_t$ and $\hat H_t$ defined in \eqref {model_H_update}.  \\[1pt]
    \item[]\textbf{Stage II. system unlearning}
    \STATE Receive deletion request $S_t\subseteq [t]\setminus S_{\le t-1}$.
    \IF{$S_t\neq\emptyset$}
        \STATE Update $S_{\le t}\gets S_t\cup S_{\le t-1}$, $\mathbf{S}_{1:t}\gets (\mathbf{S}_{1:t-1},S_t)$, and $U_t\gets \{t\}\cup U_{t-1}$.
        \STATE Set ${w}_{t}^{-\mathbf{S}_{1:t}}\gets \mathcal{R}_{\mathcal A}\!\left(
w_t^{-\mathbf{S}_{1:t-1}},\,
\mathbf{D}_{1:t},\,
\mathbf{S}_{1:t}
\right)={w}_{t}^{-\mathbf{S}_{1:t-1}}+\bar{\Delta}_t$ with $\bar{\Delta}_t$ in \eqref{bardelta}.
         
    \ELSE
        \STATE Set $S_{\le t}\!\gets\! S_{\le t-1}$, $\mathbf{S}_{1:t}\!\gets\! \mathbf{S}_{1:t-1}$, $U_t\!\gets\! U_{t-1}$, and ${w}_{t}^{-\mathbf{S}_{1:t}}\!\gets\!{w}_{t}^{-\mathbf{S}_{1:t-1}}$.
    \ENDIF
    \item[]\textbf{Stage III. publishing model}
    \IF{$S_{\le t}\neq\emptyset$}
        \STATE Draw $\epsilon_t\!\sim\!\mathcal N(\mathbf{0},\!\sigma^2 I)$ with $\sigma=\gamma_t(\mathbf{S}_{1:t})\!\frac{\sqrt{2\ln(1.25/\delta)}}{\varepsilon}$ and $\gamma_t(\mathbf{S}_{1:t})$ defined in~\eqref{loss3}.
        \STATE \textbf{Output} $\tilde{w}_t^{-\mathbf{S}_{1:t}}\gets {w}_{t}^{-\mathbf{S}_{1:t}}+\epsilon_t$.
    \ELSE
        \STATE \textbf{Output} $\tilde{w}_t^{-\mathbf{S}_{1:t}}\gets {w}_{t}^{-\mathbf{S}_{1:t}}$.
    \ENDIF
\ENDFOR
\end{algorithmic}
\end{algorithm}
\begin{corollary}\label{theorem 54}
Alg.~\ref{A3}, using the noise coefficient $\gamma_t(\mathbf{S}_{1:t})$ defined in \eqref{loss3} or \eqref{loss33}, guarantees $(\varepsilon,\delta)$-certified continual unlearning in Definition~\ref{D1}. Moreover, the output model $\tilde w_t^{-\mathbf{S}_{1:t}}$ produced by Alg.~\ref{A3} achieves the post-unlearning excess risk $\mathcal{E}_{\mathrm{LU}}$ in \eqref{loss111}, with $\mathcal{E}^{-S_{\le t}}(\lambda)$ given in Theorem~\ref{thm:CL_excess} and $\gamma_t(\mathbf{S}_{1:t})$ specified in \eqref{loss3} or \eqref{loss33}.
\end{corollary}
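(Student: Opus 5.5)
The plan mirrors the proof of Theorem~\ref{T1.1}, with the approximation-error bound taken as a black box from Proposition~\ref{T31} (for \eqref{loss3}) or Proposition~\ref{T32} (for \eqref{loss33}). The argument has two parts: first certification via the Gaussian mechanism, then the excess-risk bound via the decomposition $\mathcal E_{\rm LU}=\mathcal E_{\rm U}+\mathcal E_{\rm L}$.

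\textbf{Certification.} Fix $t$, a dataset sequence, and an admissible request history $\mathbf S_{1:t}$.
\begin{itemize}
\item If $S_{\le t}=\emptyset$, Alg.~\ref{A3} performs no unlearning. Hence $w_t^{-\mathbf S_{1:t}}=w_t=w_t^{-S_{\le t}}$, both published models are identical, and Definition~\ref{D1} holds trivially with $\varepsilon$ arbitrary and $\delta=0$.
\item Otherwise, both $w_t^{-\mathbf S_{1:t}}$ and $w_t^{-S_{\le t}}$ are deterministic functions of the data and the request history. Under Assumptions~\ref{Assum1}, \ref{Assum2} and \ref{assum4}, with $\lambda>\max\{0,\nu-\mu\}$, Proposition~\ref{T31} (respectively Proposition~\ref{T32}, adding the Hessian-Lipschitz hypothesis) gives $\|w_t^{-\mathbf S_{1:t}}-w_t^{-S_{\le t}}\|\le\gamma_t(\mathbf S_{1:t})$.
\end{itemize}
The publish map $f$ adds $\mathcal N(0,\sigma^2I)$ with $\sigma=\gamma_t\sqrt{2\ln(1.25/\delta)}/\varepsilon$. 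I would therefore apply the standard Gaussian mechanism lemma (Dwork--Roth, Theorem A.1; as used in \citep{qiao2024hessian}). It states that for two means at $\ell_2$ distance at most $\Delta$, Gaussian noise with $\sigma\ge\Delta\sqrt{2\ln(1.25/\delta)}/\varepsilon$ makes the output laws $(\varepsilon,\delta)$-indistinguishable in both directions. This yields both inequalities in \eqref{D1eq} for every measurable $\mathcal O$.

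\textbf{Excess risk.} I would use the decomposition \eqref{obj2}--\eqref{obj3}.
\begin{itemize}
\item The term $\mathcal E_{\rm L}$ concerns only the retrained model, so it is bounded by $\mathcal E^{-S_{\le t}}(\lambda)$ directly from Theorem~\ref{thm:CL_excess}.
\item For $\mathcal E_{\rm U}$, each $F_\tau$ is $L$-Lipschitz, so $\mathcal E_{\rm U}\le L\,\mathbb E\|\tilde w_t^{-\mathbf S_{1:t}}-w_t^{-S_{\le t}}\|$.
\item The triangle inequality splits this into the noise term and the deterministic term, which is bounded by $\gamma_t$.
\item By Jensen, $\mathbb E\|\epsilon_t\|\le\sqrt{d}\,\sigma=\gamma_t\sqrt{2d\ln(1.25/\delta)}/\varepsilon$.
\end{itemize}
Summing gives exactly \eqref{loss111}.

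\textbf{Main obstacle.} The main subtlety is measurability and determinism. The noise level $\gamma_t$ must depend only on public quantities, namely $\mathbf S_{1:t}$ and the constants $L,M,\mu,\nu,\lambda,L_3$, and not on the deleted data. For \eqref{loss3} this holds.

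For \eqref{loss33} it does not hold as written. The terms $\beta^1_\tau$ involve distances to retrained trajectories, and $\beta^2_{t_i}$ involves $\|\bar\Delta_{t_i}\|$. These quantities are data-dependent, and the sensitivity bound must hold uniformly over neighboring pairs. I would handle this as follows:
\begin{itemize}
\item Bound $\|w_\tau^{-S_{\le t}\setminus\{\tau+1,\dots,t\}}-w_\tau^{-\mathbf S_{1:\tau-1}}\|$ by its worst-case first-order bound, recursively from Proposition~\ref{T31}.
\item Bound $\|\bar\Delta_{t_i}\|$ by $L/\lambda$ times the corresponding geometric sums.
\item Alternatively, interpret the guarantee conditionally on the fixed dataset sequence. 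This matches Definition~\ref{D1}, which quantifies over every dataset sequence with the deterministic internal states fixed, so the Gaussian mechanism is applied pointwise.
\end{itemize}
I expect the second, pointwise reading to suffice, and the proof then reduces to invoking the propositions.
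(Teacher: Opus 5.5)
Your proposal follows the paper's route. The paper justifies the corollary only as ``analogous to Theorem~\ref{T1.1}'': the deterministic approximation-error bound from Proposition~\ref{T31} or \ref{T32} goes into the Gaussian mechanism, and the excess risk follows from $L$-Lipschitzness, the triangle inequality and Jensen's inequality on top of Theorem~\ref{thm:CL_excess}, with the data-dependent coefficient \eqref{loss33} handled implicitly by the same pointwise reading of Definition~\ref{D1} that you adopt.

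One caveat, which the paper shares. The Dwork--Roth Theorem~A.1 calibration $\sigma=\gamma_t\sqrt{2\ln(1.25/\delta)}/\varepsilon$ that you quote is only proved for $\varepsilon<1$, whereas Definition~\ref{D1} allows any $\varepsilon>0$ and the experiments use $\varepsilon=8$. For $\varepsilon\ge 1$ the noise level therefore has to be justified separately, for instance by the analytic Gaussian mechanism of Balle and Wang.
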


\subsection{Choice of Hessian approximation}\label{APPCapp}
In modern deep learning regimes, the parameter dimension $d$ often scales to millions or billions, rendering the ${O}(d^2)$ storage and ${O}(d^2)$ computation of the exact Hessian matrix prohibitively expensive. Consequently, practical algorithms relying on second-order information must employ numerical approximations to balance computational tractability with the fidelity of curvature estimation. Next we present and discuss some commonly used Hessian approximation methods.

\begin{itemize}
    \item \textit{Diagonal Approximation}. The most computationally efficient approach restricts the Hessian to its diagonal elements, assuming independence between parameters. This reduces the storage from ${O}(d^2)$ to ${O}(d)$, and inversion complexity from ${O}(d^3)$ to ${O}(d)$. This method is widely adapted in CL (e.g., EWC \cite{kirkpatrick2017overcoming}) and adaptive optimization methods like Adam and Adagrad. While scalable, diagonal approximations discard all covariance information between parameters, potentially leading to poor unlearning performance in scenarios where parameter correlations are significant (e.g., in the shared representations of deep networks).
    \item \textit{Block-Diagonal and Kronecker-Factored Approximations (K-FAC)} \cite{pmlr-v37-martens15}. To capture parameter correlations without the full cost of the exact Hessian, block-diagonal approximations treat parameters in different layers as independent while preserving correlations within layers. A prominent instance is the K-FAC. K-FAC approximates the Fisher Information Matrix (often used as a proxy for the Hessian) by decomposing layer-wise blocks into the Kronecker product of smaller matrices. The inversion cost and storage of K-FAC is $O(\sum_{l=1}^Ld_{in,l}^3+d_{out,l}^3)$ and $O(\sum_{l=1}^Ld_{in,l}^2+d_{out,l}^2)$, respectively, where $d_{in,l}$ and $d_{out,l}$ are the dimensions of input and output of the $l$-layer of the neural network.
    \item \textit{Newton-Gaussian} \cite{nocedal2006numerical}. The Gauss-Newton method typically applies to models where the final loss function is convex with respect to the neural network's output. It approximates the Hessian as: $G \approx J^\top H_{\text{loss}} J,$ where $H_{\text{loss}}$ is the Hessian of the loss function with respect to the network output, and $J$ is the Jacobian matrix of the output with respect to all model parameters. Therefore, the storage of Hessian reduces to store the $H_{loss}$ and $J$ with storage overhead $O(v^2+dv)$, where $v$ is the output dimension (e.g., the number of classes in image classification). Also, by applying the Woodbury matrix identity, the inversion cost of $J^\top H_{\text{loss}} J + \lambda I$ reduces to $O(d^2v)$. Since $v$ is usually far less than the parameter dimension $d$ ($v \ll d$), this offers both storage and computational advantage. Moreover, a unique benefit of this method is that the approximated curvature matrix is guaranteed to be positive semi-definite.
\end{itemize}

\subsection{Proof of Proposition \ref{T31}}\label{APPC1}

\begin{proposition}\label{lll3}
Let $U_t = \{t_1, \dots, t_k\}$ be the set of time indices at which unlearning requests were received up to time $t$. Choose \(\lambda>\max\{0,\nu-\mu\}\). Then, the approximation error $\|w_t^{-S_{\le t}}-w_t^{-\mathbf{S}_{1:t}}\|$ of the unlearning operator $\mathcal{R}_{\mathcal A}$ in \eqref{RA} is bounded by
\(\gamma_t(\mathbf{S}_{1:t})\):
\begin{align}
\gamma_t(\mathbf{S}_{1:t})
:=&
\frac{L(M-\mu+\nu)}{\nu\lambda}
\rho^{t-t_k}\bigg[
\sum_{s\in S_{t_k}}
\Big(
\hat{\rho}^{t_k-s-n^k_{[s+1,t_k]}}
-
\rho^{t_k-s-n^k_{[s+1,t_k]}}
\Big)
\nonumber\\
&+
\sum_{i=1}^{k-1}\sum_{s\in S_{t_i}}
\hat\rho^{t_k-s-n^k_{[s+1,t_k]}}
\bigg(
\left(\frac{\rho}{\hat\rho}\right)^{t_k-t_i-n^k_{[t_i+1,t_k]}}
\left(
1-
\left(\frac{\rho}{\hat\rho}\right)^{t_i-s-n^k_{[s+1,t_i]}}
\right)
\nonumber\\
&+
\sum_{x=1}^{k-i}
\left(\frac{\rho}{\hat\rho}\right)^{t_k-t_{k-x+1}-n^k_{[t_{k-x+1}+1,t_k]}}
\left(
1-
\left(\frac{\rho}{\hat\rho}\right)^{t_{k-x+1}-t_{k-x}-n^k_{[t_{k-x}+1,t_{k-x+1}]}}
\right)
C_{k-i}^{x}(t_k,t_i,s)
\bigg)
\bigg].
\label{loss31}
\end{align}
Here \(C_{k-i}^x(t_k,t_i,s)\) is defined recursively by
\begin{align}
&C_{k-i}^x(t_k,t_i,s)=
\nonumber\\
&
\begin{cases}
C_{k-i-1}^x(t_k,t_{i+1},t_i)
\dfrac{\mu\!+\!\lambda\!-\!\nu}{\mu\!-\!\nu}
\hat\kappa
\hat\rho^{n^k_{[s+1,t_i]}-n^{i+1}_{[s+1,t_i]}}
\left(
1\!-\!\hat\rho^{n^{i+1}_{[s+1,t_i]}-n^i_{[s+1,t_i]}}
\right)
\!+\!
C_{k-i-1}^x(t_k,t_i,s),\
x=1,\dots,k-i-1,
\\[2mm]
C_{k-i-1}^{x}(t_k,t_{i+1},t_i)
\dfrac{\mu\!+\!\lambda\!-\!\nu}{\mu\!-\!\nu}
\hat\kappa
\hat\rho^{n^k_{[s+1,t_i]}-n^{i+1}_{[s+1,t_i]}}
\left(
1\!-\!\hat\rho^{n^{i+1}_{[s+1,t_i]}-n^i_{[s+1,t_i]}}
\right)
\!+\!
C_{k-i-1}^{x-1}(t_k,t_i,s),\
x=k-i,
\end{cases}
\nonumber\\
&C_1^1(t_k,y,s)=
\dfrac{\mu+\lambda-\nu}{\mu-\nu}
\hat\kappa
\left(
1-\hat\rho^{n^k_{[s,y]}-n^{k-1}_{[s,y]}}
\right),
\label{ccc}
\end{align}
with the convention \(C_m^{m+1}(\cdot)=1\).
\end{proposition}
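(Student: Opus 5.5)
The proof tracks the error vector $e_t:=w_t^{-S_{\le t}}-w_t^{-\mathbf{S}_{1:t}}$ through a linear recursion in the operators $P_i:=(\tilde H_i+\lambda I)^{-1}\lambda I$ and their stored counterparts $\hat P_i:=(\hat H_i+\lambda I)^{-1}\lambda I$. These satisfy $\|P_i\|\le\rho$ and, by Assumption~\ref{assum4}, $\|\hat P_i\|\le\hat\rho$. Here $\tilde H_i$ denotes the integrated Hessian along the segment joining the two trajectories, exactly as in the proof of Theorem~\ref{T1.1}. The argument has four steps.

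\textbf{Step 1: exact backward propagation.} Between unlearning times, for any non-deleted $i$, the first-order conditions of \eqref{A1} for the two trajectories give
\[
w_i^{-S}-w_i^{-\mathbf{S}_{1:i-1}}=P_i\bigl(w_{i-1}^{-S}-w_{i-1}^{-\mathbf{S}_{1:i-1}}\bigr).
\]
The case of a deleted index is handled as in \eqref{eq:ppppp1}, by splitting off $\Delta_s$. Unrolling from $t$ back to $t_k$ produces the factor $\rho^{t-t_k}$. It therefore suffices to bound $e_{t_k}$.

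\textbf{Step 2: error after the last unlearning step.} Unroll the true retraining difference at $t_k$ back through all deleted tasks. This gives
\[
w_{t_k}^{-S_{\le t_k}}-w_{t_k}^{-\mathbf{S}_{1:t_k-1}}=\sum_{s}\prod_{i\notin S_{\le t_k}}P_i\,\Delta_s \;-\; \text{(propagated past corrections } \bar\Delta_\tau),
\]
where the products run over $i=s+1,\dots,t_k$ and the past corrections are propagated with the true $P_i$. The correction $\bar\Delta_{t_k}$ in \eqref{bardelta} has the same structure with $\hat P_i$ in place of $P_i$. Subtracting the two, $e_{t_k}$ becomes a sum of terms $(\prod P_i-\prod\hat P_i)\Delta_s$ plus analogous terms carrying $\bar\Delta_{\tau}$.

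\textbf{Step 3: bound the product differences.} Telescope:
\[
\prod P-\prod\hat P=\sum_j \Bigl(\prod_{>j}P\Bigr)(P_j-\hat P_j)\Bigl(\prod_{<j}\hat P\Bigr).
\]
Use $P_j-\hat P_j=(\tilde H_j+\lambda)^{-1}(\hat H_j-\tilde H_j)\hat P_j$. The Hessian mismatch $\|\hat H_j-\tilde H_j\|$ is at most $M-\mu+\nu$, since the two Hessians are evaluated at different points. Summing the resulting geometric series $\sum_j\rho^{m-j}\hat\rho^{j}$ gives a multiple of $(\hat\rho^m-\rho^m)$, which is $\hat\rho^m(1-\alpha^m)$. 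Combined with $\|\Delta_s\|\le L/\lambda$ from Theorem~\ref{T1.1}, this produces the constant $C_2$ and the $\beta^1_{i,s}$ terms. The split of the exponent at $t_i$ (the factor $\alpha^{t_k-t_i-\cdots}$) comes from the fact that $\hat H_j$ for $j>t_i$ was computed on trajectories already corrected at $t_i$.

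\textbf{Step 4: asynchrony terms.} For asynchronous requests, the stored $\Delta_j$ and $\hat H_j$ were computed along trajectories whose deletion sets $\mathbf S_{1:j}$ differ from $S_{\le t}\setminus\{j+1,\dots\}$. Each $\|\bar\Delta_{t_a}\|$ must then be bounded in terms of the $\Delta_s$ it contains. Substituting these bounds recursively over $a=i+1,\dots,k$ yields the coefficients $C^{x}_{k-i}$ of \eqref{ccc}, each new level contributing a factor $\frac{\mu+\lambda-\nu}{\mu-\nu}\hat\kappa(1-\hat\rho^{\cdots})$. I would prove this by induction on $k-i$, with base case $C_1^1$ and a forward-synchronous step in which all $C$ vanish.

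The main obstacle is this last recursion. It requires careful bookkeeping of which indices $n^{i}_{[\cdot]}$ versus $n^{k}_{[\cdot]}$ were deleted at each stage, so that the exponents match \eqref{ccc}. I would first verify the case $k=2$ explicitly and then set up the induction from it.
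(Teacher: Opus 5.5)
Your Steps 1 and 2 and the telescoping estimate in Step 3 agree with the paper (your $P_i$ is the paper's $\tilde P_i$). These are the contraction by $\rho^{t-t_k}$, the expansion of $e_{t_k}$ into $(\prod\tilde P-\prod\hat P)\Delta_s$ and $(\prod\tilde P-\prod\hat P)\bar\Delta_\tau$ terms (Lemma~\ref{lC2}), and the bound $\frac{M-\mu+\nu}{\nu}(\hat\rho^m-\rho^m)$ (Lemma~\ref{inter2}). The gap is in getting from there to the specific form \eqref{loss31}.

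Suppose, as Step 4 says, you bound $\|\bar\Delta_{t_a}\|$ and substitute. Let $A=t_k-t_i-n^k_{[t_i+1,t_k]}$ and $B=t_i-s-n^k_{[s+1,t_i]}$. Then for $s\in S_{t_i}$ with $i<k$ you obtain $(\hat\rho^{A+B}-\rho^{A+B})\frac{L}{\lambda}$, plus a separate term $(\hat\rho^{A}-\rho^{A})\|\bar\Delta_{t_i}\|$. Since $\hat\rho^{A+B}-\rho^{A+B}=\rho^A(\hat\rho^B-\rho^B)+\hat\rho^B(\hat\rho^A-\rho^A)$, this strictly exceeds the $\beta^1$ term $\rho^A(\hat\rho^B-\rho^B)$ of \eqref{loss31}. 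Moreover, the excess does not vanish for forward-synchronous requests, whereas every $C$-term in \eqref{loss31} does. That norm-wise route is essentially the one behind the different bound of Proposition~\ref{T32}. Your explanation of the exponent split at $t_i$ via ``$\hat H_j$ computed on corrected trajectories'' does not supply the missing factor.

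The missing idea is to keep, for each deleted $s$, the full matrix coefficient $D_j(t_k,s)$ of $\Delta_s$ in $e_{t_k}$. You must expand every $\bar\Delta_{t_a}$ through \eqref{bardelta} before taking any norm. This coefficient obeys $D_{j+1}(t_k,s)=D_j(t_k,s)-D_j(t_k,t_{k-j-1})\prod_{\substack{a=s+1\\ a\notin S_{\le t_{k-j-1}}}}^{t_{k-j-1}}\hat P_a$ (Lemma~\ref{lCC}). Already at the first level, the $\hat P$-products over $[t_{k-1}+1,t_k]$ cancel exactly, and
\[
D_1(t_k,s)=\prod_{\substack{a=t_{k-1}+1\\ a\notin S_{\le t_k}}}^{t_k}\tilde P_a\Bigl(\prod_{\substack{a=s+1\\ a\notin S_{\le t_k}}}^{t_{k-1}}\tilde P_a-\prod_{\substack{a=s+1\\ a\notin S_{\le t_k}}}^{t_{k-1}}\hat P_a\Bigr)+D_0(t_k,t_{k-1})\Bigl(\prod_{\substack{a=s+1\\ a\notin S_{\le t_k}}}^{t_{k-1}}\hat P_a-\prod_{\substack{a=s+1\\ a\notin S_{\le t_{k-1}}}}^{t_{k-1}}\hat P_a\Bigr).
\]
The first piece gives $\rho^A(\hat\rho^B-\rho^B)$, and this cancellation is what splits the exponent at $t_i$.

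The second piece is the real source of asynchrony. $\bar\Delta_{t_{k-1}}$ used products skipping $S_{\le t_{k-1}}$, while at $t_k$ they must skip $S_{\le t_k}$. Bounding this deletion-set mismatch needs its own lemma: replace each newly deleted $\hat P_j$ by $I$, use $\|I-\hat P_j\|=\|(\hat H_j+\lambda I)^{-1}\hat H_j\|\le\hat\kappa$, and sum the geometric series. That produces exactly $\frac{\mu+\lambda-\nu}{\mu-\nu}\hat\kappa\,\hat\rho^{\,t_{k-1}-s-n^k_{[s+1,t_{k-1}]}}\bigl(1-\hat\rho^{\,n^{k}_{[s+1,t_{k-1}]}-n^{k-1}_{[s+1,t_{k-1}]}}\bigr)$.

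Your Step 4 instead reads this factor off \eqref{ccc} and attributes it to $\hat H_j,\Delta_j$ being computed on other trajectories. That discrepancy is already absorbed in the $M-\mu+\nu$ of Step 3.

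For general $j$ you further need the algebraic identity of Lemma~\ref{lemmaD9}. It writes $D_{j+1}$ as a term $D'_j$ of the same shape plus mismatch terms $D_{j-i}(t_k,t_{k-j-1})\bigl(\prod_{a\notin S_{\le t_{k-j+i}}}\hat P_a-\prod_{a\notin S_{\le t_{k-j+i-1}}}\hat P_a\bigr)$, with products over $[s+1,t_{k-j-1}]$. You then need an induction on $j$ that splits $1-\alpha^{t_{k-j+1}-t_{k-j-1}-\cdots}$ at $t_{k-j}$ to recover the recursion \eqref{ccc}.
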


Proposition \ref{lll3} shows that, for a task unlearned at time $t_i$, if at a later time $t_{i+1}$ the system receives a new unlearning request for a task between $s$ and $t_{i}$ (e.g., $n^{i+1}_{ [s+1,t_i]}>n^{i}_{ [s+1,t_i]}$), the additional error term is incurred in $
C_{k-i}^{k-i}(t_k,t_i,s).$ If further unlearning requests arrive between $s$ and $t_i$, the parameter $C_{k-i}^{l}(t_k,t_i,s)$ will continue to grow with $l$ according to the recurrence of $C$ defined in (\ref{ccc}). 

When we use the exact Hessian in Alg. \ref{A3}, i.e., $\nu=0$, the result above becomes the following bound obtained by taking the limit with $\nu\rightarrow0$.

\begin{corollary}\label{Clll3}
    Choose \(\lambda>\max\{0,-\mu\}\) in \eqref{A1} and Alg.~\ref{A3}.  When $\nu=0$, the approximation error $\|w_t^{-S_{\le t}}-w_t^{-\mathbf{S}_{1:t}}\|$ of unlearning algorithm $\mathcal{R}_{\mathcal{A}}$ in \eqref{bardelta} is bounded by $\gamma_t(\mathbf{S}_{1:t})$ below:
\begin{align}
\gamma_t(\mathbf{S}_{1:t})\nonumber:=&\bigg[\sum_{s\in S_{t_k}}\rho^{t_k\!-s\!-n_{ [s+1,t_k]}^k}(t_k\!-s\!-n_{ [s+1,t_k]}^k)+\sum_{i=1}^{k-1}\sum_{s\in S_{t_i}}\rho^{t_k\!-s\!-n_{ [s+1,t_k]}^k}\bigg((t_i\!-s\!-n_{ [s+1,t_i]}^k)\nonumber\\
        &+\sum_{x=1}^{k-i}(t_{k-x+1}-t_{k-x}-n^k_{[t_{k-x}+1,t_{k-x+1}]})C_{k-i}^{x}(t_k,t_{{i}},s)\bigg)\bigg] \frac{L(M-\mu)}{\lambda(\mu+\lambda)}\rho^{t-t_k},
    \end{align}
    where $C_{k-i}^x(t_k,t_i,s)$ follows the following iteration:
    \begin{align}
        &C_{k-i}^x(t_k,t_{i},s)\nonumber\\=&\begin{cases}&C_{k-i-1}^x(t_k,t_{{i+1}},t_i)\frac{\mu+\lambda}{\mu}\kappa\rho^{n^{k}_{ [s+1,t_i]}-n^{i+1}_{ [s+1,t_i]}}(1-\rho^{n^{i+1}_{ [s+1,t_i]}-n^{i}_{ [s+1,t_i]}})+C_{k-i-1}^x(t_k,t_{i},s),\;\;x=1,\dots,k-i-1,\nonumber\\
        &C_{k-i-1}^{x}(t_k,t_{{i+1}},t_i)\frac{\mu+\lambda}{\mu}\kappa\rho^{n^{k}_{ [s+1,t_i]}-n^{i+1}_{ [s+1,t_i]}}(1-\rho^{n^{i+1}_{ [s+1,t_i]}-n^{i}_{ [s+1,t_i]}})+C_{k-i-1}^{x-1}(t_k,t_{i},s),\;\;x=k-i,
        \end{cases},\\
        &C_1^1(t_k,y,s)=\frac{\mu+\lambda}{\mu}\kappa(1-\rho^{n^k_{[s,y]}-n^{k-1}_{[s,y]}}) \text{  and  }C_m^{m+1}(\cdot)=1.
    \end{align}
\end{corollary}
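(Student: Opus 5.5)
The plan is to obtain Corollary~\ref{Clll3} as the limit $\nu\to0$ of the bound \eqref{loss31} in Proposition~\ref{lll3}. This is legitimate because the proof of Proposition~\ref{lll3} holds for every $\nu>0$ with $\lambda>\max\{0,\nu-\mu\}$. When the exact Hessian is used, Assumption~\ref{assum4} holds for every $\nu>0$. Hence for fixed $\lambda>\max\{0,-\mu\}$ the approximation error is bounded by $\gamma_t^{(\nu)}(\mathbf S_{1:t})$ for all sufficiently small $\nu>0$ (in particular $\nu<\mu+\lambda$), and therefore by $\lim_{\nu\to0^+}\gamma^{(\nu)}_t$. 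The constants behave as $\hat\rho\to\rho$, $\alpha=\rho/\hat\rho\to1$, and $\hat\kappa\to\kappa$. Also $\frac{\mu+\lambda-\nu}{\mu-\nu}\to\frac{\mu+\lambda}{\mu}$ (assuming $\mu\neq0$). Applying these to \eqref{ccc} gives the recursion for $C_{k-i}^x$ stated in the corollary, with all $C$'s continuous in $\nu$.

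The one genuinely singular piece is the prefactor $\frac{L(M-\mu+\nu)}{\nu\lambda}$, which blows up. It multiplies brackets that vanish as $\nu\to0$, so the key step is a first-order expansion in $\nu$ of each bracket. First, $\hat\rho^{m}-\rho^{m}$ expands as
\[
\hat\rho^{m}-\rho^{m}=m\rho^{m-1}\frac{\partial\hat\rho}{\partial\nu}\Big|_{\nu=0}\nu+o(\nu)=m\rho^{m}\frac{\nu}{\mu+\lambda}+o(\nu),
\]
where we used $\partial_\nu \hat\rho=\lambda/(\lambda+\mu-\nu)^2$. Second, $1-\alpha^{m}=1-(\rho/\hat\rho)^m$ expands as
\[
1-\alpha^{m}=m\frac{\nu}{\mu+\lambda}+o(\nu),
\]
because $\rho/\hat\rho=(\lambda+\mu-\nu)/(\lambda+\mu)$. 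Each term with a prefactor $\alpha^{(\cdot)}$ or $\hat\rho^{(\cdot)}$ in front of such a bracket then contributes its $\nu=0$ value times $m\nu/(\mu+\lambda)$.

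Multiplying these expansions by $\frac{L(M-\mu+\nu)}{\nu\lambda}$ gives the common limit factor $\frac{L(M-\mu)}{\lambda(\mu+\lambda)}$. The $S_{t_k}$ term yields $\rho^{t_k-s-n^k_{[s+1,t_k]}}(t_k-s-n^k_{[s+1,t_k]})$. The $i<k$ term $\hat\rho^{t_k-s-n^k}\alpha^{(\cdot)}(1-\alpha^{t_i-s-n^k_{[s+1,t_i]}})$ yields $\rho^{t_k-s-n^k_{[s+1,t_k]}}(t_i-s-n^k_{[s+1,t_i]})$. Each $x$-term yields $\rho^{t_k-s-n^k}(t_{k-x+1}-t_{k-x}-n^k_{[t_{k-x}+1,t_{k-x+1}]})C^x_{k-i}$ evaluated at $\nu=0$. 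Together with the overall factor $\rho^{t-t_k}$, this reproduces the displayed $\gamma_t$.

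The main obstacle is justifying the limit exchange: each term is a product of finitely many functions continuous in $\nu$ near $0$, times a ratio of the form $(\text{bracket})/\nu$. This is handled by noting that there are finitely many terms, so the limit of the sum is the sum of the limits. Each ratio is a difference quotient with a well-defined derivative, since the exponents are fixed integers. Checking that the $C$ recursion passes to the limit term by term (continuity of $\hat\kappa$ and of the $\hat\rho$ powers) is routine. If a direct proof were preferred instead, one could rerun the argument of Proposition~\ref{lll3} with $\nu=0$. There the geometric sums $\sum\hat\rho^{j}\rho^{m-1-j}$ that produced $(\hat\rho^m-\rho^m)/(\hat\rho-\rho)$ become $m\rho^{m-1}$, which gives the same linear-in-$m$ factors.
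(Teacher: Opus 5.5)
Your proposal is correct and follows the paper's route: the paper obtains Corollary~\ref{Clll3} by letting $\nu\to0$ in Proposition~\ref{lll3}. Your first-order expansions of $\hat\rho^{m}-\rho^{m}$ and $1-(\rho/\hat\rho)^{m}$ recover exactly the prefactor $\frac{L(M-\mu)}{\lambda(\mu+\lambda)}$, the linear-in-exponent factors, and the $\nu=0$ recursion for $C^{x}_{k-i}$. Your justification that the limit is legitimate is rigor the paper leaves implicit: with exact curvature, Assumption~\ref{assum4} holds for every small $\nu>0$ while the approximation error does not depend on $\nu$. Your alternative direct rerun at $\nu=0$ is also consistent with the $\nu=0$ cases already recorded in Lemmas~\ref{inter2} and~\ref{inter}.
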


\begin{proof}[Proof of Proposition~\ref{lll3}]
By Lemma~\ref{lemmaaS}, for any time \(t\) after the \(k\)-th unlearning
request and before the next unlearning request, we have
\(S_{\le t}=S_{\le t_k}\) and
\(\mathbf S_{1:t}=\mathbf S_{1:t_k}\). Hence,
\begin{align*}
\left\|w_t^{-S_{\le t}}-w_t^{-\mathbf S_{1:t}}\right\|
&=
\left\|
\prod_{a=t_k+1}^t \tilde P_a
\bigl(
w_{t_k}^{-S_{\le t_k}}
-
w_{t_k}^{-\mathbf S_{1:t_k}}
\bigr)
\right\| \\
&\le
\rho^{t-t_k}
\left\|
w_{t_k}^{-S_{\le t_k}}
-
w_{t_k}^{-\mathbf S_{1:t_k}}
\right\|.
\end{align*}
Therefore, it remains to bound
\(\|w_{t_k}^{-S_{\le t_k}}-w_{t_k}^{-\mathbf S_{1:t_k}}\|\).

By Lemma~\ref{lCC},
\[
w_{t_k}^{-S_{\le t_k}}-w_{t_k}^{-\mathbf S_{1:t_k}}
=
\sum_{i=0}^{k-1}\sum_{s\in S_{t_{k-i}}}D_i(t_k,s)\Delta_s .
\]
Moreover, the first-order condition of \eqref{A1} gives $\lambda\bigl(
w_{s-1}^{-\mathbf S_{1:s-1}}
-
w_s^{-\mathbf S_{1:s-1}}
\bigr)
=
\nabla\hat F_s(w_s^{-\mathbf S_{1:s-1}}),$
and the \(L\)-Lipschitz condition implies
\(\|\Delta_s\|\le L/\lambda\). Thus,
\begin{align}
\left\|
w_{t_k}^{-S_{\le t_k}}
-
w_{t_k}^{-\mathbf S_{1:t_k}}
\right\|
\le
\frac{L}{\lambda}
\sum_{i=0}^{k-1}\sum_{s\in S_{t_{k-i}}}
\|D_i(t_k,s)\|.
\label{eq:D-sum-bound}
\end{align}

Thus, it suffices to prove
\begin{align}
\|D_0(t_k,s)\|
&\le
\frac{M-\mu+\nu}{\nu}
\left(
\hat\rho^{t_k-s-n^k_{[s+1,t_k]}}
-
\rho^{t_k-s-n^k_{[s+1,t_k]}}
\right)
\label{D00}
\end{align}
and
\begin{align}
\|D_j(t_k,s)\|
\le&
\frac{M-\mu+\nu}{\nu}
\hat\rho^{t_k-s-n^k_{[s+1,t_k]}}
\bigg[
\sum_{x=1}^{j}
\left(\frac{\rho}{\hat\rho}\right)^{t_k-t_{k-x+1}-n^k_{[t_{k-x+1}+1,t_k]}}
\nonumber\\
&\qquad\cdot
\left(
1-
\left(\frac{\rho}{\hat\rho}\right)^{t_{k-x+1}-t_{k-x}-n^k_{[t_{k-x}+1,t_{k-x+1}]}}
\right)
C_j^x(t_k,t_{k-j},s)
\nonumber\\
&\qquad+
\left(\frac{\rho}{\hat\rho}\right)^{t_k-t_{k-j}-n^k_{[t_{k-j}+1,t_k]}}
\left(
1-
\left(\frac{\rho}{\hat\rho}\right)^{t_{k-j}-s-n^k_{[s+1,t_{k-j}]}}
\right)
C_j^{j+1}(t_k,t_{k-j},s)
\bigg]
\label{eq:induction-clean}
\end{align}
for \(j=1,\dots,k-1\) and any index \(s\le t_{k-j}\) for which the
products below are defined.

~\eqref{D00} can be directly obtained by Lemma~\ref{inter2} and Lemma~\ref{lCC}.

For \(j=1\), Lemma~\ref{lemmaD9} gives
\[
D_1(t_k,s)
=
D'_0(t_k,s)
+
D_0(t_k,t_{k-1})
\left(
\prod_{\substack{a=s+1\\ a\notin S_{\le t_k}}}^{t_{k-1}}\hat P_a
-
\prod_{\substack{a=s+1\\ a\notin S_{\le t_{k-1}}}}^{t_{k-1}}\hat P_a
\right),
\]
where
\[
D'_0(t_k,s)
=
D_0(t_k,s)
-
D_0(t_k,t_{k-1})
\prod_{\substack{a=s+1\\ a\notin S_{\le t_k}}}^{t_{k-1}}\hat P_a .
\]
Using the product decomposition convention,
\[
D'_0(t_k,s)
=
\prod_{\substack{a=t_{k-1}+1\\ a\notin S_{\le t_k}}}^{t_k}
\tilde P_a
\left(
\prod_{\substack{a=s+1\\ a\notin S_{\le t_k}}}^{t_{k-1}}\tilde P_a
-
\prod_{\substack{a=s+1\\ a\notin S_{\le t_k}}}^{t_{k-1}}\hat P_a
\right).
\]
Therefore, by Lemma~\ref{inter2},
\begin{align*}
\|D'_0(t_k,s)\|
\le
\frac{M-\mu+\nu}{\nu}
\rho^{t_k-t_{k-1}-n^k_{[t_{k-1}+1,t_k]}}
\left(
\hat\rho^{t_{k-1}-s-n^k_{[s+1,t_{k-1}]}}
-
\rho^{t_{k-1}-s-n^k_{[s+1,t_{k-1}]}}
\right).
\end{align*}
Also, by \eqref{D00} and Lemma~\ref{inter},
\begin{align*}
\left\|
D_0(t_k,t_{k-1})
\left(
\prod_{\substack{a=s+1\\ a\notin S_{\le t_k}}}^{t_{k-1}}\hat P_a
-
\prod_{\substack{a=s+1\\ a\notin S_{\le t_{k-1}}}}^{t_{k-1}}\hat P_a
\right)
\right\|
&\le
\frac{M-\mu+\nu}{\nu}
\left(
\hat\rho^{t_k-t_{k-1}-n^k_{[t_{k-1}+1,t_k]}}
-
\rho^{t_k-t_{k-1}-n^k_{[t_{k-1}+1,t_k]}}
\right)
\\
&\qquad\cdot
\frac{\mu+\lambda-\nu}{\mu-\nu}
\hat\kappa
\hat\rho^{t_{k-1}-s-n^k_{[s+1,t_{k-1}]}}
\left(
1-\hat\rho^{n^k_{[s+1,t_{k-1}]}-n^{k-1}_{[s+1,t_{k-1}]}}
\right).
\end{align*}
Combining the last two displays yields \eqref{eq:induction-clean} for
\(j=1\), with
\[
C_1^2(t_k,t_{k-1},s)=1,\qquad
C_1^1(t_k,t_{k-1},s)=
\frac{\mu+\lambda-\nu}{\mu-\nu}
\hat\kappa
\left(
1-\hat\rho^{n^k_{[s+1,t_{k-1}]}-n^{k-1}_{[s+1,t_{k-1}]}}
\right).
\]

Assume now that \eqref{eq:induction-clean} holds for some
\(j=1,\dots,k-2\). By Lemma~\ref{lemmaD9},
\begin{align}
D_{j+1}(t_k,s)
=
D'_j(t_k,s)
+
D_j(t_k,t_{k-j-1})
\left(
\prod_{\substack{a=s+1\\ a\notin S_{\le t_{k-j}}}}^{t_{k-j-1}}\hat P_a
-
\prod_{\substack{a=s+1\\ a\notin S_{\le t_{k-j-1}}}}^{t_{k-j-1}}\hat P_a
\right),
\label{eq:Djplus}
\end{align}
where
\begin{align*}
D'_j(t_k,s)
=&
\sum_{x=1}^{j}
D_{j-x}(t_k,t_{k-j-1})
\left(
\prod_{\substack{a=s+1\\ a\notin S_{\le t_{k-j+x}}}}^{t_{k-j-1}}\hat P_a
-
\prod_{\substack{a=s+1\\ a\notin S_{\le t_{k-j+x-1}}}}^{t_{k-j-1}}\hat P_a
\right)
\\
&+
D_0(t_k,s)
-
D_0(t_k,t_{k-j-1})
\prod_{\substack{a=s+1\\ a\notin S_{\le t_k}}}^{t_{k-j-1}}\hat P_a .
\end{align*}
This \(D'_j(t_k,s)\) has the same algebraic form as \(D_j(t_k,s)\)
\eqref{eq:induction-clean}, with \(t_{k-j}\) replaced by
\(t_{k-j-1}\). Hence the induction hypothesis gives the following with \(t_{k-j}\) replaced by
\(t_{k-j-1}\).
\begin{align}
\|D'_j(t_k,s)\|
\le&
\frac{M-\mu+\nu}{\nu}
\hat\rho^{t_k-s-n^k_{[s+1,t_k]}}
\bigg[
\sum_{x=1}^{j-1}
\left(\frac{\rho}{\hat\rho}\right)^{t_k-t_{k-x+1}-n^k_{[t_{k-x+1}+1,t_k]}}
\nonumber\\
&\qquad\cdot
\left(
1-
\left(\frac{\rho}{\hat\rho}\right)^{t_{k-x+1}-t_{k-x}-n^k_{[t_{k-x}+1,t_{k-x+1}]}}
\right)
C_j^x(t_k,t_{k-j-1},s)
\nonumber\\
&\qquad+
\left(\frac{\rho}{\hat\rho}\right)^{t_k-t_{k-j+1}-n^k_{[t_{k-j+1}+1,t_k]}}
\left(
1-
\left(\frac{\rho}{\hat\rho}\right)^{t_{k-j+1}-t_{k-j-1}-n^k_{[t_{k-j-1}+1,t_{k-j+1}]}}
\right)
C_j^j(t_k,t_{k-j-1},s)
\nonumber\\
&\qquad+
\left(\frac{\rho}{\hat\rho}\right)^{t_k-t_{k-j-1}-n^k_{[t_{k-j-1}+1,t_k]}}
\left(
1-
\left(\frac{\rho}{\hat\rho}\right)^{t_{k-j-1}-s-n^k_{[s+1,t_{k-j-1}]}}
\right)
C_j^{j+1}(t_k,t_{k-j-1},s)
\bigg].
\label{eq:Djprime-bound}
\end{align}
On the other hand, applying the induction hypothesis to
\(D_j(t_k,t_{k-j-1})\), and applying Lemma~\ref{inter} to the product
difference in \eqref{eq:Djplus}, gives the second contribution
\begin{align}
&\left\|
D_j(t_k,t_{k-j-1})
\left(
\prod_{\substack{a=s+1\\ a\notin S_{\le t_{k-j}}}}^{t_{k-j-1}}\hat P_a
-
\prod_{\substack{a=s+1\\ a\notin S_{\le t_{k-j-1}}}}^{t_{k-j-1}}\hat P_a
\right)
\right\|
\nonumber\\
&\le
\frac{M-\mu+\nu}{\nu}
\hat\rho^{t_k-s-n^k_{[s+1,t_k]}}
G_j(s)
\bigg[
\sum_{x=1}^{j}
\left(\frac{\rho}{\hat\rho}\right)^{t_k-t_{k-x+1}-n^k_{[t_{k-x+1}+1,t_k]}}
\nonumber\\
&\qquad\cdot
\left(
1-
\left(\frac{\rho}{\hat\rho}\right)^{t_{k-x+1}-t_{k-x}-n^k_{[t_{k-x}+1,t_{k-x+1}]}}
\right)
C_j^x(t_k,t_{k-j},t_{k-j-1})
\nonumber\\
&\qquad+
\left(\frac{\rho}{\hat\rho}\right)^{t_k-t_{k-j}-n^k_{[t_{k-j}+1,t_k]}}
\left(
1-
\left(\frac{\rho}{\hat\rho}\right)^{t_{k-j}-t_{k-j-1}-n^k_{[t_{k-j-1}+1,t_{k-j}]}}
\right)
C_j^{j+1}(t_k,t_{k-j},t_{k-j-1})
\bigg],
\label{eq:second-contribution}
\end{align}
where
\[
G_j(s):=
\frac{\mu+\lambda-\nu}{\mu-\nu}
\hat\kappa
\hat\rho^{n^k_{[s+1,t_{k-j-1}]}-n^{k-j}_{[s+1,t_{k-j-1}]}}
\left(
1-\hat\rho^{n^{k-j}_{[s+1,t_{k-j-1}]}-n^{k-j-1}_{[s+1,t_{k-j-1}]}}
\right).
\]
Combining \eqref{eq:Djprime-bound} and
\eqref{eq:second-contribution}, and using
\begin{align*}
&1-
\left(\frac{\rho}{\hat\rho}\right)^{t_{k-j+1}-t_{k-j-1}
-n^k_{[t_{k-j-1}+1,t_{k-j+1}]}}
\\
&=
1-
\left(\frac{\rho}{\hat\rho}\right)^{t_{k-j+1}-t_{k-j}
-n^k_{[t_{k-j}+1,t_{k-j+1}]}}
+
\left(\frac{\rho}{\hat\rho}\right)^{t_{k-j+1}-t_{k-j}
-n^k_{[t_{k-j}+1,t_{k-j+1}]}}
\left(
1-
\left(\frac{\rho}{\hat\rho}\right)^{t_{k-j}-t_{k-j-1}
-n^k_{[t_{k-j-1}+1,t_{k-j}]}}
\right),
\end{align*}
we obtain exactly \eqref{eq:induction-clean} with \(j\) replaced by
\(j+1\), provided that
\[
C_{j+1}^x(t_k,t_{k-j-1},s)
=
C_j^x(t_k,t_{k-j-1},s)
+
C_j^x(t_k,t_{k-j},t_{k-j-1})G_j(s),
\quad x=1,\dots,j,
\]
\[
C_{j+1}^{j+1}(t_k,t_{k-j-1},s)
=
C_j^j(t_k,t_{k-j-1},s)
+
C_j^{j+1}(t_k,t_{k-j},t_{k-j-1})G_j(s),
\]
and \(C_{j+1}^{j+2}=1\). This is precisely the recursion in
\eqref{ccc}. Therefore \eqref{eq:induction-clean} holds for all
\(j=1,\dots,k-1\).

Finally, take \(j=k-i\) in \eqref{eq:induction-clean}. Then for every
\(i=1,\dots,k-1\) and \(s\in S_{t_i}\),
\begin{align*}
\|D_{k-i}(t_k,s)\|
\le&
\frac{M-\mu+\nu}{\nu}
\hat\rho^{t_k-s-n^k_{[s+1,t_k]}}
\bigg[
\left(\frac{\rho}{\hat\rho}\right)^{t_k-t_i-n^k_{[t_i+1,t_k]}}
\left(
1-
\left(\frac{\rho}{\hat\rho}\right)^{t_i-s-n^k_{[s+1,t_i]}}
\right)
\\
&\qquad+
\sum_{x=1}^{k-i}
\left(\frac{\rho}{\hat\rho}\right)^{t_k-t_{k-x+1}-n^k_{[t_{k-x+1}+1,t_k]}}
\left(
1-
\left(\frac{\rho}{\hat\rho}\right)^{t_{k-x+1}-t_{k-x}
-n^k_{[t_{k-x}+1,t_{k-x+1}]}}
\right)
C_{k-i}^{x}(t_k,t_i,s)
\bigg],
\end{align*}
where we used \(C_{k-i}^{k-i+1}=1\). Combining this bound with
\eqref{D00}, then substituting into \eqref{eq:D-sum-bound}, and finally
multiplying by the stability factor \(\rho^{t-t_k}\), gives
\eqref{loss31}. This completes the proof.
\end{proof}

\begin{lemma}\label{inter2}
    Fix time $t$. Define
    \begin{align*}
        &\tilde H_i = \int_0^1 \nabla^2 \hat F_i\!\left(w_{i}^{-\mathbf{S}_{1:i-1}} + u\big(w_{i}^{-S_{\le t}\setminus\{i+1,\dots,t\}} - w_{i}^{-\mathbf{S}_{1:i-1}}\big)\right) du, \quad  
\tilde P_i = (\tilde H_i + \lambda I)^{-1}\lambda I, \quad  
\hat P_i = (\hat H_i + \lambda I)^{-1}\lambda I,\\
&P_{i}=(\nabla^2\hat F_{i}(w_{i}^{-\mathbf{S}_{1:i-1}})+\lambda I)^{-1}\lambda I,\;\;\forall i\in[t].
    \end{align*}
{Suppose all tasks between $s$ and $t$ have not yet been deleted.} Then we have
\begin{align*}
    \left\|\prod_{i=s}^t\tilde P_i-\prod_{i=s}^t\hat P_i\right\|\leq \frac{(M-\mu+\nu)}{\nu}(\hat{\rho}^{t-s+1}-\rho^{t-s+1}),\;\left\|\prod_{i=s}^t P_i-\prod_{i=s}^t\hat P_i\right\|\leq (\hat{\rho}^{t-s+1}-\rho^{t-s+1}).
\end{align*}
When $\nu=0$, the inequality above becomes
\begin{align*}
    \left\|\prod_{i=s}^t\tilde P_i-\prod_{i=s}^t\hat P_i\right\|\leq (t-s+1)\rho^{t-s+1} \frac{M-\mu}{\mu+\lambda},\;\left\|\prod_{i=s}^t P_i-\prod_{i=s}^t\hat P_i\right\|={0}.
\end{align*}
\end{lemma}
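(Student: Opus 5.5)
The plan is a telescoping argument on the matrix products, combined with per-factor perturbation bounds. For any two sequences of matrices $A_i,B_i$, write
\[
\prod_{i=s}^t A_i-\prod_{i=s}^t B_i=\sum_{j=s}^t\Bigl(\prod_{i=j+1}^t A_i\Bigr)(A_j-B_j)\Bigl(\prod_{i=s}^{j-1}B_i\Bigr),
\]
using the right-to-left ordering convention of Appendix~\ref{app:notation}. Taking norms gives
\[
\Bigl\|\prod A_i-\prod B_i\Bigr\|\le\sum_{j=s}^t \|A\|_{\max}^{t-j}\,\|A_j-B_j\|\,\|B\|_{\max}^{j-s}.
\]
Here $A_i$ will be $\tilde P_i$ or $P_i$ and $B_i=\hat P_i$. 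Since no task in $[s,t]$ is deleted, each $\tilde H_i$ is an average of Hessians over a segment inside the convex ball $\mathcal W$ (Assumption~\ref{Assum2}). Assumption~\ref{Assum1} therefore gives $\mu I\preceq\tilde H_i, \nabla^2\hat F_i\preceq MI$, and Assumption~\ref{assum4} gives $(\mu-\nu)I\preceq\hat H_i\preceq(M+\nu)I$. With $\lambda>\nu-\mu$, these yield
\[
\|\tilde P_i\|,\ \|P_i\|\le\rho,\qquad \|\hat P_i\|\le\hat\rho,
\]
treating the Hessians as symmetric, which I would assume for the approximation as well.

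The key per-factor estimate comes from the resolvent identity
\[
(X+\lambda I)^{-1}-(Y+\lambda I)^{-1}=(X+\lambda I)^{-1}(Y-X)(Y+\lambda I)^{-1}.
\]
It gives
\[
\|A_j-\hat P_j\|\le \lambda\cdot\frac{1}{\mu+\lambda}\cdot\|X-\hat H_j\|\cdot\frac{1}{\mu-\nu+\lambda}=\frac{\rho\hat\rho}{\lambda}\,\|X-\hat H_j\|.
\]
For $P_j$, the gap $\|\nabla^2\hat F_j(w_j^{-\mathbf S_{1:j-1}})-\hat H_j\|$ is at most $\nu$, because both are evaluated at the same point $w_j^{-\mathbf S_{1:j-1}}$. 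For $\tilde P_j$, the Hessians are at different points, so only a crude bound is available. Writing $\tilde H_j-\hat H_j=(\tilde H_j-\nabla^2\hat F_j)+(\nabla^2\hat F_j-\hat H_j)$, the first piece is at most $M-\mu$ (both matrices lie in $[\mu,M]$) and the second at most $\nu$. The total is $M-\mu+\nu$.

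Plugging into the telescoping sum gives a geometric sum:
\[
\sum_{j=s}^t\rho^{t-j}\cdot\frac{\rho\hat\rho}{\lambda}c\cdot\hat\rho^{j-s}=\frac{c\,\rho\hat\rho}{\lambda}\sum_{m=0}^{n-1}\rho^{n-1-m}\hat\rho^{m},\qquad n=t-s+1.
\]
The finite sum equals $(\hat\rho^n-\rho^n)/(\hat\rho-\rho)$. One then checks
\[
\hat\rho-\rho=\frac{\lambda\nu}{(\lambda+\mu)(\lambda+\mu-\nu)}=\frac{\nu\,\rho\hat\rho}{\lambda}.
\]
The prefactor therefore collapses to $c/\nu$. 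This yields $(M-\mu+\nu)(\hat\rho^n-\rho^n)/\nu$ for the $\tilde P$ bound and exactly $\hat\rho^n-\rho^n$ for the $P$ bound, with $c=\nu$ in the latter. The exponent is $n=t-s+1$, since the product runs from $s$ to $t$; this matches the statement.

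For $\nu=0$, we have $\hat\rho=\rho$ and $\hat P_i=P_i$ exactly, since the approximation is exact at the same point. The second difference is therefore $0$. For the first, each term of the geometric sum equals $\rho^{n-1}$, giving
\[
n\rho^{n-1}\cdot\frac{\rho^2}{\lambda}(M-\mu)=n\rho^n\,\frac{M-\mu}{\mu+\lambda},
\]
as claimed. Equivalently, this is the limit $\nu\to0$ of $(\hat\rho^n-\rho^n)/\nu$.

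The main obstacle is bounding $\|\tilde H_j-\hat H_j\|$ by $M-\mu+\nu$ rather than something larger. The naive bound $\|\tilde H_j\|+\|\hat H_j\|$ is too large in the nonconvex case. Instead I would use that both $\tilde H_j$ and $\nabla^2\hat F_j$ have spectra in $[\mu,M]$, so their difference is sandwiched between $-(M-\mu)I$ and $(M-\mu)I$. Making this rigorous requires the almost-everywhere twice-differentiability to hold along the segment, which Assumption~\ref{Assum1} grants up to measure-zero sets that do not affect the integral. A secondary point is that $\hat P_i$ need not be symmetric if the approximation is not. I would assume $\hat H_i$ is symmetric, as it is for the diagonal and Gauss--Newton approximations, so that $\|\hat P_i\|\le\hat\rho$ holds.
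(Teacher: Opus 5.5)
Your proposal is correct and follows essentially the paper's proof: telescope the product difference, bound each factor by the resolvent identity as $\|\tilde P_j-\hat P_j\|\le \lambda(M-\mu+\nu)/((\mu+\lambda)(\mu+\lambda-\nu))$ (with $\nu$ in place of $M-\mu+\nu$ for $P_j$), and collapse the geometric sum using $\hat\rho-\rho=\nu\rho\hat\rho/\lambda$. The differences are cosmetic. You put the $\tilde P$ factors on the left of each telescoped term, whereas the paper puts the $\hat P$ factors there; this produces the same symmetric sum. You also make explicit two points the paper uses without comment: the $M-\mu+\nu$ estimate, obtained by passing through $\nabla^2\hat F_j(w_j^{-\mathbf S_{1:j-1}})$, and the symmetry of $\hat H_j$.
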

\begin{proof}[Proof of Lemma \ref{inter2}]
    By telescoping, we have
    \begin{align}
        \left\|\prod_{i=s}^t\tilde P_i-\prod_{i=s}^t \hat P_i\right\|
        =\left\|(\tilde P_t-\hat P_t)\prod_{i=s}^{t-1}\tilde P_i
        + \hat P_t(\tilde P_{t-1}-\hat P_{t-1})\prod_{i=s}^{t-2}\tilde P_i
        +\cdots+\prod_{i=s+1}^{t}\hat  P_i(\tilde P_s-\hat P_s)\right\|.
        \label{eqq6}
    \end{align}
    By Assumptions \ref{Assum1} and \ref{Assum2}, $\|\tilde P_i\|$ and $\|\hat P_i\|$ are upper bounded by $\rho$ and $\hat \rho$, respectively. Moreover, for each $i$,
    \begin{align*}
        \|\tilde P_i-\hat P_i\|
        &=\lambda\left\|(\tilde H_i+\lambda I)^{-1}(\hat H_i-\tilde H_i)(\hat H_i+\lambda I)^{-1}\right\|  \\
        &\leq \frac{\lambda(M-\mu+\nu)}{(\mu+\lambda)(\mu+\lambda-\nu)}.
    \end{align*}
    Applying these bounds in \eqref{eqq6}, we obtain
    \begin{align*}
        \left\|\prod_{i=s}^t\tilde P_i-\prod_{i=s}^t \hat P_i\right\|
        \leq &\rho^{t-s}(1+\hat{\rho}/\rho+\cdots +(\hat{\rho}/\rho)^{t-s})
        \frac{\lambda(M-\mu+\nu)}{(\mu+\lambda)(\mu+\lambda-\nu)}\\
        =&\rho^{t-s}\frac{1-(\hat{\rho}/\rho)^{t-s+1}}{1-\hat{\rho}/\rho}
        \frac{\lambda(M-\mu+\nu)}{(\mu+\lambda)(\mu+\lambda-\nu)}\\
        =&\frac{\rho^{-1}}{1-\hat{\rho}/\rho}
        \frac{\lambda(M-\mu+\nu)}{(\mu+\lambda)(\mu+\lambda-\nu)}
        (\rho^{t-s+1}-\hat{\rho}^{t-s+1})\\
        =&\frac{(\lambda+\mu)/\lambda}{1-(\lambda+\mu)/((\lambda+\mu)-\nu)}
        \frac{\lambda(M-\mu+\nu)}{(\mu+\lambda)(\mu+\lambda-\nu)}
        (\rho^{t-s+1}-\hat{\rho}^{t-s+1})\\
        =&\frac{M-\mu+\nu}{\nu}
        (\hat{\rho}^{t-s+1}-\rho^{t-s+1}).
    \end{align*}
    The bound for $\left\|\prod_{i=s}^t P_i-\prod_{i=s}^t\hat P_i\right\|$ follows analogously, with the only difference being
    \[
        \| P_i-\hat P_i\|\leq \frac{\lambda\nu}{(\mu+\lambda)(\mu+\lambda-\nu)}.
    \]
    When $\nu=0$, we have $\hat{\rho}=\rho$, and the first line of the preceding bound gives
    \begin{align*}
        \left\|\prod_{i=s}^t\tilde P_i-\prod_{i=s}^t \hat P_i\right\|
        \leq \rho^{t-s}(t-s+1)\frac{\lambda(M-\mu)}{(\mu+\lambda)^2}
        =\rho^{t-s+1}(t-s+1)\frac{M-\mu}{\mu+\lambda}.
    \end{align*}
   Moreover, since $\hat{H}_i=\nabla^2\hat F_{i}(w_{i}^{-\mathbf{S}_{1:i-1}})$ when $\nu=0$, we have
   \[
        \left\|\prod_{i=s}^t P_i-\prod_{i=s}^t\hat P_i\right\|=0.
   \]
   This completes the proof.
\end{proof}

\begin{lemma}\label{inter}
Define $\hat P_j = (\hat H_j + \lambda I)^{-1}\lambda I$, and $\hat{\kappa}:=\max\left\{\frac{M+\nu}{M+\lambda+\nu},\frac{|\mu-\nu|}{\mu-\nu+\lambda}\right\}$. Suppose
\(\lambda>\max\{0,\nu-\mu\}\) and $k>i$. Then we have the following for any $\tau>s$, $\tau,s=1\dots,t$:
\begin{align*}
&\left\|
\prod_{\substack{j=s+1 \\ j \notin S_{\leq t_k}}}^{\tau}\hat P_j
-
\prod_{\substack{j=s+1 \\ j \notin S_{\leq t_i}}}^{\tau}\hat P_j
\right\|\leq
\frac{\mu+\lambda-\nu}{\mu-\nu}
\hat{\kappa}\hat\rho^{{\tau}-s-n^{k}_{[s+1,{\tau}]}}
\left(
1-\hat\rho^{n^{k}_{[s+1,{\tau}]}-n^{i}_{[s+1,{\tau}]}}
\right).
\end{align*}
When $\nu=0$, we have 
\begin{align*}
&\left\|
\prod_{\substack{j=s+1 \\ j \notin S_{\leq t_k}}}^{\tau}\hat P_j
-
\prod_{\substack{j=s+1 \\ j \notin S_{\leq t_i}}}^{\tau}\hat P_j
\right\|\leq
\frac{\mu+\lambda}{\mu}
\kappa\rho^{{\tau}-s-n^{k}_{[s+1,{\tau}]}}
\left(
1-\rho^{n^{k}_{[s+1,{\tau}]}-n^{i}_{[s+1,{\tau}]}}
\right).
\end{align*}
Moreover, the same bound holds for the exact-curvature products:
\begin{align*}
    &\left\|
\prod_{\substack{j=s+1 \\ j \notin S_{\leq t_k}}}^{\tau} P_j
-
\prod_{\substack{j=s+1 \\ j \notin S_{\leq t_i}}}^{\tau}P_j
\right\|\leq
\frac{\mu+\lambda}{\mu}
\kappa\rho^{{\tau}-s-n^{k}_{[s+1,{\tau}]}}
\left(
1-\rho^{n^{k}_{[s+1,{\tau}]}-n^{i}_{[s+1,{\tau}]}}
\right)\\
&\left\|
\prod_{\substack{j=s+1 \\ j \notin S_{\leq t_k}}}^{\tau}\tilde P_j
-
\prod_{\substack{j=s+1 \\ j \notin S_{\leq t_i}}}^{\tau}\tilde P_j
\right\|\leq
\frac{\mu+\lambda}{\mu}
\kappa\rho^{{\tau}-s-n^{k}_{[s+1,{\tau}]}}
\left(
1-\rho^{n^{k}_{[s+1,{\tau}]}-n^{i}_{[s+1,{\tau}]}}
\right).
\end{align*}
\end{lemma}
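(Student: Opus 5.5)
The plan is a telescoping argument that inserts the extra factors one at a time. Fix $\tau>s$ and $k>i$. Since deletion sets only grow, $S_{\le t_i}\subseteq S_{\le t_k}$. Let $A:=\{j\in[s+1,\tau]: j\notin S_{\le t_k}\}$ and $B:=\{j\in[s+1,\tau]: j\in S_{\le t_k}\setminus S_{\le t_i}\}$. Then $|A|=\tau-s-n^k_{[s+1,\tau]}=:N$ and $|B|=n^k_{[s+1,\tau]}-n^i_{[s+1,\tau]}=:m$.

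The first product (over $A$) and the second product (over $A\cup B$) are both right-to-left ordered by index. So the second is obtained from the first by inserting the factors $\hat P_j$, $j\in B$, at their ordered positions. Enumerate $B=\{b_1<\dots<b_m\}$. Let $Q_\ell$ be the ordered product over $A\cup\{b_1,\dots,b_\ell\}$, so that $Q_0$ is the first product and $Q_m$ the second. Then
\[
Q_0-Q_m=\sum_{\ell=1}^m (Q_{\ell-1}-Q_\ell).
\]
Each $Q_{\ell-1}$ splits as $Q_{\ell-1}=L_\ell R_\ell$, where $L_\ell$ collects the factors with index $>b_\ell$ and $R_\ell$ those with index $<b_\ell$. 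Then $Q_\ell=L_\ell\hat P_{b_\ell}R_\ell$, and hence
\[
Q_{\ell-1}-Q_\ell=L_\ell\,(I-\hat P_{b_\ell})\,R_\ell .
\]

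Next come the single-factor bounds. By Assumptions~\ref{Assum1} and~\ref{assum4}, the (symmetric) approximate Hessian $\hat H_j$ has spectrum in $[\mu-\nu,M+\nu]$. Because $\lambda>\nu-\mu$, $\hat H_j+\lambda I$ is positive definite. Consequently $\|\hat P_j\|\le \lambda/(\lambda+\mu-\nu)=\hat\rho$. Also
\[
\|I-\hat P_j\|=\|(\hat H_j+\lambda I)^{-1}\hat H_j\|=\max_h |h|/(h+\lambda)\le\hat\kappa,
\]
since $h\mapsto h/(h+\lambda)$ is monotone on the spectrum and its extreme values are attained at the endpoints $M+\nu$ and $\mu-\nu$. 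The factors $L_\ell$ and $R_\ell$ together contain $N+\ell-1$ matrices, so
\[
\|Q_{\ell-1}-Q_\ell\|\le \hat\kappa\,\hat\rho^{N+\ell-1}.
\]

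Summing over $\ell$ gives
\[
\hat\kappa\,\hat\rho^{N}\sum_{\ell=0}^{m-1}\hat\rho^\ell=\hat\kappa\,\hat\rho^N\frac{1-\hat\rho^m}{1-\hat\rho},
\]
and
\[
\frac{1}{1-\hat\rho}=\frac{\lambda+\mu-\nu}{\mu-\nu},
\]
which is exactly the claimed bound. This uses $\mu\neq\nu$, the nondegenerate case assumed in the paper. If $\mu<\nu$, then $\hat\rho>1$, but the geometric-sum identity still holds and both the numerator and the denominator are negative, so the bound remains valid and nonnegative.

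For the remaining statements, the case $\nu=0$ is the same computation with $\hat\rho=\rho$ and $\hat\kappa=\kappa$. For $P_j$ and $\tilde P_j$, the same argument applies with $\nu=0$: the exact Hessian $\nabla^2\hat F_j$ and the averaged Hessian $\tilde H_j=\int_0^1\nabla^2\hat F_j(\cdot)\,du$ are symmetric with spectrum in $[\mu,M]$, by Assumption~\ref{Assum1} and the convexity of $\mathcal W$, which keeps the integration segment inside the ball.

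The main obstacle is bookkeeping rather than analysis. One must check that the second product is genuinely the first one with the $B$-factors inserted in index order, and that the exponent count $N+\ell-1$ is correct. The only analytic points are the spectral bound for $\|I-\hat P_j\|$, which relies on the symmetry of $\hat H_j$ (an assumption I would state explicitly), and the sign issue when $\mu<\nu$ discussed above.
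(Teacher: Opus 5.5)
Your argument is correct and is essentially the paper's proof. Both telescope over the $m=n^{k}_{[s+1,\tau]}-n^{i}_{[s+1,\tau]}$ extra deleted indices, bound each one-factor change by $\|I-\hat P_j\|\le\hat\kappa$ times $\hat\rho$ raised to the number of remaining factors, and sum the geometric series. The only difference is cosmetic: you insert the missing factors into the shorter product, whereas the paper replaces them by $I$ in the longer one. Your extra remarks on the symmetry of $\hat H_j$ and on the sign when $\mu<\nu$ are sound.
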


\begin{proof}
Since \(S_{\le t_i}\subseteq S_{\le t_k}\), compared with the second
product, the first product excludes the additional set of indices
\[
R := (S_{\le t_k}\setminus S_{\le t_i})\cap [s+1,{\tau}].
\]
Hence
\[
|R|=n^{k}_{[s+1,{\tau}]}-n^{i}_{[s+1,{\tau}]}.
\]
Interpreting ``excluding \(j\)'' as replacing the corresponding factor
\(\hat P_j\) by \(I\), define
\[
Q^{(0)}
=
\prod_{\substack{j=s+1 \\ j\notin S_{\le t_i}}}^{\tau}\hat P_j,
\qquad
Q^{(*)}
=
\prod_{\substack{j=s+1 \\ j\notin S_{\le t_k}}}^{\tau}\hat P_j .
\]
If \(R=\emptyset\), then \(Q^{(*)}=Q^{(0)}\), and the result is immediate.
Otherwise, let \(R=\{r_1<\cdots<r_m\}\), where
\(m=n^{k}_{[s+1,{\tau}]}-n^{i}_{[s+1,{\tau}]}\). For \(\ell=0,\dots,m\), define
\(Q^{(\ell)}\) as the product obtained from \(Q^{(0)}\) by replacing
\(\hat P_{r_1},\dots,\hat P_{r_\ell}\) with \(I\). Then
\(Q^{(m)}=Q^{(*)}\), and
\[
Q^{(*)}-Q^{(0)}
=
\sum_{\ell=1}^{m}\left(Q^{(\ell)}-Q^{(\ell-1)}\right).
\]
For each \(\ell\), only the factor at position \(r_\ell\) changes. Since
\(\|\hat P_j\|\le \hat\rho\), we have
\[
\left\|Q^{(\ell)}-Q^{(\ell-1)}\right\|
\le
\|I-\hat P_{r_\ell}\|
\hat\rho^{{\tau}-s-n^{i}_{[s+1,{\tau}]}-\ell}.
\]
Moreover,
\[
I-\hat P_j=(\hat H_j+\lambda I)^{-1}\hat H_j,
\]
and therefore
\[
\|I-\hat P_j\|
\le
\hat{\kappa}=\max\left\{\frac{M+\nu}{M+\lambda+\nu},\frac{|\mu-\nu|}{\mu-\nu+\lambda}\right\}.
\]
Thus,
\begin{align*}
\left\|Q^{(*)}-Q^{(0)}\right\|
&\le
\hat{\kappa}
\sum_{\ell=1}^{m}
\hat\rho^{{\tau}-s-n^{i}_{[s+1,{\tau}]}-\ell} \\
&=
\hat{\kappa}
\hat\rho^{{\tau}-s-n^{k}_{[s+1,{\tau}]}}
\frac{1-\hat\rho^{m}}{1-\hat\rho} \\
&=
\frac{\mu+\lambda-\nu}{\mu-\nu}
\hat{\kappa}\hat\rho^{{\tau}-s-n^{k}_{[s+1,{\tau}]}}
\left(
1-\hat\rho^{n^{k}_{[s+1,{\tau}]}-n^{i}_{[s+1,{\tau}]}}
\right),
\end{align*}
where the last equality uses
\[
1-\hat\rho
=
1-\frac{\lambda}{\lambda+\mu-\nu}
=
\frac{\mu-\nu}{\lambda+\mu-\nu}.
\]

When \(\nu=0\), the same argument gives
\[
\left\|Q^{(\ell)}-Q^{(\ell-1)}\right\|
\le
\|I-\hat P_{r_\ell}\|
\rho^{{\tau}-s-n^{i}_{[s+1,{\tau}]}-\ell},
\quad
\|I-\hat P_j\|\le \kappa .
\]
Thus, the stated bound follows by summing the resulting geometric series.
Moreover, since \(P_j\) and \(\tilde P_j\) satisfy the same exact-curvature
bounds from Assumption~\ref{Assum1}, independently of \(\nu\), the same
argument applies after replacing \(\hat P_j\) with \(P_j\) or \(\tilde P_j\).
\end{proof}

\begin{lemma}\label{lemmaaS}

Fix time $t$. Define
$$
\tilde H_i = \int_0^1 \nabla^2 \hat F_i\!\left(w_{i}^{-\mathbf{S}_{1:i-1}} + u\big(w_{i}^{-S_{\le t}\setminus\{i+1,\dots,t\}} - w_{i}^{-\mathbf{S}_{1:i-1}}\big)\right) du,\;\;
\tilde P_i = (\tilde H_i + \lambda I)^{-1}\lambda I,\;\;\forall i\in[t].  
$$
Then, consider the unlearning models $w_{i}^{-\mathbf{S}_{1:k}}$ and $w_{j}^{-\mathbf{S}_{1:k}}$ updated in {Alg.~\ref{A3}}, and the corresponding retrained models $w_{i}^{-S_{\le t}}$ and $w_{j}^{-S_{\le t}}$, where $t\geq i>j\geq k$. If no unlearning request occurs and no task is deleted during the interval
\(\{k+1,\dots,t\}\) by time \(t\), then the following relation holds:
    \begin{align}
    &w_{i}^{-S_{\le {t}}}-w_{i}^{-\mathbf{S}_{1:k}}=\prod^{i}_{a=j+1}\tilde P_a(w_{j}^{-S_{\le {t}}}-w_{j}^{-\mathbf{S}_{1:k}}),\label{eqq3}
\end{align}
and 
\begin{align}
    \|w_{i}^{-S_{\le {t}}}-w_{i}^{-\mathbf{S}_{1:k}}\|\leq\rho^{i-j}\|w_{j}^{-S_{\le {t}}}-w_{j}^{-\mathbf{S}_{1:k}}\|
\end{align}

\end{lemma}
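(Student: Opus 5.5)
The plan is to prove \eqref{eqq3} one step at a time, by the same first-order-condition and mean-value argument used in the proof of Theorem~\ref{T1.1}, and then telescope from $j$ up to $i$. Fix an index $a\in\{j+1,\dots,i\}$. By hypothesis, no unlearning request arrives during $\{k+1,\dots,t\}$ and no task in that interval is deleted by time $t$. Two consequences follow.

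\begin{itemize}
\item On the Alg.~\ref{A3} side, Stage II is inactive at time $a$, so
\[
w_a^{-\mathbf S_{1:k}}=w_a^{-\mathbf S_{1:a}}=w_a^{-\mathbf S_{1:a-1}}.
\]
Hence $w_a^{-\mathbf S_{1:k}}$ is exactly the \eqref{A1} update on $D_a$ started from $w_{a-1}^{-\mathbf S_{1:k}}$.
\item On the retraining side, $a\notin S_{\le t}$, so task $a$ is retained. By the convention in Appendix~\ref{app:notation}, $w_a^{-S_{\le t}}=w_a^{-S_{\le t}\setminus\{a+1,\dots,t\}}$ is the \eqref{A1} update on $D_a$ started from $w_{a-1}^{-S_{\le t}}$.
\end{itemize}

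Assumption~\ref{Assum2} places both states in the open ball $\mathcal W$, so both first-order conditions hold:
\[
\nabla\hat F_a(w_a^{-S_{\le t}})+\lambda(w_a^{-S_{\le t}}-w_{a-1}^{-S_{\le t}})=0,\qquad \nabla\hat F_a(w_a^{-\mathbf S_{1:k}})+\lambda(w_a^{-\mathbf S_{1:k}}-w_{a-1}^{-\mathbf S_{1:k}})=0.
\]

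Next I subtract the two conditions and write the gradient difference with the integral form of Taylor's theorem:
\[
\nabla\hat F_a(w_a^{-S_{\le t}})-\nabla\hat F_a(w_a^{-\mathbf S_{1:k}})=\tilde H_a\,(w_a^{-S_{\le t}}-w_a^{-\mathbf S_{1:k}}),
\]
where $\tilde H_a$ is the integrated Hessian along the segment between the two states. Two points need care here.

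\begin{itemize}
\item The segment must match the definition of $\tilde H_a$ in the lemma. Because $\mathbf S_{1:a-1}=\mathbf S_{1:k}$, the endpoint $w_a^{-\mathbf S_{1:a-1}}$ coincides with $w_a^{-\mathbf S_{1:k}}$. Because $S_{\le t}$ contains no index in $\{a+1,\dots,t\}$ beyond what the convention removes, the other endpoint coincides with $w_a^{-S_{\le t}\setminus\{a+1,\dots,t\}}$. So $\tilde H_a$ is exactly the one defined in the statement.
\item Convexity of $\mathcal W$ keeps the segment inside the ball. Twice differentiability a.e.\ then makes the integral well defined, as in Lemma~\ref{lemmaAA}; this is a measure-zero technicality I would only remark on.
\end{itemize}

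Rearranging gives
\[
(\tilde H_a+\lambda I)(w_a^{-S_{\le t}}-w_a^{-\mathbf S_{1:k}})=\lambda(w_{a-1}^{-S_{\le t}}-w_{a-1}^{-\mathbf S_{1:k}}).
\]
By Assumption~\ref{Assum1}, $\tilde H_a+\lambda I\succeq(\mu+\lambda)I\succ0$ since $\lambda>-\mu$, so the matrix is invertible. This yields the one-step identity with $\tilde P_a=(\tilde H_a+\lambda I)^{-1}\lambda I$. Iterating from $a=j+1$ to $i$, with the right-to-left product convention, gives \eqref{eqq3}.

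For the norm bound, $\tilde H_a$ is symmetric with spectrum in $[\mu,M]$. Hence $\|\tilde P_a\|_2=\lambda/(\lambda+\mu_{\min}(\tilde H_a))\le\rho$, and submultiplicativity gives the factor $\rho^{i-j}$.

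The main obstacle is the bookkeeping in the first two steps. I must verify carefully that, under the no-request and no-deletion hypothesis, both trajectories really undergo a plain \eqref{A1} step at every $a\in(j,i]$. On the unlearning side, no $\bar\Delta_a$ correction may be applied. On the retraining side, the indexing must not skip or remove task $a$. Once that is settled, the rest is the same linearization already used in Theorem~\ref{T1.1}.
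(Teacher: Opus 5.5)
Your proposal is correct and takes essentially the same route as the paper's proof. Both arguments write the first-order conditions of \eqref{A1} for the two trajectories and use $w_a^{-\mathbf S_{1:k}}=w_a^{-\mathbf S_{1:a-1}}$ and $w_a^{-S_{\le t}}=w_a^{-S_{\le t}\setminus\{a+1,\dots,t\}}$ to match the segment defining $\tilde H_a$. They then solve the integral-form Taylor expansion for the one-step relation $w_a^{-S_{\le t}}-w_a^{-\mathbf S_{1:k}}=\tilde P_a(w_{a-1}^{-S_{\le t}}-w_{a-1}^{-\mathbf S_{1:k}})$, iterate it over $a=j+1,\dots,i$, and conclude with $\|\tilde P_a\|\le\rho$. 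Your extra remarks on the invertibility of $\tilde H_a+\lambda I$ (which requires $\lambda>-\mu$) and on the regularity of the Taylor integral are details the paper leaves implicit, not a different approach.
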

\begin{proof}[Proof of Lemma \ref{lemmaaS}]
        Under the CL algorithm (\ref{A1}), we have the following first-order conditions for the unlearning model $w_s^{-\mathbf{S}_{1:k}}$ and retraining model $w_s^{-S_{\le t}}$ for $j+1\leq s\leq i$:
\begin{align*}
    &\nabla \hat F_s(w_s^{-S_{\le t}})+\lambda (w_s^{-S_{\le t}}-w_{s-1}^{-S_{\le t}})=0,\;\nabla \hat F_s(w_s^{-\mathbf{S}_{1:k}})+\lambda (w_s^{-\mathbf{S}_{1:k}}-w_{s-1}^{-\mathbf{S}_{1:k}})=0.
\end{align*}
Moreover, since $s>k$, we have $w_s^{-\mathbf{S}_{1:k}}=w_s^{-\mathbf{S}_{1:s-1}}$ for any $i\geq s\geq j+1$. Since the retrained model at task $s$ only depends on tasks up to $s$, for any
$j+1\le s\le i$ we have
\(
w_s^{-S_{\le t}\setminus\{s+1,\dots,t\}}=w_s^{-S_{\le t}}.
\)

By Taylor's theorem, expanding $\nabla \hat F_s(w_s^{-S_{\le t}})$ around $w_s^{-\mathbf{S}_{1:k}}$ gives
\begin{align}
&\nabla\hat F_s(w_s^{-\mathbf{S}_{1:k}})+\tilde H_s(w_s^{-S_{\le t}}-w_{s}^{-\mathbf{S}_{1:k}})+\lambda (w_s^{-S_{\le t}}-w_{s-1}^{-S_{\le t}})=0\nonumber\\
\Longrightarrow&-\lambda (w_s^{-\mathbf{S}_{1:k}}-w_{s-1}^{-\mathbf{S}_{1:k}})+\tilde H_s(w_s^{-S_{\le t}}-w_s^{-\mathbf{S}_{1:k}})+\lambda (w_s^{-S_{\le t}}-w_{s-1}^{-S_{\le t}})=0\nonumber\\
    \Longrightarrow&w_s^{-S_{\le t}}-w_s^{-\mathbf{S}_{1:k}}= \bigl(\tilde H_s+\lambda I\bigl)^{-1}\lambda \bigl(w_{s-1}^{-S_{\le t}}-w_{s-1}^{-\mathbf{S}_{1:k}}\bigl).
\end{align}
Iterating the recursion above over $s=j+1,\dots,i$ yields \eqref{eqq3}. Taking norms and using $\|\tilde P_s\|\le \rho$ for each $s$ gives the desired inequality.
\end{proof}

\begin{lemma}\label{lC2}
Let \(t\) be an unlearning time where $S_t \neq \emptyset$. Define
\[
\tilde H_i =
\int_0^1 \nabla^2 \hat F_i\!\left(
w_i^{-\mathbf{S}_{1:i-1}}
+
u\bigl(
w_i^{-S_{\le t}\setminus\{i+1,\dots,t\}}
-
w_i^{-\mathbf{S}_{1:i-1}}
\bigr)
\right)du,
\]
and
\[
\tilde P_i=(\tilde H_i+\lambda I)^{-1}\lambda I,
\qquad
\hat P_i=(\hat H_i+\lambda I)^{-1}\lambda I .
\]
Then, for the unlearning model \(w_t^{-\mathbf{S}_{1:t}}\) updated in
Alg.~\ref{A3} and the retraining model \(w_t^{-S_{\le t}}\), the
following equation holds:
\begin{align}
w_t^{-S_{\le t}}-w_t^{-\mathbf{S}_{1:t}}
=&
\sum_{s\in S_{\le t}}
\left(
\prod_{\substack{i=s+1\\ i\notin S_{\le t}}}^{t}\tilde P_i
-
\prod_{\substack{i=s+1\\ i\notin S_{\le t}}}^{t}\hat P_i
\right)\Delta_s
-
\sum_{\tau\in U_{t-1}}
\left(
\prod_{\substack{i=\tau+1\\ i\notin S_{\le t}}}^{t}\tilde P_i
-
\prod_{\substack{i=\tau+1\\ i\notin S_{\le t}}}^{t}\hat P_i
\right)\bar\Delta_\tau .
\label{eqq5}
\end{align}
\end{lemma}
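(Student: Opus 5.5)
The identity is exact, with no norm bounds, so the plan has two steps. First I derive an exact expression for the retraining model in terms of the $\tilde P_i$, in the same way as the proof of Theorem~\ref{T1.1}. Then I derive an exact expression for the unlearned model by unrolling the Hessian corrections $\bar\Delta_\tau$, and subtract.

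\textbf{Step 1 (retraining side).} Let $w_i^{-\mathbf S_{1:i-1}}$ be the trajectory actually produced by Alg.~\ref{A3} before the step-$i$ correction. For each $i\notin S_{\le t}$, write the first-order conditions of \eqref{A1} for the retrained state $w_i^{-S_{\le t}\setminus\{i+1,\dots,t\}}$ and for $w_i^{-\mathbf S_{1:i-1}}$. Expanding around $w_i^{-\mathbf S_{1:i-1}}$ with the integrated Hessian $\tilde H_i$ gives
\[
w_i^{-S_{\le t}\setminus\{i+1,\dots,t\}}-w_i^{-\mathbf S_{1:i-1}}=\tilde P_i\bigl(w_{i-1}^{-S_{\le t}\setminus\{i,\dots,t\}}-w_{i-1}^{-\mathbf S_{1:i-1}}\bigr).
\]
The previous state is $w_{i-1}^{-\mathbf S_{1:i-1}}$, not $w_{i-1}^{-\mathbf S_{1:i-2}}$. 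Since $w_{i-1}^{-\mathbf S_{1:i-1}}=w_{i-1}^{-\mathbf S_{1:i-2}}+\bar\Delta_{i-1}$, with $\bar\Delta_{i-1}=0$ when $S_{i-1}=\emptyset$, each step produces an additive term $-\tilde P_i\bar\Delta_{i-1}$.

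For $i\in S_{\le t}$, the retrained model simply skips step $i$. Meanwhile $w_i^{-\mathbf S_{1:i-1}}=w_{i-1}^{-\mathbf S_{1:i-1}}-\Delta_i$, so the difference gains $+\Delta_i$ with no propagation factor.

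Unrolling from $t$ back to $0$, where both trajectories start at $w_0$, yields
\[
w_t^{-S_{\le t}}-w_t^{-\mathbf S_{1:t-1}}=\sum_{s\in S_{\le t}}\prod_{\substack{i=s+1\\ i\notin S_{\le t}}}^t\tilde P_i\,\Delta_s-\sum_{\tau\in U_{t-1}}\prod_{\substack{i=\tau+1\\ i\notin S_{\le t}}}^t\tilde P_i\,\bar\Delta_\tau .
\]
Two bookkeeping conventions make the products come out right. A deleted index contributes the identity factor. A correction $\bar\Delta_\tau$ at a time $\tau\in S_{\le t}$ is propagated through the subsequent non-deleted steps only.

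\textbf{Step 2 (unlearned side).} By \eqref{RA}, $w_t^{-\mathbf S_{1:t}}-w_t^{-\mathbf S_{1:t-1}}=\bar\Delta_t$. Regrouping \eqref{bardelta}, with the $S_t$ sum and the $S_{\le t-1}$ sum merged into a single sum over $S_{\le t}$, gives
\[
\bar\Delta_t=\sum_{s\in S_{\le t}}\prod\hat P_i\,\Delta_s-\sum_{\tau\in U_{t-1}}\prod\hat P_i\,\bar\Delta_\tau .
\]
This has exactly the structure of Step 1 with $\tilde P_i$ replaced by $\hat P_i$. Subtracting it from the Step 1 identity gives \eqref{eqq5} directly.

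\textbf{Main obstacle.} The hard part is the bookkeeping in Step 1 for asynchronous requests. I must check three things.
\begin{itemize}
\item The comparison retrained state at time $i$ is the one with deletion set $S_{\le t}\setminus\{i+1,\dots,t\}$, as used in the definition of $\tilde H_i$. I will use the convention $w_a^{S_{\le b}}=w_a^{S_{\le b}\setminus\{a+1,\dots,b\}}$ from Appendix~\ref{app:notation} for this.
\item For deleted $s$, the correction $\bar\Delta_{s}$ applied at time $s$, when $s\in U_{t-1}$, enters with the right product range.
\item The Taylor expansion is legitimate: all states lie in the convex set $\mathcal W$ and are interior by Assumption~\ref{Assum2}, and $\tilde H_i+\lambda I$ is invertible because $\lambda>-\mu$.
\end{itemize}
I would first carry out the unrolling carefully for $k=2$ with one asynchronous deletion to confirm the signs and the product index ranges. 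Then I would formalize the general case by induction on $t$.
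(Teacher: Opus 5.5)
Your proposal is correct and follows essentially the same route as the paper. The paper proves by induction on $\tau$ an identity for $A_\tau=w_\tau^{-S_{\le t}\setminus\{\tau+1,\dots,t\}}-w_\tau^{-\mathbf{S}_{1:\tau-1}}$ using your two cases ($A_{\tau+1}=A_\tau-\bar\Delta_\tau+\Delta_{\tau+1}$ for a deleted step, and $A_{\tau+1}=\tilde P_{\tau+1}(A_\tau-\bar\Delta_\tau)$ for a retained step, where it cites Lemma~\ref{lemmaaS} for the Taylor recursion you derive directly); at $\tau=t$ this gives your Step~1 formula, from which it subtracts the regrouped $\bar\Delta_t$ of \eqref{bardelta} exactly as in your Step~2.
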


\begin{proof}[Proof of Lemma~\ref{lC2}]
We first prove the following identity. For any \(\tau=0,\dots,t\),
\begin{align}
&A_\tau
:=
w_\tau^{-S_{\le t}\setminus\{\tau+1,\dots,t\}}
-
w_\tau^{-\mathbf{S}_{1:\tau-1}}
=
\sum_{s\in S_{\le t}\setminus\{\tau+1,\dots,t\}}
\prod_{\substack{i=s+1\\ i\notin S_{\le t}}}^{\tau}
\tilde P_i\Delta_s
-
\sum_{j\in U_{\tau-1}}
\prod_{\substack{i=j+1\\ i\notin S_{\le t}}}^{\tau}
\tilde P_i\bar\Delta_j .
\label{eqq4}
\end{align}
Here \(U_{-1}=U_0=\emptyset\), and the identity is trivial when \(\tau=0\).

Assume that \eqref{eqq4} holds at time \(\tau\). We prove it for
\(\tau+1\). If \(\tau+1\in S_{\le t}\), then task \(\tau+1\) is skipped
in the retraining trajectory. Hence $w_{\tau+1}^{-S_{\le t}\setminus\{\tau+2,\dots,t\}}
=
w_\tau^{-S_{\le t}\setminus\{\tau+1,\dots,t\}} .$
Using $w_\tau^{-\mathbf S_{1:\tau}}
=
w_\tau^{-\mathbf S_{1:\tau-1}}+\bar\Delta_\tau$ and $ \Delta_{\tau+1}
=
w_\tau^{-\mathbf S_{1:\tau}}
-
w_{\tau+1}^{-\mathbf S_{1:\tau}},$ we obtain
\begin{align*}
A_{\tau+1}
=&
w_{\tau+1}^{-S_{\le t}\setminus\{\tau+2,\dots,t\}}
-
w_{\tau+1}^{-\mathbf S_{1:\tau}}\\
=&w_{\tau}^{-S_{\le t}\setminus\{\tau+1,\dots,t\}}-w_\tau^{-\mathbf S_{1:\tau}}+w_\tau^{-\mathbf S_{1:\tau}}
-
w_{\tau+1}^{-\mathbf S_{1:\tau}}\\
=&w_{\tau}^{-S_{\le t}\setminus\{\tau+1,\dots,t\}}-w_\tau^{-\mathbf S_{1:\tau-1}}-\bar\Delta_\tau+\Delta_{\tau+1}\\
=&
A_\tau-\bar\Delta_\tau+\Delta_{\tau+1}.
\end{align*}
Substituting the induction hypothesis gives \eqref{eqq4} with
\(\tau\) replaced by \(\tau+1\): the new deleted task contributes
\(\Delta_{\tau+1}\) with an empty product, and the term
\(-\bar\Delta_\tau\) is included in the second sum if
\(\tau\in U_\tau\), while it is zero otherwise by the convention
\(\bar\Delta_\tau=\mathbf 0\) when \(S_\tau=\emptyset\).

If \(\tau+1\notin S_{\le t}\), then both trajectories train on task
\(\tau+1\). By Lemma~\ref{lemmaaS}, we obtain
\[
A_{\tau+1}
=
\tilde P_{\tau+1}
\left(
w_\tau^{-S_{\le t}\setminus\{\tau+1,\dots,t\}}
-
w_\tau^{-\mathbf S_{1:\tau}}
\right)
=
\tilde P_{\tau+1}(A_\tau-\bar\Delta_\tau).
\]
Substituting the induction hypothesis again yields \eqref{eqq4} at
\(\tau+1\), since the retained factor \(\tilde P_{\tau+1}\) is appended
to each product, and the additional term
\(-\tilde P_{\tau+1}\bar\Delta_\tau\) is included in the second sum when
\(\tau\in U_\tau\), and is zero otherwise. This completes the induction.

Taking \(\tau=t\) in \eqref{eqq4} gives
\begin{align}
w_t^{-S_{\le t}}-w_t^{-\mathbf S_{1:t-1}}
=
\sum_{s\in S_{\le t}}
\prod_{\substack{i=s+1\\ i\notin S_{\le t}}}^{t}
\tilde P_i\Delta_s
-
\sum_{\tau\in U_{t-1}}
\prod_{\substack{i=\tau+1\\ i\notin S_{\le t}}}^{t}
\tilde P_i\bar\Delta_\tau .
\label{eqq2}
\end{align}

Then, since \(S_t\neq\emptyset\), combining \eqref{eqq2} and the update of
\(w_t^{-\mathbf{S}_{1:t}}\) in \eqref{bardelta}, we have
\begin{align*}
&w_t^{-S_{\le t}}-w_t^{-\mathbf{S}_{1:t}}\\
=&
w_t^{-S_{\le t}}-w_t^{-\mathbf{S}_{1:t-1}}
-
\sum_{s\in S_{\le t}}
\prod_{\substack{i=s+1\\ i\notin S_{\le t}}}^{t}
\hat P_i\Delta_s
+
\sum_{\tau\in U_{t-1}}
\prod_{\substack{i=\tau+1\\ i\notin S_{\le t}}}^{t}
\hat P_i\bar\Delta_\tau
\\
=&
\sum_{s\in S_{\le t}}
\prod_{\substack{i=s+1\\ i\notin S_{\le t}}}^{t}
\tilde P_i\Delta_s
-
\sum_{\tau\in U_{t-1}}
\prod_{\substack{i=\tau+1\\ i\notin S_{\le t}}}^{t}
\tilde P_i\bar\Delta_\tau
-
\sum_{s\in S_{\le t}}
\prod_{\substack{i=s+1\\ i\notin S_{\le t}}}^{t}
\hat P_i\Delta_s
+
\sum_{\tau\in U_{t-1}}
\prod_{\substack{i=\tau+1\\ i\notin S_{\le t}}}^{t}
\hat P_i\bar\Delta_\tau
\\
=&
\sum_{s\in S_{\le t}}
\left(
\prod_{\substack{i=s+1\\ i\notin S_{\le t}}}^{t}\tilde P_i
-
\prod_{\substack{i=s+1\\ i\notin S_{\le t}}}^{t}\hat P_i
\right)\Delta_s
-
\sum_{\tau\in U_{t-1}}
\left(
\prod_{\substack{i=\tau+1\\ i\notin S_{\le t}}}^{t}\tilde P_i
-
\prod_{\substack{i=\tau+1\\ i\notin S_{\le t}}}^{t}\hat P_i
\right)\bar\Delta_\tau .
\end{align*}
This completes the proof.

\end{proof}

\begin{lemma}\label{lCC}
Suppose task \(s\) is unlearned at the \((k-j)\)-th unlearning request \(t_{k-j}\).
Let \(d_j(t_k,s)\) be the coefficient of \(\Delta_s\) in \(\bar\Delta_{t_k}\), so that
\[
\bar\Delta_{t_k}
= \sum_{i=0}^{k-1}\sum_{s\in S_{t_{k-i}}} d_i(t_k,s)\Delta_s.
\]
Likewise, let \(D_j(t_k,s)\) be the coefficient of \(\Delta_s\) in
\(w_{t_k}^{-S_{\le t_k}}-w_{t_k}^{-\mathbf{S}_{1:t_k}}\), such that
\[
w_{t_k}^{-S_{\le t_k}} - w_{t_k}^{-\mathbf{S}_{1:t_k}}
= \sum_{i=0}^{k-1}\sum_{s\in S_{t_{k-i}}} D_i(t_k,s)\Delta_s.
\]
Then, for \(j=0,\dots,k-2\) and \(s\in S_{t_{k-j-1}}\),
\begin{align}
d_0(t_k,s)
&=
\prod_{\substack{i=s+1 \\ i \notin S_{\le t_k}}}^{t_k} \hat P_i,
\qquad
d_{j+1}(t_k,s)
=
d_j(t_k,s)
-
d_j(t_k,t_{k-j-1})d_0(t_{k-j-1},s),
\label{DJd}
\end{align}
and
\begin{align}
D_0(t_k,s)
&=
\prod_{\substack{i=s+1 \\ i \notin S_{\le t_k}}}^{t_k} \tilde P_i
-
\prod_{\substack{i=s+1 \\ i \notin S_{\le t_k}}}^{t_k} \hat P_i,\;\;
D_{j+1}(t_k,s)
=
D_j(t_k,s)
-
D_j(t_k,t_{k-j-1})
\prod_{\substack{i=s+1 \\ i \notin S_{\leq t_{k-j-1}}}}^{t_{k-j-1}}
\hat P_i .
\label{DJ}
\end{align}
\end{lemma}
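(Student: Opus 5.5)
The plan is to work directly from the recursive definition \eqref{bardelta} of $\bar\Delta_{t_k}$ and from the exact error representation of Lemma~\ref{lC2}. Both expressions are linear combinations of the stored updates $\{\Delta_s\}_{s\in S_{\le t_k}}$ and the past corrections $\{\bar\Delta_\tau\}_{\tau\in U_{t_k-1}}$. I will repeatedly substitute each earlier correction $\bar\Delta_{t_{k-j-1}}$ by its own expansion in the $\Delta_s$'s. After that, the coefficient of each $\Delta_s$ can be read off and compared level by level. The index $j$ records how many unlearning events separate the current time $t_k$ from the time $t_{k-j}$ at which $s$ was requested.

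For $d_0$: if $s\in S_{t_k}$, then $s$ appears only in the first sum of \eqref{bardelta}. Moreover, $\Delta_s$ is never part of an earlier $\bar\Delta_\tau$, since $s$ had not yet been deleted. This gives $d_0(t_k,s)=\prod_{i=s+1,\,i\notin S_{\le t_k}}^{t_k}\hat P_i$ immediately. The same reasoning applied to \eqref{eqq5} gives $D_0$ as the difference of the $\tilde P$ and $\hat P$ products.

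For the recursion in $j$, fix $s\in S_{t_{k-j-1}}$. This $\Delta_s$ enters $\bar\Delta_{t_k}$ in two ways: directly, through the second sum over $S_{\le t_k-1}$, and indirectly, through $-\prod\hat P\,\bar\Delta_\tau$ for every $\tau\in U_{t_k-1}$ with $\tau\ge t_{k-j-1}$.

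The key observation is about the correction $\bar\Delta_{t_{k-j-1}}$. It is the first correction containing $\Delta_s$, and there it appears with coefficient $d_0(t_{k-j-1},s)=\prod_{i=s+1,\,i\notin S_{\le t_{k-j-1}}}^{t_{k-j-1}}\hat P_i$.

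I then argue by induction on $j$ with the following hypothesis: the total coefficient that the coarser index $s$ receives at level $j+1$ equals what a task deleted at level $j$ would receive, minus the part routed through $\bar\Delta_{t_{k-j-1}}$. That routed part is the level-$j$ coefficient attached to $\bar\Delta_{t_{k-j-1}}$, which equals $d_j(t_k,t_{k-j-1})$ by treating $t_{k-j-1}$ formally as an index with the same propagation products. Right-multiplying by $d_0(t_{k-j-1},s)$ yields \eqref{DJd}.

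For $D_j$, I would not redo the expansion. Instead I apply the identical substitution to \eqref{eqq5}: the $\tilde P$ parts carry no $\bar\Delta$ recursion beyond what was already absorbed, so the same algebra yields \eqref{DJ}. The multiplier is again the $\hat P$ product over $[s+1,t_{k-j-1}]$ excluding $S_{\le t_{k-j-1}}$, because that is exactly how $\Delta_s$ entered $\bar\Delta_{t_{k-j-1}}$.

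The main obstacle is the bookkeeping needed to identify the coefficient of $\bar\Delta_{t_{k-j-1}}$ in $\bar\Delta_{t_k}$ (respectively in the error) with $d_j(t_k,t_{k-j-1})$ (respectively $D_j$). This requires checking that the propagation products $\prod_{i=\tau+1,\,i\notin S_{\le t_k}}^{t_k}\hat P_i$ multiplying $\bar\Delta_\tau$ have exactly the same form as those multiplying a $\Delta$ deleted at time $\tau$. It also requires checking that the later corrections $\bar\Delta_{t_{k-j}},\dots,\bar\Delta_{t_{k-1}}$, which themselves contain $\bar\Delta_{t_{k-j-1}}$, telescope correctly. I would first verify the cases $j=0$ and $k=2,3$ explicitly to confirm the sign pattern before writing the general induction.
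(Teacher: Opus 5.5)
Your proposal is correct, but it obtains the recursion by a different mechanism than the paper.

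\textbf{The paper's route.} The paper first reads off the one-level unfolding \eqref{eqqq11} (and \eqref{eqqq12} for $D$). This writes $d_l(t_k,s)$ in terms of $d_0(t_k,t_{k-l+i})$ and the coefficients $d_i(t_{k-l+i},s)$ at earlier request times. It then proves \eqref{DJd} by strong induction on $j$:
\begin{itemize}
\item it subtracts the two unfoldings;
\item it applies the recursion at lower levels $i<j$ at the shifted times $t_{k-j+i}$;
\item it factors $d_0(t_{k-j-1},s)$ out on the right;
\item it recognizes \eqref{eqqq11} with $s$ replaced by $t_{k-j-1}$.
\end{itemize}
\eqref{DJ} then follows by the same manipulation, using the already proven $d$-recursion.

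\textbf{Your route.} You cut at $\bar\Delta_{t_{k-j-1}}$ and use superposition. The contributions of $\Delta_s$ that do not pass through this correction give the formal level-$j$ coefficient $d_j(t_k,s)$. For $m\ge k-j$, after unfolding $\bar\Delta_{t_{k-j}},\dots,\bar\Delta_{t_{m-1}}$, the coefficient of $\bar\Delta_{t_{k-j-1}}$ in $\bar\Delta_{t_m}$ is $-Z_m$, where $Z_m=d_0(t_m,t_{k-j-1})-\sum_{a=k-j}^{m-1}d_0(t_m,t_a)Z_a$. This is exactly the recursion satisfied by the coefficient of a task deleted at $t_{k-j}$, so $Z_k=d_j(t_k,t_{k-j-1})$.

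Note that this is an induction over the intermediate request index $m=k-j,\dots,k$, not over $j$. The ``induction hypothesis'' you state is really the claim itself. Once the bookkeeping you flag is written this way, no induction on $j$ is needed.

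For $D$, the same cut gives the coefficient of $\bar\Delta_{t_{k-j-1}}$ in the error as $-D_0(t_k,t_{k-j-1})+\sum_{a=k-j}^{k-1}D_0(t_k,t_a)Z_a=-D_j(t_k,t_{k-j-1})$. So both recursions come from a single linearity observation.

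\textbf{Comparison.} Your route is more transparent and avoids the nested index shifts. The paper's route yields the explicit formulas \eqref{eqqq11}--\eqref{eqqq12} as a by-product.

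In either case, $d_j(t_k,a)$ and $D_j(t_k,a)$ for an index $a$ not actually deleted at $t_{k-j}$ must be defined formally through these recursions; the paper does so by convention. You should state that definition explicitly, so that ``what a task deleted at level $j$ would receive'' has a precise meaning.
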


\begin{proof}[Proof of Lemma~\ref{lCC}]
We use the same notation \(d_j(t_k,a)\) and \(D_j(t_k,a)\) for any
intermediate time index \(a\) appearing as the starting point of a
propagated correction.

First, it is straightforward to verify that, under the iterative update
in \eqref{bardelta} and by Lemma~\ref{lC2}, each \(\bar\Delta_t\) and the
difference \(w_{t_k}^{-S_{\le t_k}}-w_{t_k}^{-\mathbf{S}_{1:t_k}}\) can
be expressed as a linear combination of
\(\{\Delta_s:s\in S_{\le t_k}\}\). Next we prove the equations in
\eqref{DJd} by induction.

If task \(s\) is unlearned at time \(t_k\), then by definition its matrix
coefficient in \(\bar\Delta_{t_k}\) is \(d_0(t_k,s)\). Compared to
\eqref{bardelta}, this directly yields the stated expression for
\(d_0(t_k,s)\) in~\eqref{DJd}.

Now assume \eqref{DJd} holds for all \(i=0,\dots,j\). Moreover, by
\eqref{bardelta} and the definition of \(d_l(t_k,s)\), \(d_l(t_k,s)\) is
the coefficient of \(\Delta_s\) in \(\bar\Delta_{t_k}\) when \(s\) is
unlearned at time \(t_{k-l}\). Note that
\begin{align*}
\bar\Delta_{t_k}
=
\sum_{a=1}^{k}\sum_{s\in S_{t_a}}
\prod_{\substack{r=s+1\\ r\notin S_{\le t_k}}}^{t_k}
\hat P_r\Delta_s
-
\sum_{a=1}^{k-1}
\prod_{\substack{r=t_a+1\\ r\notin S_{\le t_k}}}^{t_k}
\hat P_r\bar\Delta_{t_a}
\end{align*}
by \eqref{bardelta}, and that
\[
\prod_{\substack{r=t_a+1\\ r\notin S_{\le t_k}}}^{t_k}\hat P_r
=
d_0(t_k,t_a)
\]
by our notation.

Then, for any \(l=0,\dots,k-1\), we have
\begin{align}
d_l(t_k,s)
=
d_0(t_k,s)
-
\sum_{i=0}^{l-1}
d_0(t_k,t_{k-l+i})d_i(t_{k-l+i},s).
\label{eqqq11}
\end{align}
Hence
\begin{align*}
&d_j(t_k,s)-d_{j+1}(t_k,s)\\
=&
d_0(t_k,s)
-
\sum_{i=0}^{j-1}d_0(t_k,t_{k-j+i})d_i(t_{k-j+i},s)
-
\left(
d_0(t_k,s)
-
\sum_{i=0}^{j}d_0(t_k,t_{k-j+i-1})d_i(t_{k-j+i-1},s)
\right)\\
=&
-\sum_{i=0}^{j-1}d_0(t_k,t_{k-j+i})d_i(t_{k-j+i},s)
+
\sum_{i=-1}^{j-1}d_0(t_k,t_{k-j+i})d_{i+1}(t_{k-j+i},s)\\
=&
-\sum_{i=0}^{j-1}d_0(t_k,t_{k-j+i})
\bigl(d_i(t_{k-j+i},s)-d_{i+1}(t_{k-j+i},s)\bigr)
+
d_0(t_k,t_{k-j-1})d_0(t_{k-j-1},s)\\
=&
-\sum_{i=0}^{j-1}d_0(t_k,t_{k-j+i})
d_i(t_{k-j+i},t_{k-j-1})d_0(t_{k-j-1},s)
+
d_0(t_k,t_{k-j-1})d_0(t_{k-j-1},s)\\
=&
\left(
d_0(t_k,t_{k-j-1})
-
\sum_{i=0}^{j-1}d_0(t_k,t_{k-j+i})
d_i(t_{k-j+i},t_{k-j-1})
\right)d_0(t_{k-j-1},s)\\
=&
d_j(t_k,t_{k-j-1})d_0(t_{k-j-1},s),
\end{align*}
where the fourth equality follows from the induction condition for
\(i\le j\), and the last equality uses \eqref{eqqq11} with
\(s=t_{k-j-1}\) and \(l=j\). This completes the induction for
\eqref{DJd}.

Similarly, we prove \eqref{DJ} using \eqref{DJd} by induction.

If task \(s\) is unlearned at time \(t_k\), then by definition its matrix
coefficient in
\(w_{t_k}^{-S_{\le t_k}}-w_{t_k}^{-\mathbf{S}_{1:t_k}}\)
is \(D_0(t_k,s)\). By Lemma~\ref{lC2}, since \(s\in S_{t_k}\) is first
unlearned at the most recent time \(t_k\) and therefore does not appear
in any \(\bar\Delta_\tau\) with \(\tau<t_k\), the stated expression for
\(D_0(t_k,s)\) in~\eqref{DJ} follows.

Assume \eqref{DJ} holds for all \(i=0,\dots,j\). Additionally, by
Lemma~\ref{lC2}, for any \(l=0,\dots,k-1\),
\begin{align}
D_l(t_k,s)
=
D_0(t_k,s)
-
\sum_{i=0}^{l-1}D_0(t_k,t_{k-l+i})d_i(t_{k-l+i},s).
\label{eqqq12}
\end{align}
Therefore
\begin{align*}
&D_j(t_k,s)-D_{j+1}(t_k,s)\\
=&
D_0(t_k,s)
-
\sum_{i=0}^{j-1}D_0(t_k,t_{k-j+i})d_i(t_{k-j+i},s)
-
\left(
D_0(t_k,s)
-
\sum_{i=0}^{j}D_0(t_k,t_{k-j+i-1})d_i(t_{k-j+i-1},s)
\right)\\
=&
-\sum_{i=0}^{j-1}D_0(t_k,t_{k-j+i})d_i(t_{k-j+i},s)
+
\sum_{i=-1}^{j-1}D_0(t_k,t_{k-j+i})d_{i+1}(t_{k-j+i},s)\\
=&
-\sum_{i=0}^{j-1}D_0(t_k,t_{k-j+i})
\bigl(d_i(t_{k-j+i},s)-d_{i+1}(t_{k-j+i},s)\bigr)
+
D_0(t_k,t_{k-j-1})d_0(t_{k-j-1},s)\\
=&
-\sum_{i=0}^{j-1}D_0(t_k,t_{k-j+i})
d_i(t_{k-j+i},t_{k-j-1})d_0(t_{k-j-1},s)
+
D_0(t_k,t_{k-j-1})d_0(t_{k-j-1},s)\\
=&
\left(
D_0(t_k,t_{k-j-1})
-
\sum_{i=0}^{j-1}D_0(t_k,t_{k-j+i})
d_i(t_{k-j+i},t_{k-j-1})
\right)d_0(t_{k-j-1},s)\\
=&
D_j(t_k,t_{k-j-1})d_0(t_{k-j-1},s),
\end{align*}
where the fourth equality follows from the induction condition, and the
last equality uses \eqref{eqqq12} with \(s=t_{k-j-1}\) and \(l=j\).
This completes the induction for \eqref{DJ}.
\end{proof}

\begin{lemma}\label{lemmaD9}
Fix \(j=0,\dots,k-2\) and \(s\in S_{t_{k-j-1}}\). For the
\(D_j(t_k,s)\) defined in Lemma~\ref{lCC}, we have
\begin{align}
D_{j+1}(t_k,s)
=\sum_{i=0}^{j} D_{j-i}(t_k,t_{k-j-1})
\left(
\prod_{\substack{a=s+1\\ a\notin S_{\le t_{k-j+i}}}}^{t_{k-j-1}}\hat P_a
-
\prod_{\substack{a=s+1\\ a\notin S_{\le t_{k-j+i-1}}}}^{t_{k-j-1}}\hat P_a
\right)+D_0(t_k,s)
-D_0(t_k,t_{k-j-1})
\prod_{\substack{a=s+1\\ a\notin S_{\le t_k}}}^{t_{k-j-1}}\hat P_a .
\label{eq:ppp02}
\end{align}
\end{lemma}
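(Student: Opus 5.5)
The plan is to derive \eqref{eq:ppp02} from the recursion \eqref{DJ} of Lemma~\ref{lCC} by a summation-by-parts (Abel) argument, combined with an induction on $j$. Throughout I use the convention already adopted in the proof of Lemma~\ref{lCC}: $D_l(t_k,a)$ may be evaluated at any intermediate index $a$ that starts a propagated correction. I abbreviate, for $m\le k$,
\[
Q_m:=\prod_{\substack{a=s+1\\ a\notin S_{\le t_m}}}^{t_{k-j-1}}\hat P_a,\qquad
E_l:=D_l(t_k,t_{k-j-1}).
\]
With this notation, the first line of \eqref{DJ} gives $D_{j+1}(t_k,s)=D_j(t_k,s)-E_j\,Q_{k-j-1}$, since $d_0(t_{k-j-1},s)=Q_{k-j-1}$.

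\textbf{Step 1 (Abel summation on the right-hand side).} Rewrite the telescoping sum in \eqref{eq:ppp02} as
\[
\sum_{i=0}^{j}E_{j-i}\,(Q_{k-j+i}-Q_{k-j+i-1})
=E_0Q_k-E_jQ_{k-j-1}+\sum_{i=0}^{j-1}(E_{j-i-1}-E_{j-i})\,Q_{k-j+i}.
\]
Adding the remaining terms $D_0(t_k,s)-E_0Q_k$, the term $E_0Q_k$ cancels. The right-hand side of \eqref{eq:ppp02} then becomes
\[
D_0(t_k,s)-E_jQ_{k-j-1}+\sum_{i=0}^{j-1}(E_{j-i-1}-E_{j-i})Q_{k-j+i}.
\]
Comparing with the identity $D_{j+1}(t_k,s)=D_j(t_k,s)-E_jQ_{k-j-1}$, it therefore suffices to prove
\[
D_j(t_k,s)=D_0(t_k,s)+\sum_{i=0}^{j-1}(E_{j-i-1}-E_{j-i})\,Q_{k-j+i}.
\]

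\textbf{Step 2 (identify the increments).} Apply \eqref{DJ} with starting index $t_{k-j-1}$ to each increment $E_{m}-E_{m+1}$, where $m=j-i-1$:
\[
E_m-E_{m+1}=D_m(t_k,t_{k-m-1})\prod_{\substack{a=t_{k-j-1}+1\\ a\notin S_{\le t_{k-m-1}}}}^{t_{k-m-1}}\hat P_a .
\]
Multiplying on the right by $Q_{k-m-1}$ (note $k-j+i=k-m-1$) and using the concatenation property of ordered products with a common exclusion set $S_{\le t_{k-m-1}}$ gives
\[
(E_m-E_{m+1})\,Q_{k-m-1}=D_m(t_k,t_{k-m-1})\prod_{\substack{a=s+1\\ a\notin S_{\le t_{k-m-1}}}}^{t_{k-m-1}}\hat P_a=D_m(t_k,t_{k-m-1})\,d_0(t_{k-m-1},s).
\]
The target identity thus reduces to
\[
D_j(t_k,s)=D_0(t_k,s)-\sum_{m=0}^{j-1}D_m(t_k,t_{k-m-1})\,d_0(t_{k-m-1},s).
\]
This is exactly the recursion \eqref{DJ} unrolled from $D_0$ up to $D_j$, applied to the index $s$ in its generic role. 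The unrolling is justified by induction on $j$, with the case $j=0$ being trivial; equivalently, it is the identity \eqref{eqqq12} after regrouping via \eqref{DJd}.

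\textbf{Main obstacle.} The delicate point is the bookkeeping of exclusion sets in Step 2. Concatenating two products into one is legitimate only if the tasks in $(t_{k-j-1},t_{k-m-1}]$ and in $(s,t_{k-j-1}]$ are excluded according to the same set $S_{\le t_{k-m-1}}$. I will check this with the convention that deletions after time $a$ do not affect the model at time $a$, and with the fact that $S_{\le t_m}$ is increasing in $m$. A second point to check is that the generic use of $D_m(t_k,\cdot)$ at the intermediate index $t_{k-j-1}$ agrees with its definition through Lemma~\ref{lC2}.
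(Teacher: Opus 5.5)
Your proposal is correct and is essentially the paper's proof. The paper's induction on $b$ repeatedly adds and subtracts $D_{j-b}(t_k,u)$ times a product over $(s,u]$, with $u=t_{k-j-1}$, and its induction step is exactly your Step~2 identity $\bigl(D_{m+1}(t_k,u)-D_m(t_k,u)\bigr)Q_{k-m-1}=D_{m+1}(t_k,s)-D_m(t_k,s)$. It is derived the same way: Lemma~\ref{lCC} in its generic form at both starting indices $s$ and $u$, plus concatenation of products that share the exclusion set $S_{\le t_{k-m-1}}$ by construction, so the obstacle you flag does not actually arise. Your summation by parts just performs the telescoping in one stroke rather than one term per induction step.

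One slip to fix: in Step~1 the coefficient of $Q_{k-j+i}$ should be $E_{j-i}-E_{j-i-1}$, not $E_{j-i-1}-E_{j-i}$ (check $j=1$). An opposite sign change when you pass to the reduced identity in Step~2 silently cancels it, so the identity you finally need, $D_j(t_k,s)=D_0(t_k,s)-\sum_{m=0}^{j-1}D_m(t_k,t_{k-m-1})\,d_0(t_{k-m-1},s)$, is the correct one.
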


\begin{proof}
Let \(u=t_{k-j-1}\), and define
\[
A_i(s):=
\prod_{\substack{a=s+1\\ a\notin S_{\le t_{k-j+i-1}}}}^{u}\hat P_a,
\qquad i=0,\dots,j+1 .
\]
By Lemma~\ref{lCC},
\[
D_{j+1}(t_k,s)=D_j(t_k,s)-D_j(t_k,u)A_0(s).
\]
We claim that for \(b=0,\dots,j-1\),
\[
D_{j+1}(t_k,s)
=
\sum_{i=0}^{b}D_{j-i}(t_k,u)\bigl(A_{i+1}(s)-A_i(s)\bigr)
+D_{j-b-1}(t_k,s)-D_{j-b-1}(t_k,u)A_{b+1}(s).
\]
For \(b=0\) and \(j\ge 1\), add and subtract
\(D_j(t_k,u)A_1(s)\), where \(u=t_{k-j-1}\). Then
\begin{align*}
D_{j+1}(t_k,s)
&=D_j(t_k,s)-D_j(t_k,u)A_0(s)\\
&=D_j(t_k,u)\bigl(A_1(s)-A_0(s)\bigr)
+\bigl(D_j(t_k,s)-D_j(t_k,u)A_1(s)\bigr).
\end{align*}
It remains to simplify the second term. By Lemma~\ref{lCC}, applied once
with starting index \(s\) and once with starting index \(u\), we have
\begin{align*}
D_j(t_k,s)
&=
D_{j-1}(t_k,s)
-
D_{j-1}(t_k,t_{k-j})
\prod_{\substack{a=s+1\\ a\notin S_{\le t_{k-j}}}}^{t_{k-j}}
\hat P_a,\\
D_j(t_k,u)
&=
D_{j-1}(t_k,u)
-
D_{j-1}(t_k,t_{k-j})
\prod_{\substack{a=u+1\\ a\notin S_{\le t_{k-j}}}}^{t_{k-j}}
\hat P_a .
\end{align*}
Moreover, by the product decomposition convention,
\[
\prod_{\substack{a=s+1\\ a\notin S_{\le t_{k-j}}}}^{t_{k-j}}
\hat P_a
=
\left(
\prod_{\substack{a=u+1\\ a\notin S_{\le t_{k-j}}}}^{t_{k-j}}
\hat P_a
\right)
A_1(s).
\]
Therefore,
\begin{align*}
D_j(t_k,s)-D_j(t_k,u)A_1(s)
=&\;
D_{j-1}(t_k,s)
-
D_{j-1}(t_k,t_{k-j})
\left(
\prod_{\substack{a=u+1\\ a\notin S_{\le t_{k-j}}}}^{t_{k-j}}
\hat P_a
\right)
A_1(s)\\
&-
D_{j-1}(t_k,u)A_1(s)
+
D_{j-1}(t_k,t_{k-j})
\left(
\prod_{\substack{a=u+1\\ a\notin S_{\le t_{k-j}}}}^{t_{k-j}}
\hat P_a
\right)
A_1(s)\\
=&\;
D_{j-1}(t_k,s)-D_{j-1}(t_k,u)A_1(s).
\end{align*}
Thus,
\[
D_{j+1}(t_k,s)
=
D_j(t_k,u)\bigl(A_1(s)-A_0(s)\bigr)
+
D_{j-1}(t_k,s)-D_{j-1}(t_k,u)A_1(s),
\]
which proves the base case.

Assume the claim holds for \(b=l<j-1\). Then we have
\[
D_{j+1}(t_k,s)=\sum_{i=0}^{l}D_{j-i}(t_k,u)\bigl(A_{i+1}(s)-A_i(s)\bigr)+D_{j-l-1}(t_k,s)
-
D_{j-l-1}(t_k,t_{k-j-1})
\prod_{\substack{a=s+1\\ a\notin S_{\le t_{k-j+l}}}}^{t_{k-j-1}}
\hat P_a .
\]
The second term can be written as
\begin{align*}
D_{j-l-1}(t_k,s)
-
D_{j-l-1}(t_k,t_{k-j-1})
\prod_{\substack{a=s+1\\ a\notin S_{\le t_{k-j+l}}}}^{t_{k-j-1}}
\hat P_a=& D_{j-l-1}(t_k,t_{k-j-1})
\left(
\prod_{\substack{a=s+1\\ a\notin S_{\le t_{k-j+l+1}}}}^{t_{k-j-1}}
\hat P_a
-
\prod_{\substack{a=s+1\\ a\notin S_{\le t_{k-j+l}}}}^{t_{k-j-1}}
\hat P_a
\right)\\
&\quad+
D_{j-l-1}(t_k,s)
-
D_{j-l-1}(t_k,t_{k-j-1})
\prod_{\substack{a=s+1\\ a\notin S_{\le t_{k-j+l+1}}}}^{t_{k-j-1}}
\hat P_a .
\end{align*}
It remains to simplify the second line. By Lemma~\ref{lCC},
\begin{align*}
D_{j-l-1}(t_k,s)
=&\;
D_{j-l-2}(t_k,s)
-
D_{j-l-2}(t_k,t_{k-j+l+1})
\prod_{\substack{a=s+1\\ a\notin S_{\le t_{k-j+l+1}}}}^{t_{k-j+l+1}}
\hat P_a,\\
D_{j-l-1}(t_k,t_{k-j-1})
=&\;
D_{j-l-2}(t_k,t_{k-j-1})
-
D_{j-l-2}(t_k,t_{k-j+l+1})
\prod_{\substack{a=t_{k-j-1}+1\\ a\notin S_{\le t_{k-j+l+1}}}}^{t_{k-j+l+1}}
\hat P_a .
\end{align*}
Therefore,
\begin{align*}
& D_{j-l-1}(t_k,s)
-
D_{j-l-1}(t_k,t_{k-j-1})
\prod_{\substack{a=s+1\\ a\notin S_{\le t_{k-j+l+1}}}}^{t_{k-j-1}}
\hat P_a\\
=&\;
D_{j-l-2}(t_k,s)
-
D_{j-l-2}(t_k,t_{k-j+l+1})
\prod_{\substack{a=s+1\\ a\notin S_{\le t_{k-j+l+1}}}}^{t_{k-j+l+1}}
\hat P_a-
D_{j-l-2}(t_k,t_{k-j-1})
\prod_{\substack{a=s+1\\ a\notin S_{\le t_{k-j+l+1}}}}^{t_{k-j-1}}
\hat P_a\\
&+
D_{j-l-2}(t_k,t_{k-j+l+1})
\prod_{\substack{a=t_{k-j-1}+1\\ a\notin S_{\le t_{k-j+l+1}}}}^{t_{k-j+l+1}}
\hat P_a
\prod_{\substack{a=s+1\\ a\notin S_{\le t_{k-j+l+1}}}}^{t_{k-j-1}}
\hat P_a \\
=&
D_{j-l-2}(t_k,s)
-
D_{j-l-2}(t_k,t_{k-j-1})
\prod_{\substack{a=s+1\\ a\notin S_{\le t_{k-j+l+1}}}}^{t_{k-j-1}}
\hat P_a .
\end{align*}
This proves the induction step from \(b=l\) to \(b=l+1\).

Taking \(b=j-1\), we obtain
\[
D_{j+1}(t_k,s)
=
\sum_{i=0}^{j-1}D_{j-i}(t_k,u)(A_{i+1}(s)-A_i(s))
+D_0(t_k,s)-D_0(t_k,u)A_j(s).
\]
Finally, add and subtract \(D_0(t_k,u)A_{j+1}(s)\). Since
\(A_{j+1}(s)=
\prod_{\substack{a=s+1\\ a\notin S_{\le t_k}}}^{u}\hat P_a\),
this gives exactly \eqref{eq:ppp02}.
\end{proof}


\subsection{Proof of Proposition \ref{T32}}

By Lemma \ref{lemmaaS}, for any time $t$ after the $k$-th unlearning
request and before the next unlearning request, we have
$S_{\le t}=S_{\le t_k}$ and $\mathbf S_{1:t}=\mathbf S_{1:t_k}$. Hence,
\begin{align*}
    \left\|w_t^{-S_{\le t}}-w_t^{-\mathbf{S}_{1:t}}\right\|
    &=
    \left\|
    \prod_{i=t_k+1}^t \tilde P_i
    \bigl(
    w_{t_k}^{-S_{\le t_k}}
    -
    w_{t_k}^{-\mathbf{S}_{1:t_k}}
    \bigr)
    \right\|\leq
    \rho^{t-t_k}
    \left\|w_{t_k}^{-S_{\le t_k}}-w_{t_k}^{-\mathbf{S}_{1:t_k}}\right\|.
\end{align*}
Therefore, it suffices to upper bound $\left\|w_{t_k}^{-S_{\le t_k}}-w_{t_k}^{-\mathbf{S}_{1:t_k}}\right\|.$

For any retained task $i\notin S_{\le t_k}$, the CL update rule \eqref{A1}
gives the following first-order conditions:
\begin{align*}
    &\nabla\hat F_i(w_i^{-S_{\le t_k}})
    +\lambda
    \bigl(
    w_i^{-S_{\le t_k}}
    -
    w_{i-1}^{-S_{\le t_k}}
    \bigr)=0,\;\nabla\hat F_i(w_i^{-\mathbf{S}_{1:i-1}})
    +\lambda
    \bigl(
    w_{i}^{-\mathbf{S}_{1:i-1}}
    -
    w_{i-1}^{-\mathbf{S}_{1:i-1}}
    \bigr)=0.
\end{align*}
By Taylor's theorem with Hessian-Lipschitz remainder,
\[
\nabla\hat F_i(w_i^{-S_{\le t_k}})
=
\nabla\hat F_i(w_i^{-\mathbf{S}_{1:i-1}})
+
\nabla^2\hat F_i(w_i^{-\mathbf{S}_{1:i-1}})(w_i^{-S_{\le t_k}}-w_i^{-\mathbf{S}_{1:i-1}})
+
R_i,
\]
where
\[
\|R_i\|\le \frac{L_3}{2}\left\|w_i^{-S_{\le t_k}}-w_i^{-\mathbf{S}_{1:i-1}}\right\|^2.
\]
Combining the two first-order conditions gives
\begin{align}
w_i^{-S_{\le t_k}}-w_i^{-\mathbf{S}_{1:i-1}}
&=
-\Bigl(
\nabla^2 \hat F_i(w_i^{-\mathbf{S}_{1:i-1}})
+\lambda I
\Bigr)^{-1}R_i +
\Bigl(
\nabla^2 \hat F_i(w_i^{-\mathbf{S}_{1:i-1}})
+\lambda I
\Bigr)^{-1}\lambda
\bigl(
w_{i-1}^{-S_{\le t_k}}
-
w_{i-1}^{-\mathbf{S}_{1:i-1}}
\bigr) \nonumber\\
&= -Q_i
+P_i
\bigl(
w_{i-1}^{-S_{\le t_k}}
-
w_{i-1}^{-\mathbf{S}_{1:i-1}}
\bigr),
\label{eq:new1}
\end{align}
where
\[
Q_i:=
\Bigl(
\nabla^2 \hat F_i(w_i^{-\mathbf{S}_{1:i-1}})
+\lambda I
\Bigr)^{-1}R_i,
\quad
P_i:=
\Bigl(
\nabla^2 \hat F_i(w_i^{-\mathbf{S}_{1:i-1}})
+\lambda I
\Bigr)^{-1}\lambda I .
\]

On the other hand, if $i\in S_{\le t_k}$, then task $i$ is skipped in
the retrained trajectory, and hence
\begin{align}
w_i^{-S_{\le t_k}}-w_i^{-\mathbf{S}_{1:i-1}}
&=
w_{i-1}^{-S_{\le t_k}}-w_i^{-\mathbf{S}_{1:i-1}} =
w_{i-1}^{-S_{\le t_k}}
-
w_{i-1}^{-\mathbf{S}_{1:i-1}}
+\Delta_i .
\label{eq:new2}
\end{align}
Moreover,
\begin{align}
w_i^{-S_{\le t_k}}-w_i^{-\mathbf{S}_{1:i}}
=
w_i^{-S_{\le t_k}}-w_i^{-\mathbf{S}_{1:i-1}}-\bar{\Delta}_i ,
\label{eq:new3}
\end{align}
where we set $\bar{\Delta}_i=\mathbf{0}$ if no unlearning request is received at time $i$.

Combining \eqref{eq:new1}--\eqref{eq:new3}, for each $i$ we obtain
\begin{align}
w_i^{-S_{\le t_k}}-w_i^{-\mathbf{S}_{1:i}}
&=
\mathbf{1}(i\notin S_{\le t_k})
\left[
P_i
\bigl(
w_{i-1}^{-S_{\le t_k}}
-
w_{i-1}^{-\mathbf{S}_{1:i-1}}
\bigr)
-Q_i
\right] \nonumber\\
&\quad+
\mathbf{1}(i\in S_{\le t_k})
\left[
w_{i-1}^{-S_{\le t_k}}
-
w_{i-1}^{-\mathbf{S}_{1:i-1}}
+\Delta_i
\right]
-\bar\Delta_i .
\label{eq:one-step-recursion}
\end{align}

Iterating \eqref{eq:one-step-recursion} from
$s_0:=\min S_{\le t_k}$ to $t_k$, and using
\[
w_{s_0-1}^{-S_{\le t_k}}-w_{s_0-1}^{-\mathbf{S}_{1:s_0-1}}=\mathbf{0},
\]
yields
\begin{align}
&w_{t_k}^{-S_{\le t_k}}-w_{t_k}^{-\mathbf{S}_{1:t_k}}
=
\sum_{i=s_0}^{t_k}
\prod_{\substack{j=i+1\\ j\notin S_{\le t_k}}}^{t_k}
P_j
\left(
\mathbf{1}(i\in S_{\le t_k})\Delta_i
-
\mathbf{1}(i\notin S_{\le t_k})Q_i
\right)
-
\sum_{i=1}^{k}
\prod_{\substack{j=t_i+1\\ j\notin S_{\le t_k}}}^{t_k}
P_j\bar{\Delta}_{t_i}.
\label{eq:iterated-recursion}
\end{align}
Using the definition of the Hessian-based correction terms
$\bar\Delta_{t_k}$ in~\eqref{bardelta}, \eqref{eq:iterated-recursion} can be
rewritten as
\begin{align}
&w_{t_k}^{-S_{\le t_k}}-w_{t_k}^{-\mathbf{S}_{1:t_k}}
\nonumber\\
=&
\sum_{i\in S_{\le t_k}}
\left(
\prod_{\substack{j=i+1\\ j\notin S_{\le t_k}}}^{t_k}P_j
-
\prod_{\substack{j=i+1\\ j\notin S_{\le t_k}}}^{t_k}\hat P_j
\right)\Delta_i-
\sum_{i=s_0}^{t_k}
\prod_{\substack{j=i+1\\ j\notin S_{\le t_k}}}^{t_k}
P_j\,
\mathbf{1}(i\notin S_{\le t_k})Q_i+
\sum_{i=1}^{k-1}
\left(
\prod_{\substack{j=t_i+1\\ j\notin S_{\le t_k}}}^{t_k}P_j
-
\prod_{\substack{j=t_i+1\\ j\notin S_{\le t_k}}}^{t_k}\hat P_j
\right)\bar{\Delta}_{t_i}.
\label{eq:diff-decomposition}
\end{align}

Taking norms and applying the triangle inequality gives
\begin{align}
\left\|w_{t_k}^{-S_{\le t_k}}-w_{t_k}^{-\mathbf{S}_{1:t_k}}\right\|
\leq&
\sum_{i\in S_{\le t_k}}
\left\|
\prod_{\substack{j=i+1\\ j\notin S_{\le t_k}}}^{t_k}P_j
-
\prod_{\substack{j=i+1\\ j\notin S_{\le t_k}}}^{t_k}\hat P_j
\right\|
\|\Delta_i\|
+
\sum_{i=s_0}^{t_k}
\left\|
\prod_{\substack{j=i+1\\ j\notin S_{\le t_k}}}^{t_k}
P_j
\right\|
\mathbf{1}(i\notin S_{\le t_k})\|Q_i\|
\nonumber\\
&+
\sum_{i=1}^{k-1}
\left\|
\prod_{\substack{j=t_i+1\\ j\notin S_{\le t_k}}}^{t_k}\hat P_j
-
\prod_{\substack{j=t_i+1\\ j\notin S_{\le t_k}}}^{t_k}P_j
\right\|
\|\bar{\Delta}_{t_i}\|.
\label{eq:Gk-bound}
\end{align}

The telescoping argument in Lemma \ref{inter2} applies analogously to
products over the retained index set. Since the number of retained
factors in $\prod_{\substack{j=i+1\\ j\notin S_{\le t_k}}}^{t_k}$ is $t_k-i-n^k_{[i+1,t_k]}$, we have
\begin{align}
\left\|
\prod_{\substack{j=i+1\\ j\notin S_{\le t_k}}}^{t_k}P_j
-
\prod_{\substack{j=i+1\\ j\notin S_{\le t_k}}}^{t_k}\hat P_j
\right\|
&\le
\hat{\rho}^{t_k-i-n^k_{[i+1,t_k]}}
-
\rho^{t_k-i-n^k_{[i+1,t_k]}},
\label{eq:P-hatP-bound}
\end{align}
and
\begin{align}
\left\|
\prod_{\substack{j=i+1\\ j\notin S_{\le t_k}}}^{t_k}P_j
\right\|
\le
\rho^{t_k-i-n^k_{[i+1,t_k]}}.
\label{eq:P-product-bound}
\end{align}
Similarly, for the product starting from $t_i+1$, the number of retained
factors is $t_k-t_i-n^k_{[t_i+1,t_k]}$, and hence
\begin{align}
\left\|
\prod_{\substack{j=t_i+1\\ j\notin S_{\le t_k}}}^{t_k}P_j
-
\prod_{\substack{j=t_i+1\\ j\notin S_{\le t_k}}}^{t_k}\hat P_j
\right\|
&\le
\hat{\rho}^{t_k-t_i-n^k_{[t_i+1,t_k]}}
-
\rho^{t_k-t_i-n^k_{[t_i+1,t_k]}}.
\label{eq:correction-product-bound}
\end{align}
Moreover, by the CL first-order condition and the $L$-Lipschitzness of
the loss, $\|\Delta_i\|\le \frac{L}{\lambda}.$

It remains to upper bound $\|Q_i\|$. By the definition of $Q_i$ and the
$L_3$ Hessian-Lipschitz condition,
\begin{align}
\|Q_i\|
&\le
\left\|
\Bigl(
\nabla^2 \hat F_i(w_i^{-\mathbf{S}_{1:i-1}})
+\lambda I
\Bigr)^{-1}
\right\|
\|R_i\| \nonumber\\
&\le
\frac{1}{\mu+\lambda}
\cdot
\frac{L_3}{2}
\left\|
w_i^{-S_{\le t_k}}-w_i^{-\mathbf{S}_{1:i-1}}
\right\|^2 .
\label{eq:Qi-bound}
\end{align}

Substituting \eqref{eq:P-hatP-bound}, \eqref{eq:P-product-bound},
\eqref{eq:correction-product-bound}, \eqref{eq:Qi-bound}, and
$\|\Delta_i\|\le L/\lambda$ into \eqref{eq:Gk-bound} gives the desired
upper bound on $\left\|w_{t_k}^{-S_{\le t_k}}-w_{t_k}^{-\mathbf{S}_{1:t_k}}\right\|$. Combining this with the initial contraction
\[
\left\|w_t^{-S_{\le t}}-w_t^{-\mathbf{S}_{1:t}}\right\|
\le
\rho^{t-t_k}\left\|w_{t_k}^{-S_{\le t_k}}-w_{t_k}^{-\mathbf{S}_{1:t_k}}\right\|
\]
completes the proof.

\subsection{Performance guarantees of the forgetting enhanced Hessian-based unlearning algorithm}\label{APPD}

\begin{theorem}\label{th:T3.1}
For each $i$-th unlearning request $t_i \in U_t=\{t_1,\dots,t_k\}$, let
$n^{i}_{[a,b]}$ denote the number of tasks in the time interval $[a,b]$
that have been deleted by time $t_i$. Choose $\lambda>\max\{0,\nu-\mu\}$. Then, the approximation error $\|w_t^{-S_{\le t}}-w_t^{-\mathbf{S}_{1:t}}\|$ of the unlearning operator $\mathcal{R}_{\mathcal A}$ in~\eqref{modified_A3} is bounded by

\begin{align}
    \gamma_t(\mathbf{S}_{1:t}):=&\sum_{i=1}^{k}\sum_{s\in S_{t_i}'}
\bigg(\rho^{t-t_i-n^k_{[t_i+1,t]}}
\left(
\hat\rho^{t_i-s-n^i_{[s+1,t_i]}}
-
\rho^{t_i-s-n^i_{[s+1,t_i]}}
\right)
\frac{L(M-\mu+\nu)}{\lambda\nu}
\nonumber\\
&+\frac{L(\mu+\lambda)}{\lambda\mu}
{\kappa}\rho^{t-t_i-n^k_{[t_i+1,t]}}({\rho}^{t_i-s-n^k_{[s+1,t_i]}}-{\rho}^{t_i-s-n^i_{[s+1,t_i]}})\bigg)+
\sum_{i=1}^{k}\sum_{s\in S_{t_i}\setminus S_{t_i}'}
\rho^{t-s-n^k_{[s+1,t]}}\frac{L}{\lambda}.
\label{loss3.11}
\end{align}
When $\nu=0$, the $\gamma_t(\mathbf{S}_{1:t})$ becomes:
\begin{align}
    \gamma_t(\mathbf{S}_{1:t}):=&\sum_{i=1}^{k}\sum_{s\in S_{t_i}'}\bigg(\frac{L{\kappa}(\mu+\lambda)}{\lambda\mu}
\rho^{t-t_i-n^k_{[t_i+1,t]}}({\rho}^{t_i-s-n^k_{[s+1,t_i]}}-{\rho}^{t_i-s-n^i_{[s+1,t_i]}})\nonumber\\
&+\frac{L(M-\mu)}{\lambda(\mu+\lambda)}\rho^{t-t_i-n^k_{[t_i+1,t]}}(t_i-s-n^i_{[s+1,t_i]})
\rho^{t_i-s-n^i_{[s+1,t_i]}}\bigg)
+\sum_{i=2}^{k}\sum_{s\in S_{t_i}\setminus S_{t_i}'}
\rho^{\,t-s-n^k_{[s+1,t]}}\frac{L}{\lambda}.
\label{loss3.1}
\end{align}
Further, the output model $\tilde{w}_t^{-\mathbf{S}_{1:t}}$ of modified
Alg.~\ref{A3} satisfies $(\varepsilon,\delta)$-certified continual unlearning as defined in Definition~\ref{D1}, and achieves the post-unlearning excess risk upper bound as
defined in Definition~\ref{D2}:
\[
L\bigg(\frac{\sqrt{2d\ln(\frac{1.25}{\delta})}}{\varepsilon}+1\bigg)
\gamma_t(\mathbf{S}_{1:t})
+\mathcal{E}^{-S_{\le t}}(\lambda),
\]
where $\mathcal{E}^{-S_{\le t}}(\lambda)$ is given in \eqref{CLG}.
\end{theorem}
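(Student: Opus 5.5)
The proof has three parts: an exact error decomposition, a bound on each term, and the standard reduction to certified unlearning and excess risk. The key point is that the modified operator \eqref{modified_A3} discards all stored curvature at each unlearning time. The error therefore never carries the recursive corrections $\bar\Delta_\tau$ of Lemma~\ref{lC2}. Instead, each correction step acts as a ``reset'' whose residual afterwards propagates only through exact dynamics.

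\textbf{Step 1: decomposition and propagation to time $t$.} I write $e_\tau := w_\tau^{-S_{\le t}\setminus\{\tau+1,\dots,t\}} - w_\tau^{-\mathbf S_{1:\tau}}$ and track it along the trajectory, as in the proof of \eqref{eqq4}.
\begin{itemize}
\item At a retained step $\tau\notin S_{\le t}$, Lemma~\ref{lemmaaS} gives $e_\tau = \tilde P_\tau e_{\tau-1}$.
\item At a deleted step $\tau\in S_{\le t}$, one gets $e_\tau = e_{\tau-1} + \Delta_\tau$.
\item At an unlearning time $t_i$, one subtracts the correction $\sum_{s\in S'_{t_i}}\prod_{j=s+1,\,j\notin S_{\le t_i}}^{t_i}\hat P_j\,\Delta_s$.
\end{itemize}
Unrolling to time $t$, the terms from $s\in S_{t_i}\setminus S'_{t_i}$ are never corrected. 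They appear as $\prod_{j>s,\,j\notin S_{\le t}}\tilde P_j\,\Delta_s$, whose norm is at most $\rho^{t-s-n^k_{[s+1,t]}}L/\lambda$ by $\|\tilde P_j\|\le\rho$ and $\|\Delta_s\|\le L/\lambda$. This gives the last sum in \eqref{eeee}.

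For $s\in S'_{t_i}$, the contribution is
\[
\prod_{\substack{j=t_i+1\\ j\notin S_{\le t}}}^{t}\tilde P_j\left(\prod_{\substack{j=s+1\\ j\notin S_{\le t}}}^{t_i}\tilde P_j-\prod_{\substack{j=s+1\\ j\notin S_{\le t_i}}}^{t_i}\hat P_j\right)\Delta_s .
\]
The definition of $S'_{t_i}$ matters here. Since $s>t_{i-1}$, no earlier correction involved $\Delta_s$, and no stored curvature is reused.

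\textbf{Step 2: bounding each $S'$-contribution.} I split the bracket by adding and subtracting $\prod_{j=s+1,\,j\notin S_{\le t_i}}^{t_i}\tilde P_j$.
\begin{itemize}
\item The first piece is $\prod_{j\notin S_{\le t}}\tilde P_j-\prod_{j\notin S_{\le t_i}}\tilde P_j$. This is the mismatch caused by tasks in $(s,t_i]$ that are deleted later, i.e.\ asynchronous requests after $t_i$. The exact-curvature version of Lemma~\ref{inter} bounds it by $\frac{\mu+\lambda}{\mu}\kappa\,\rho^{t_i-s-n^k_{[s+1,t_i]}}\bigl(1-\rho^{n^k_{[s+1,t_i]}-n^i_{[s+1,t_i]}}\bigr)$. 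Multiplying by the outer contraction $\rho^{t-t_i-n^k_{[t_i+1,t]}}$ and by $L/\lambda$ gives the $\kappa$-term in \eqref{eeee}.
\item The second piece is $\prod_{j\notin S_{\le t_i}}(\tilde P_j-\hat P_j)$ over $(s,t_i]$. Lemma~\ref{inter2}, applied to the retained index set exactly as in \eqref{eq:P-hatP-bound}, bounds it by $\frac{M-\mu+\nu}{\nu}\beta^3_{i,s}$. With the outer factor and $L/\lambda$, this produces $C_2\rho^{t-t_i-n^k_{[t_i+1,t]}}\beta^3_{i,s}$.
\end{itemize}

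\textbf{Step 3: certification and excess risk.} With $\gamma_t$ established as a deterministic bound on the sensitivity $\|w_t^{-S_{\le t}}-w_t^{-\mathbf S_{1:t}}\|$, the Gaussian-mechanism argument and the Jensen/Lipschitz computation from the proof of Theorem~\ref{T1.1} carry over verbatim. Together with Theorem~\ref{thm:CL_excess}, they give the certified guarantee and the excess-risk bound.

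\textbf{Main obstacle.} I expect the main obstacle to be Step 1's bookkeeping. One must show rigorously that discarding the stored information leaves no cross terms between the corrections at different $t_i$. One must also handle carefully the exponent in the outer factor, since the $\tilde P_j$ there use curvature at the current, already-corrected states. I would first verify the unrolled identity by induction on $\tau$ in the style of Lemma~\ref{lC2}, treating each unlearning time as restarting the recursion.
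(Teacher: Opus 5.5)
Your proposal is correct and follows the paper's proof essentially step for step. It uses the same exact unrolled identity \eqref{eq:APPD-decomposition}; the paper unrolls only to $t_k$ and then contracts by $\rho^{t-t_k}$, which is equivalent since $n^k_{[t_k+1,t]}=0$. It uses the same add-and-subtract of $\prod_{j\notin S_{\le t_i}}\tilde P_j$, bounded via the exact-curvature case of Lemma~\ref{inter} and via Lemma~\ref{inter2}, and the same Gaussian-mechanism and Lipschitz wrap-up.

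Two minor fixes are needed. First, your second piece should be the difference of products $\prod\tilde P_j-\prod\hat P_j$, not a product of differences. Second, you do not address the $\nu=0$ bound \eqref{loss3.1}; it follows by using the $\nu=0$ branch of Lemma~\ref{inter2}, i.e.\ $\frac{M-\mu}{\mu+\lambda}(t_i-s-n^i_{[s+1,t_i]})\rho^{t_i-s-n^i_{[s+1,t_i]}}$, in place of $\frac{M-\mu+\nu}{\nu}\beta^3_{i,s}$.
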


\begin{proof}[Proof of Theorem \ref{th:T3.1}]
By Lemma \ref{lemmaaS}, for any time $t$ after the $k$-th unlearning
request and before the next unlearning request, we have
$S_{\le t}=S_{\le t_k}$ and $\mathbf S_{1:t}=\mathbf S_{1:t_k}$. Hence,
\begin{align*}
    \left\|w_t^{-S_{\le t}}-w_t^{-\mathbf{S}_{1:t}}\right\|
    &=
    \left\|
    \prod_{i=t_k+1}^t \tilde P_i
    \bigl(
    w_{t_k}^{-S_{\le t_k}}
    -
    w_{t_k}^{-\mathbf{S}_{1:t_k}}
    \bigr)
    \right\|\leq
    \rho^{t-t_k}
    \left\|w_{t_k}^{-S_{\le t_k}}-w_{t_k}^{-\mathbf{S}_{1:t_k}}\right\|.
\end{align*}
Therefore, it suffices to upper bound $\left\|w_{t_k}^{-S_{\le t_k}}-w_{t_k}^{-\mathbf{S}_{1:t_k}}\right\|.$

We now expand
\(w_{t_k}^{-S_{\le t_k}}-w_{t_k}^{-\mathbf S_{1:t_k}}\). Consider first
a retained task \(i\notin S_{\le t_k}\). Lemma \ref{lemmaaS} gives
\begin{align}
w_i^{-S_{\le t_k}}-w_i^{-\mathbf S_{1:i-1}}
=
\tilde P_i
\bigl(
w_{i-1}^{-S_{\le t_k}}-
w_{i-1}^{-\mathbf S_{1:i-1}}
\bigr).
\label{eq:APPD-retained}
\end{align}
On the other hand, if \(i\in S_{\le t_k}\), then task \(i\) is skipped in
the retrained trajectory, and hence
\begin{align}
w_i^{-S_{\le t_k}}-w_i^{-\mathbf S_{1:i-1}}
=
w_{i-1}^{-S_{\le t_k}}-w_{i-1}^{-\mathbf S_{1:i-1}}
+\Delta_i .
\label{eq:APPD-deleted}
\end{align}
At an unlearning time \(t_i\), the modified unlearning algorithm~\eqref{modified_A3} gives
\[
w_{t_i}^{-\mathbf S_{1:t_i}}
=
w_{t_i}^{-\mathbf S_{1:t_i-1}}
+
\bar\Delta_{t_i},
\]
where
$\bar\Delta_{t_i}
=
\sum_{s\in S_{t_i}'}
\prod_{\substack{j=s+1\\ j\notin S_{\le t_i}}}^{t_i}
\hat P_j\Delta_s .$
Thus the approximation error is reduced by \(\bar\Delta_{t_i}\):
\[
w_{t_i}^{-S_{\le t_k}}-w_{t_i}^{-\mathbf S_{1:t_i}}
=
w_{t_i}^{-S_{\le t_k}}-w_{t_i}^{-\mathbf S_{1:t_i-1}}
-\bar\Delta_{t_i}.
\]

Iterating \eqref{eq:APPD-retained} and \eqref{eq:APPD-deleted} from the
earliest deleted task up to \(t_k\), and subtracting the correction terms
at all unlearning times, gives
\begin{align}
&w_{t_k}^{-S_{\le t_k}}-w_{t_k}^{-\mathbf S_{1:t_k}}
=
\sum_{i=1}^{k}\sum_{s\in S_{t_i}}
\prod_{\substack{a=s+1\\ a\notin S_{\le t_k}}}^{t_k}
\tilde P_a\Delta_s
-
\sum_{i=1}^{k}
\prod_{\substack{a=t_i+1\\ a\notin S_{\le t_k}}}^{t_k}
\tilde P_a\bar\Delta_{t_i}.
\label{eq:APPD-first-expansion}
\end{align}
Substituting the definition of \(\bar\Delta_{t_i}\) into
\eqref{eq:APPD-first-expansion}, we obtain
\begin{align}
w_{t_k}^{-S_{\le t_k}}-w_{t_k}^{-\mathbf S_{1:t_k}}
=&
\sum_{i=1}^{k}\sum_{s\in S_{t_i}'}
\prod_{\substack{a=t_i+1\\ a\notin S_{\le t_k}}}^{t_k}
\tilde P_a
\left(
\prod_{\substack{j=s+1\\ j\notin S_{\le t_k}}}^{t_i}
\tilde P_j
-
\prod_{\substack{j=s+1\\ j\notin S_{\le t_i}}}^{t_i}
\hat P_j
\right)\Delta_s
+
\sum_{i=1}^{k}\sum_{s\in S_{t_i}\setminus S_{t_i}'}
\prod_{\substack{a=s+1\\ a\notin S_{\le t_k}}}^{t_k}
\tilde P_a\Delta_s .
\label{eq:APPD-decomposition}
\end{align}

Taking norms in \eqref{eq:APPD-decomposition} yields
\begin{align}
\left\|
w_{t_k}^{-S_{\le t_k}}-w_{t_k}^{-\mathbf S_{1:t_k}}
\right\|
\le&
\sum_{i=1}^{k}\sum_{s\in S_{t_i}'}
\left\|
\prod_{\substack{a=t_i+1\\ a\notin S_{\le t_k}}}^{t_k}
\tilde P_a
\right\|
\left\|
\prod_{\substack{j=s+1\\ j\notin S_{\le t_k}}}^{t_i}
\tilde P_j
-
\prod_{\substack{j=s+1\\ j\notin S_{\le t_i}}}^{t_i}
\hat P_j
\right\|
\|\Delta_s\|
\nonumber\\
&+
\sum_{i=1}^{k}\sum_{s\in S_{t_i}\setminus S_{t_i}'}
\left\|
\prod_{\substack{a=s+1\\ a\notin S_{\le t_k}}}^{t_k}
\tilde P_a
\right\|
\|\Delta_s\|.
\label{eq:APPD-norm}
\end{align}
For the first product in the first term,
\[
\left\|
\prod_{\substack{a=t_i+1\\ a\notin S_{\le t_k}}}^{t_k}
\tilde P_a
\right\|
\le
\rho^{t_k-t_i-n^k_{[t_i+1,t_k]}}.
\]

The same telescoping argument as in Lemma~\ref{inter2} applies to
products over the retained index set, with the number of retained factors
equal to \(t_i-s-n^i_{[s+1,t_i]}\). Thus, for the difference between the exact and approximate Hessian products,
Lemma~\ref{inter2} and Lemma~\ref{inter} gives
\begin{align}
&\left\|
\prod_{\substack{j=s+1\\ j\notin S_{\le t_k}}}^{t_i}
\tilde P_j
-
\prod_{\substack{j=s+1\\ j\notin S_{\le t_i}}}^{t_i}
\hat P_j
\right\|\nonumber\\
\le &\left\|
\prod_{\substack{j=s+1\\ j\notin S_{\le t_k}}}^{t_i}
\tilde P_j
-
\prod_{\substack{j=s+1\\ j\notin S_{\le t_i}}}^{t_i}
\tilde P_j
\right\|+\left\|
\prod_{\substack{j=s+1\\ j\notin S_{\le t_i}}}^{t_i}
\tilde P_j
-
\prod_{\substack{j=s+1\\ j\notin S_{\le t_i}}}^{t_i}
\hat P_j
\right\|\nonumber\\
\le &\frac{\mu+\lambda}{\mu}
{\kappa}({\rho}^{t_i-s-n^k_{[s+1,t_i]}}-{\rho}^{t_i-s-n^i_{[s+1,t_i]}})+\frac{M-\mu+\nu}{\nu}
\left(
\hat\rho^{t_i-s-n^i_{[s+1,t_i]}}
-
\rho^{t_i-s-n^i_{[s+1,t_i]}}
\right).
\label{eq:APPD-product-diff}
\end{align}
Moreover, the first-order optimality condition of the CL update and the \(L\)-Lipschitz condition implies
\[
\|\Delta_s\|
=
\|w_{s-1}^{-\mathbf S_{1:s-1}}-w_s^{-\mathbf S_{1:s-1}}\|
\le
\frac{L}{\lambda}.
\]
Therefore, the first term in \eqref{eq:APPD-norm} is bounded by
\begin{align}
&\sum_{i=1}^{k}\sum_{s\in S_{t_i}'}
\rho^{t_k-t_i-n^k_{[t_i+1,t_k]}}
\left(
\hat\rho^{t_i-s-n^i_{[s+1,t_i]}}
-
\rho^{t_i-s-n^i_{[s+1,t_i]}}
\right)
\frac{L(M-\mu+\nu)}{\lambda\nu}\nonumber\\
&+\frac{L(\mu+\lambda)}{\lambda\mu}
{\kappa}\rho^{t_k-t_i-n^k_{[t_i+1,t_k]}}({\rho}^{t_i-s-n^k_{[s+1,t_i]}}-{\rho}^{t_i-s-n^i_{[s+1,t_i]}}).
\label{eq:APPD-first-bound}
\end{align}
Similarly, for the second term in \eqref{eq:APPD-norm},
\begin{align}
&\sum_{i=1}^{k}\sum_{s\in S_{t_i}\setminus S_{t_i}'}
\left\|
\prod_{\substack{a=s+1\\ a\notin S_{\le t_k}}}^{t_k}
\tilde P_a
\right\|
\|\Delta_s\|
\le
\sum_{i=1}^{k}\sum_{s\in S_{t_i}\setminus S_{t_i}'}
\rho^{t_k-s-n^k_{[s+1,t_k]}}\frac{L}{\lambda}.
\label{eq:APPD-second-bound}
\end{align}
Combining \eqref{eq:APPD-first-bound} and
\eqref{eq:APPD-second-bound}, we obtain
\begin{align*}
\left\|w_t^{-S_{\le t}}-w_t^{-\mathbf S_{1:t}}\right\|
\le&
\sum_{i=1}^{k}\sum_{s\in S_{t_i}'}
\bigg(\rho^{t-t_i-n^k_{[t_i+1,t]}}
\left(
\hat\rho^{t_i-s-n^i_{[s+1,t_i]}}
-
\rho^{t_i-s-n^i_{[s+1,t_i]}}
\right)
\frac{L(M-\mu+\nu)}{\lambda\nu}
\\
&+\frac{L(\mu+\lambda)}{\lambda\mu}
{\kappa}\rho^{t-t_i-n^k_{[t_i+1,t]}}({\rho}^{t_i-s-n^k_{[s+1,t_i]}}-{\rho}^{t_i-s-n^i_{[s+1,t_i]}})\bigg)\\
&+
\sum_{i=1}^{k}\sum_{s\in S_{t_i}\setminus S_{t_i}'}
\rho^{t-s-n^k_{[s+1,t]}}\frac{L}{\lambda}.
\end{align*}
This is exactly the bound in \eqref{loss3.11}.

When \(\nu=0\), Lemma~\ref{inter2} and Lemma~\ref{inter} gives
\begin{align}
\left\|
\prod_{\substack{j=s+1\\ j\notin S_{\le t_k}}}^{t_i}
\tilde P_j
-
\prod_{\substack{j=s+1\\ j\notin S_{\le t_i}}}^{t_i}
\hat P_j
\right\|
\le &\frac{\mu+\lambda}{\mu}
{\kappa}({\rho}^{t_i-s-n^k_{[s+1,t_i]}}-{\rho}^{t_i-s-n^i_{[s+1,t_i]}})+\frac{M-\mu}{\mu+\lambda}(t_i-s-n^i_{[s+1,t_i]})
\rho^{t_i-s-n^i_{[s+1,t_i]}}.\nonumber
\end{align}
Substituting this bound into \eqref{eq:APPD-norm} and repeating the same
steps gives \eqref{loss3.1}.

Finally, since
\(\gamma_t(\mathbf S_{1:t})\) upper bounds
\(\|w_t^{-S_{\le t}}-w_t^{-\mathbf S_{1:t}}\|\), the Gaussian mechanism
with
\[
\sigma=
\gamma_t(\mathbf S_{1:t})
\frac{\sqrt{2\ln(1.25/\delta)}}{\varepsilon}
\]
guarantees \((\varepsilon,\delta)\)-certified continual unlearning. The
post-unlearning excess risk bound follows by the same Lipschitz
decomposition as in the proof of Theorem~\ref{T1.1}:
\[
\mathcal E_{\rm LU}
\le
L\left(
\frac{\sqrt{2d\ln(1.25/\delta)}}{\varepsilon}+1
\right)
\gamma_t(\mathbf S_{1:t})
+
\mathcal E^{-S_{\le t}}(\lambda).
\]
This completes the proof.
\end{proof}

\subsection{Stronger certified unlearning}\label{strongT3.1}

\begin{algorithm}
\caption{Forgetting-enhanced Hessian-based CLU with stronger certified guarantee}
\label{A5}
\begin{algorithmic}[1]
\STATE Initialize $U_0=\emptyset$, $S_{\le 0}=\emptyset$, $\mathbf{S}_{1:0}=\emptyset$.
\FOR{$t=1$ to $T$}
    \item[]\textbf{Stage I. learning and precomputation on task $t$}
    \STATE Receive $D_t$, update ${w}_{t}^{-\mathbf{S}_{1:t-1}}\gets \ell_2\text{-CL}\!(w_{t-1}^{-\mathbf{S}_{1:t-1}},\,D_t)$. 
    \STATE Compute and store $\Delta_t$ and $\hat H_t$ defined in \eqref {model_H_update}.  \\[1pt]
    \item[]\textbf{Stage II. system unlearning and model publishing}
    \STATE Receive deletion request $S_t\subseteq [t]\setminus S_{\le t-1}$.
    \IF{$S_t\neq\emptyset$}
        \STATE Update $S_{\le t}\gets S_t\cup S_{\le t-1}$, $\mathbf{S}_{1:t}\gets (\mathbf{S}_{1:t-1},S_t)$, and $U_t\gets \{t\}\cup U_{t-1}$.
        \STATE Set ${w}_{t}^{-\mathbf{S}_{1:t}}\gets \mathcal{R}_{\mathcal A}\!\left(
w_t^{-\mathbf{S}_{1:t-1}},\,
\mathbf{D}_{1:t},\,
\mathbf{S}_{1:t}
\right)$ with $\mathcal{R}_{\mathcal A}$ in \eqref{modified_A3}.
\STATE Discard all the stored $\hat H_\tau$ and $\Delta_\tau$ for $\tau<t$.
\STATE Draw $\epsilon_t\!\sim\!\mathcal N(\mathbf{0},\!\sigma^2 I)$ with $\sigma\!\!=\!\!\gamma_t(\mathbf{S}_{1:t})\!\frac{\sqrt{2\ln(1.25/\delta)}}{\varepsilon}$ and $\gamma_t(\mathbf{S}_{1:t})$ defined in~\eqref{eeees}, and ${w}_{t}^{-\mathbf{S}_{1:t}}\gets {w}_{t}^{-\mathbf{S}_{1:t}}+\epsilon_t$.
        \STATE \textbf{Output} $\tilde{w}_t^{-\mathbf{S}_{1:t}}\gets {w}_{t}^{-\mathbf{S}_{1:t}}$.
         
    \ELSE
        \STATE Set $S_{\le t}\!\gets\! S_{\le t-1}$, $\mathbf{S}_{1:t}\!\gets\! \mathbf{S}_{1:t-1}$, $U_t\!\gets\! U_{t-1}$, and ${w}_{t}^{-\mathbf{S}_{1:t}}\!\gets\!{w}_{t}^{-\mathbf{S}_{1:t-1}}$.
        \STATE \textbf{Output} $\tilde{w}_t^{-\mathbf{S}_{1:t}}\gets {w}_{t}^{-\mathbf{S}_{1:t}}$.
    \ENDIF

\ENDFOR
\end{algorithmic}
\end{algorithm}

\begin{corollary}\label{csT3.1}
For each $i$-th unlearning request $t_i \in U_t=\{t_1,\dots,t_k\}$, let
$n^{i}_{[a,b]}$ denote the number of tasks in the time interval $[a,b]$
that have been deleted by time $t_i$. Choose $\lambda>\max\{0,\nu-\mu\}$. Then, with probability at least $1-\delta_2$, the approximation error $\|w_t^{-S_{\le t}}-\tilde w_t^{-\mathbf{S}_{1:t}}\|$ in Alg.~\ref{A5} is bounded by
\begin{align}
    \gamma_t(\mathbf{S}_{1:t}):=&\sum_{i=1}^{k}\sum_{s\in S_{t_i}'}
(1+\xi)^{k-i+1}\bigg(\rho^{t-t_i-n^k_{[t_i+1,t]}}
\left(
\hat\rho^{t_i-s-n^i_{[s+1,t_i]}}
-
\rho^{t_i-s-n^i_{[s+1,t_i]}}
\right)
\frac{L(M-\mu+\nu)}{\lambda\nu}
\nonumber\\
&+\frac{L(\mu+\lambda)}{\lambda\mu}
{\kappa}\rho^{t-t_i-n^k_{[t_i+1,t]}}({\rho}^{t_i-s-n^k_{[s+1,t_i]}}-{\rho}^{t_i-s-n^i_{[s+1,t_i]}})\bigg)+
\sum_{i=1}^{k}\sum_{s\in S_{t_i}\setminus S_{t_i}'}
\rho^{t-s-n^k_{[s+1,t]}}\frac{L}{\lambda}(1+\xi)^{k-i+1}.\label{eeees}
\end{align}

Further, with probability at least $1-\delta_2$, the output model $\tilde{w}_t^{-\mathbf{S}_{1:t}}$ and the internal state of the system in Alg.~\ref{A5} satisfies $(\varepsilon,\delta)$-certified continual unlearning as defined in Definition~\ref{D1}. $\tilde{w}_t^{-\mathbf{S}_{1:t}}$ achieves the post-unlearning excess risk upper bound as
defined in Definition~\ref{D2}:
\[
L\bigg(\frac{\sqrt{2d\ln(\frac{1.25}{\delta})}}{\varepsilon}+1\bigg)
\gamma_t(\mathbf{S}_{1:t})
+\mathcal{E}^{-S_{\le t}}(\lambda),
\]
where $\mathcal{E}^{-S_{\le t}}(\lambda)$ is given in Theorem~\ref{thm:CL_excess}.

\end{corollary}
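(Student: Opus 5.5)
The plan is to combine the deterministic decomposition of Theorem~\ref{th:T3.1} with the noise-propagation argument behind Corollary~\ref{T1.2}, by induction over the unlearning times $t_1,\dots,t_k$. In Alg.~\ref{A5} the internal state after each unlearning time $t_i$ is the noisy model $w_{t_i}^{-\mathbf S_{1:t_i}}+\epsilon_{t_i}$. Subsequent training therefore starts from a perturbed point. Between unlearning times, Lemma~\ref{lemmaaS} still applies: the first-order conditions of \eqref{A1} for the noisy trajectory and for the retrained trajectory give an error that contracts by $\tilde P_a$, with $\|\tilde P_a\|\le\rho$, at every retained step. The localization in Assumption~\ref{Assum2} is used to keep the noisy iterates inside $\mathcal W$, which we assume throughout.

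The key steps run in the following order.

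\emph{Step 1.} Apply a Gaussian tail bound together with a union bound over all $d$ coordinates and all $T$ times. With probability at least $1-\delta_2$, every draw satisfies $\|\epsilon_{t_i}\|\le \xi\,\gamma_{t_i}(\mathbf S_{1:t_i})$, with $\xi$ as in Corollary~\ref{T1.2}, since $\sigma\sqrt{2\ln(2dT/\delta_2)}\,d$ dominates this norm.

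\emph{Step 2.} At time $t_k$, redo the expansion \eqref{eq:APPD-decomposition}. The error $w_{t_k}^{-S_{\le t_k}}-w_{t_k}^{-\mathbf S_{1:t_k}}$ now decomposes into two parts:
\begin{itemize}
\item the deterministic contributions of the tasks in $S_{t_k}$ and $S_{t_k}\setminus S'_{t_k}$, bounded exactly as in \eqref{eq:APPD-first-bound}--\eqref{eq:APPD-second-bound};
\item the error carried over from $t_{k-1}$, namely the previous approximation error plus $\epsilon_{t_{k-1}}$, propagated through $\prod\tilde P_a$ and hence contracted by the appropriate power of $\rho$.
\end{itemize}
The second part is bounded by $(1+\xi)$ times the previous error bound. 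This works because the correction \eqref{modified_A3} only touches the tasks in $S'_{t_k}$, which are all later than $t_{k-1}$, and the stored $\hat H,\Delta$ are recomputed from the noisy restart.

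\emph{Step 3.} Unrolling this recursion, each term originating at request $i$ is multiplied by $(1+\xi)$ once for every subsequent noise injection, including the one at $t_i$ itself. This produces the factor $(1+\xi)^{k-i+1}$ in \eqref{eeees}; the factor is then bounded, as in Corollary~\ref{T1.2}, to include the final published noise. Certification of both the output and the internal state then follows from the Gaussian mechanism at each $t_i$ with sensitivity $\gamma_t$, combined with post-processing immunity for all steps between unlearning times. The excess-risk bound follows from the same Lipschitz-plus-Jensen decomposition used in Theorem~\ref{T1.1}, added to $\mathcal E^{-S_{\le t}}(\lambda)$.

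The main obstacle is Step 2. After a noisy restart, the curvature factors $\tilde P_a$ are evaluated along a different trajectory, and the $\Delta_s$ for the new tasks are computed from noisy iterates. I must check that the bounds $\|\Delta_s\|\le L/\lambda$ and Lemmas~\ref{inter2}--\ref{inter} still hold verbatim. They should, since they rely only on the first-order conditions and the uniform Hessian bounds, and not on which trajectory the iterates come from.

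I also need to make sure the old noise enters only multiplicatively through the carried-over error and does not create new cross terms with the Hessian correction. If this fails, the fallback is to treat the carried-over error as an additional initial offset $A_{t_{k-1}}$ in the induction \eqref{eqq4} and bound its propagated norm directly by $\rho^{\cdot}(1+\xi)\gamma_{t_{k-1}}$.
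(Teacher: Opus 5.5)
Your route is the one the paper intends. The paper gives no separate proof of this corollary; the bound is meant to follow from the deterministic decomposition of Theorem~\ref{th:T3.1} plus the noise-accumulation reasoning sketched after Corollary~\ref{T1.2}. As written, however, two steps do not go through.

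\textbf{Noise calibration.} The bookkeeping in Steps 1--2 does not close. You bound $\|\epsilon_{t_i}\|\le\xi\,\gamma_{t_i}(\mathbf S_{1:t_i})$ with $\gamma_{t_i}$ the quantity in \eqref{eeees}. At $t=t_i$ that quantity is already a \emph{post}-noise bound: it equals $(1+\xi)P_i$, where $P_i$ bounds the pre-noise distance between $w_{t_i}^{-S_{\le t_i}}$ and the corrected state $w_{t_i}^{-\mathbf S_{1:t_i-1}}+\bar\Delta_{t_i}$. Concluding from Step 1 that the post-noise error is at most $\gamma_{t_i}$ would require $P_i+\xi(1+\xi)P_i\le(1+\xi)P_i$, i.e.\ $\xi^2P_i\le0$. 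Likewise, your ``$(1+\xi)$ times the previous bound'' step needs $\|\epsilon_{t_i}\|\le\xi P_i$, not $\xi\gamma_{t_i}$. So the Gaussian mechanism at $t_i$ must be calibrated to $P_i=\gamma_{t_i}/(1+\xi)$, which is also exactly the sensitivity the certification argument needs. Alg.~\ref{A5} literally calibrates to $\gamma_{t_i}$, and under that reading the claimed bound cannot hold: the injected noise alone typically has norm about $\gamma_{t_i}\sqrt{2d\ln(1.25/\delta)}/\varepsilon$.

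\textbf{Exponents.} Unrolling your recursion does not produce the exponents in \eqref{eeees}. The noise $\epsilon_{t_i}$ is sized at time $t_i$, so $\xi P_i$ contains factors $\rho^{t_i-t_{i'}-n^{i}_{[t_{i'}+1,t_i]}}$, and $\rho^{t_i-s-n^i_{[s+1,t_i]}}$ for the natural-forgetting tasks. These use the deletion counts known at $t_i$ and cannot be revised when later requests delete tasks in $[t_{i'}+1,t_i]$. Propagating to $t_k$ multiplies by $\rho^{t_k-t_i-n^k_{[t_i+1,t_k]}}$, giving exponent $t_k-t_{i'}-n^{i}_{[t_{i'}+1,t_i]}-n^k_{[t_i+1,t_k]}$. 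By contrast, \eqref{eeees} has the weakly smaller exponent $t_k-t_{i'}-n^k_{[t_{i'}+1,t_k]}$. Replacing the former by the latter requires $\rho\le1$, i.e.\ $\mu>0$. (Replacing $n^{i}$ by $n^k$ inside the $\kappa$-term is harmless, since that term is nondecreasing in the later count.) For $\mu<0$, which $\lambda>\max\{0,\nu-\mu\}$ permits, $\rho>1$ and the step fails. The paper does not address this either, so you should restrict to $\mu>0$ or keep the intermediate counts in the bound.

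\textbf{Organizing Step 2.} It is cleaner to do Step 2 globally rather than recursively. Along the noisy run, the identity \eqref{APPD-first-expansion} holds with the same terms, where $\Delta_s,\bar\Delta_{t_i},\tilde P_a$ are those of the noisy trajectory. It gains exactly one extra linear term, $-\sum_{i<k}\prod_{a=t_i+1,\,a\notin S_{\le t_k}}^{t_k}\tilde P_a\,\epsilon_{t_i}$. Hence $P_k\le G_k+\xi\sum_{i<k}\rho^{t_k-t_i-n^k_{[t_i+1,t_k]}}P_i$, with $G_k$ the bound of Theorem~\ref{th:T3.1} at $t_k$. Induction then gives factors $(1+\xi)^{k-i}$, and $(1+\xi)^{k-i+1}$ after the injection at $t_k$, again using $\rho\le1$ to align exponents. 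This compares directly with the final target. It also spares you from justifying how the deletions in $S_{t_k}\setminus S'_{t_k}$ move $w_{t_{k-1}}^{-S_{\le t_k}}$ away from $w_{t_{k-1}}^{-S_{\le t_{k-1}}}$, which is the state your ``previous error'' is measured against.
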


\section{Experiment details}\label{App:experiment}

\subsection{Implementation details}
All experiments were conducted on a workstation equipped with an AMD Ryzen 9 9950X 16-Core Processor and an NVIDIA GeForce RTX 5080 GPU with 16GB memory.
\paragraph{CIFAR-100.} For the main CIFAR-100 experiments, we construct a non-i.i.d. continual learning benchmark with \(T=30\) sequential tasks. Each task contains samples from a randomly selected subset of 80--100 classes. For each class, we first randomly permute all available samples and then evenly allocate them across the tasks in which that class appears, ensuring that each task contains a different class mixture while preserving balanced usage of class samples across the sequence.

We use a ResNet-18 \cite{he2016deep} backbone and replace its final classification layer with a three-layer fully connected head. The head has hidden dimensions \(512\) and \(256\), followed by a \(100\)-dimensional output layer for CIFAR-100 classification. The pretrained backbone is frozen throughout training and unlearning, so only the classification head is updated. This gives trainable parameter dimension \(d=419{,}684\) and output dimension \(v=100\).

For each sample, the loss \(\ell\) is the cross-entropy loss with an additional weight-decay term \(10^{-4}\|w\|_2^2/2\). On each task, we train the objective in~\eqref{A1} using Adam with learning rate \(0.002\), batch size \(256\), and \(200\) epochs per task. We use a StepLR scheduler with step size \(500\) and decay factor \(0.5\).

\paragraph{CIFAR-10 and MNIST.}
For CIFAR-10 and MNIST, we construct non-i.i.d. continual learning benchmarks with \(T=10\) sequential tasks. For CIFAR-10, each task contains samples from a randomly selected subset of 9-10 classes; for MNIST, each task contains samples from a randomly selected subset of 5--10 digit classes. For each class, we randomly permute all available training samples and evenly allocate them across the tasks in which that class appears, ensuring different class mixtures across tasks while preserving balanced usage of class samples.

We use a three-block convolutional neural network for both datasets.  The convolutional layers have channel sizes \(32\), \(64\), and \(128\), each followed by ReLU and \(2\times2\) max pooling. The feature map is flattened and passed through a fully connected layer with hidden dimension \(256\), followed by ReLU, dropout with rate \(0.2\), and a \(10\)-dimensional output layer. For CIFAR-10, the first convolutional layer takes three input channels and the flattened feature dimension is \(128\times4\times4\). For MNIST, the first convolutional layer takes one input channel and the flattened feature dimension is \(128\times3\times3\). CIFAR-10 images are normalized with mean \((0.4914,0.4822,0.4465)\) and standard deviation \((0.2023,0.1994,0.2010)\), while MNIST images are normalized with mean \(0.1307\) and standard deviation \(0.3081\).

For each sample, the loss \(\ell\) is the cross-entropy loss plus the term \(\frac{\lambda_{\rm sc}}{2}\|w\|_2^2\), where \(\lambda_{\rm sc}=10^{-4}\). On each task, CIFAR-10 is trained using Adam with learning rate \(0.001\), batch size \(128\), and \(200\) epochs per task. MNIST is trained using Adam with learning rate \(2\times10^{-4}\), batch size \(128\), and \(50\) epochs per task.  For both datasets, the StepLR scheduler has step size \(500\) and is stepped after every mini-batch update. The decay factor is \(0.5\) for CIFAR-10 and \(1\) for MNIST.

For certified unlearning, we adopt the commonly used privacy parameters $\varepsilon=8$ and $\delta=10^{-6}$ to calibrate the noise used in Fig.~\ref{C100_test_accuracy_bar}.

\subsection{Parameter estimation for theoretical bounds in Table~\ref{tab:approx-error} and Fig.~\ref{C100_test_accuracy_bar}}

For the CIFAR-100 model, we estimate the constants $L$, $M$, $\mu$ and $\nu$ used in Theorem~\ref{T1.1} and Proposition~\ref{T3.1}. Let $w_0$ denote the initial trainable parameter vector of the CIFAR-100 model.  We first construct a local parameter region around initialization,
\[
\mathcal{B}(w_0,25)=\{w:\|w-w_0\|_2\le 25\},
\]
and sample 100 parameter vectors from this ball.  Specifically, for each sample, we draw a random Gaussian direction $u$, normalize it to unit norm, draw a radius $r\sim {\rm Unif}(0,25)$, and set
\[
w=w_0+r\frac{u}{\|u\|_2}.
\]
At each sampled parameter vector, we temporarily load $w$ into the model and evaluate all quantities below using the same loss and regularization as in training.

For each sampled point $w$, we estimate the Hessian spectrum of the regularized objective
\[
\hat F_t(w)+\frac{\lambda_{\rm sc}}{2}\|w\|_2^2,
\]
where $\lambda_{\rm sc}=10^{-4}$. Since explicitly forming the Hessian is infeasible, we use Hessian-vector products. Given a vector $v$, we compute $H(w)v$ by automatic differentiation. We then run matrix-free power/Lanczos iteration to estimate the largest and smallest algebraic eigenvalues,
\[
\lambda_{\max}(H(w)),\qquad \lambda_{\min}(H(w)).
\]
After repeating this over all 100 sampled points, we take conservative rounded values
\[
M \ge \max_w \lambda_{\max}(H(w)),
\qquad
\mu \le \min_w \lambda_{\min}(H(w)).
\]

To estimate the Hessian approximation error $\nu$, we compare the exact Hessian operator with the approximation operators used by the unlearning algorithms. For the diagonal approximation $D(w)$, we estimate the spectral norm of the operator difference $H(w)-D(w)$ by power iteration, where each iteration only requires products of the form
\[
(H(w)-D(w))v = H(w)v-D(w)v.
\]
Similarly, for the Gauss--Newton approximation $G(w)$, we estimate
\[
\|H(w)-G(w)\|_2
\]
using matrix-free products
\[
(H(w)-G(w))v = H(w)v-G(w)v.
\]
We repeat this at all sampled parameter vectors and set
\[
\nu_{\rm diag}
\ge
\max_w \|H(w)-D(w)\|_2,
\qquad
\nu_{\rm G\text{-}N}
\ge
\max_w \|H(w)-G(w)\|_2,
\]
again rounding upward to obtain conservative constants.

Finally, we estimate the Lipschitz constant $L$ of the objective over the same sampled region. At each sampled parameter point, we evaluate finite-difference ratios between sampled parameter pairs,
\[
\frac{|\hat F_t(w)-\hat F_t(w')|}{\|w-w'\|_2},
\]
and take the largest observed value over the sampled region and round it upward before using it in the bound computation.

\begin{table}[H]
\centering
\caption{Constants used to instantiate the CIFAR-100 theoretical bounds.}
\label{tab:constants}
\setlength{\tabcolsep}{6pt}
\begin{tabular}{l|c}
\hline
Quantity & Value \\
\hline
Hessian spectral upper bound $M$ & 5 \\
Hessian spectral lower bound $\mu$ & -0.8 \\
Diagonal Hessian error $\nu_{\mathrm{diag}}$ & 4.9 \\
Gauss--Newton Hessian error $\nu_{\mathrm{G\text{-}N}}$ & 4.7 \\
Lipschitz constant $L$ & 1.17 \\
\hline
\end{tabular}
\end{table}

\subsection{Unlearning request sequences}

Table~\ref{tab:unlearning-tasks} gives the two CIFAR-100 unlearning sequences used to study the temporal-disorder effect in Fig.~\ref{fig:SYN}. Both sequences eventually delete the same task set. Fwd-Sync groups requests so that newly deleted tasks follow the previous unlearning time, while Async delays part of the same deletions to later times.

\begin{table}[H]
\centering
\caption{Forward-synchronous and asynchronous CIFAR-100 unlearning sequences.}
\label{tab:unlearning-tasks}
\begin{tabular}{l|cc}
\hline
Unlearning time
& Fwd-Sync unlearning tasks
& Async unlearning tasks\\
\hline
9  & 3,4,6,8 & 3,4,6,8 \\
15 & 10,11,12,13,14 & 11 \\
17 & $\emptyset$ & 12 \\
18 & $\emptyset$ & 13 \\
20 & $\emptyset$ & 14 \\
25 & 20,23 & 23 \\
26 & 24,25 & 20,25 \\
28 & $\emptyset$ & 10,24 \\
\hline
\end{tabular}
\end{table}

\subsection{Additional MNIST and CIFAR-10 results}

Fig.~\ref{F112} reports the smaller-scale MNIST and CIFAR-10 experiments. Across both datasets, Hessian-based unlearning consistently reduces approximation error compared with natural forgetting. The exact, Gauss--Newton, and diagonal Hessian variants have similar trends, suggesting that diagonal Hessian storage is often sufficient in these settings. The final-error sweep over $\lambda$ further shows that larger regularization reduces unlearning error by limiting parameter drift.

\begin{figure}[H]
\centering
\begin{minipage}{0.5\columnwidth}
    \centering
    \includegraphics[width=0.8\linewidth]{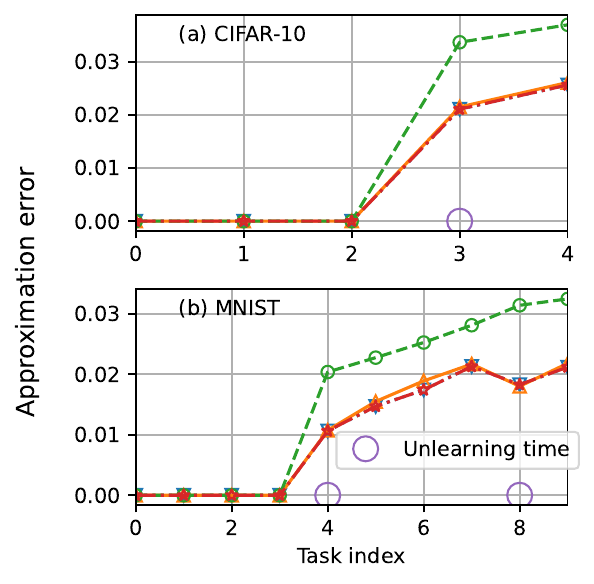}
    \vspace{-0.2cm}
    \subcaption{}\label{non_generalization_loss_R_pre}
\end{minipage}\hfill
\begin{minipage}{0.5\columnwidth}
    \centering
    \includegraphics[width=0.8\linewidth]{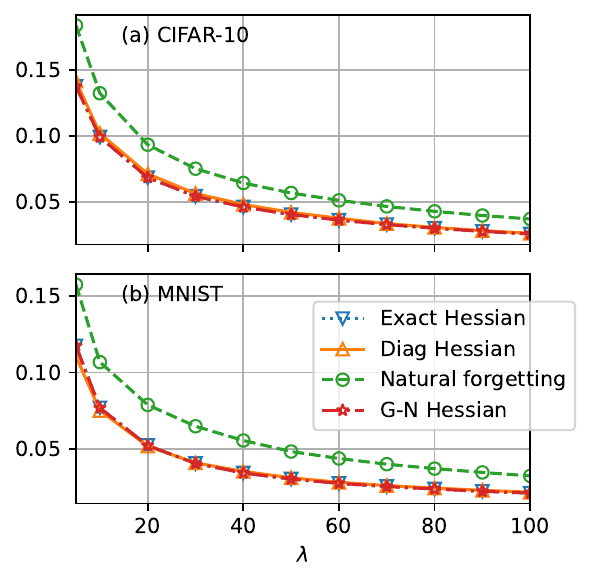}
    \vspace{-0.2cm}
    \subcaption{}\label{non_generalization_loss_R}
\end{minipage}
\vspace{-0.2cm}
\caption{Additional approximation-error experiments on MNIST and CIFAR-10. (a) Error across task index. (b) Final error versus $\lambda$.}
\label{F112}
\end{figure}

\subsection{Retention--unlearning trade-off}

To illustrate the tension between retaining knowledge and reducing unlearning loss, we run CIFAR-100 experiments with different $\lambda$. Without unlearning, we evaluate the final test accuracy of the $\ell_2$-regularized CL model; with unlearning, we evaluate the final empirical approximation error using the Gauss-Newton Hessian method and natural forgetting. Increasing $\lambda$ decreases the unlearning error because model drift is more strongly controlled, but it also decreases pure-CL test accuracy because the learner becomes less adaptive to new tasks.


\begin{figure}[H]
    \centering
    \begin{subfigure}{0.45\textwidth}
        \centering
        \includegraphics[width=\textwidth]{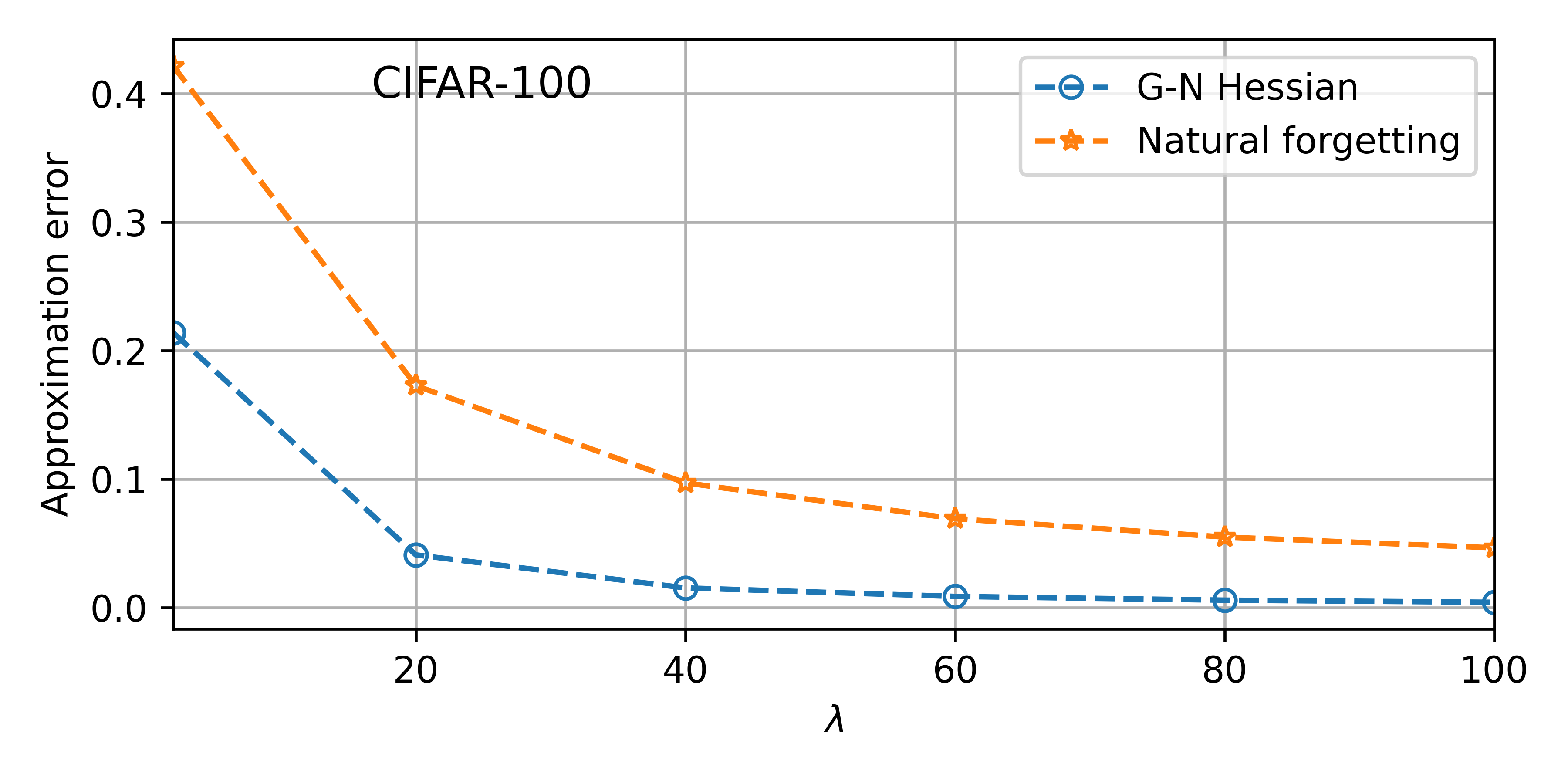}
        \caption{Final approximation error after unlearning}
        \label{fig:lambda_loss}
    \end{subfigure}
    \begin{subfigure}{0.45\textwidth}
        \centering
        \includegraphics[width=\textwidth]{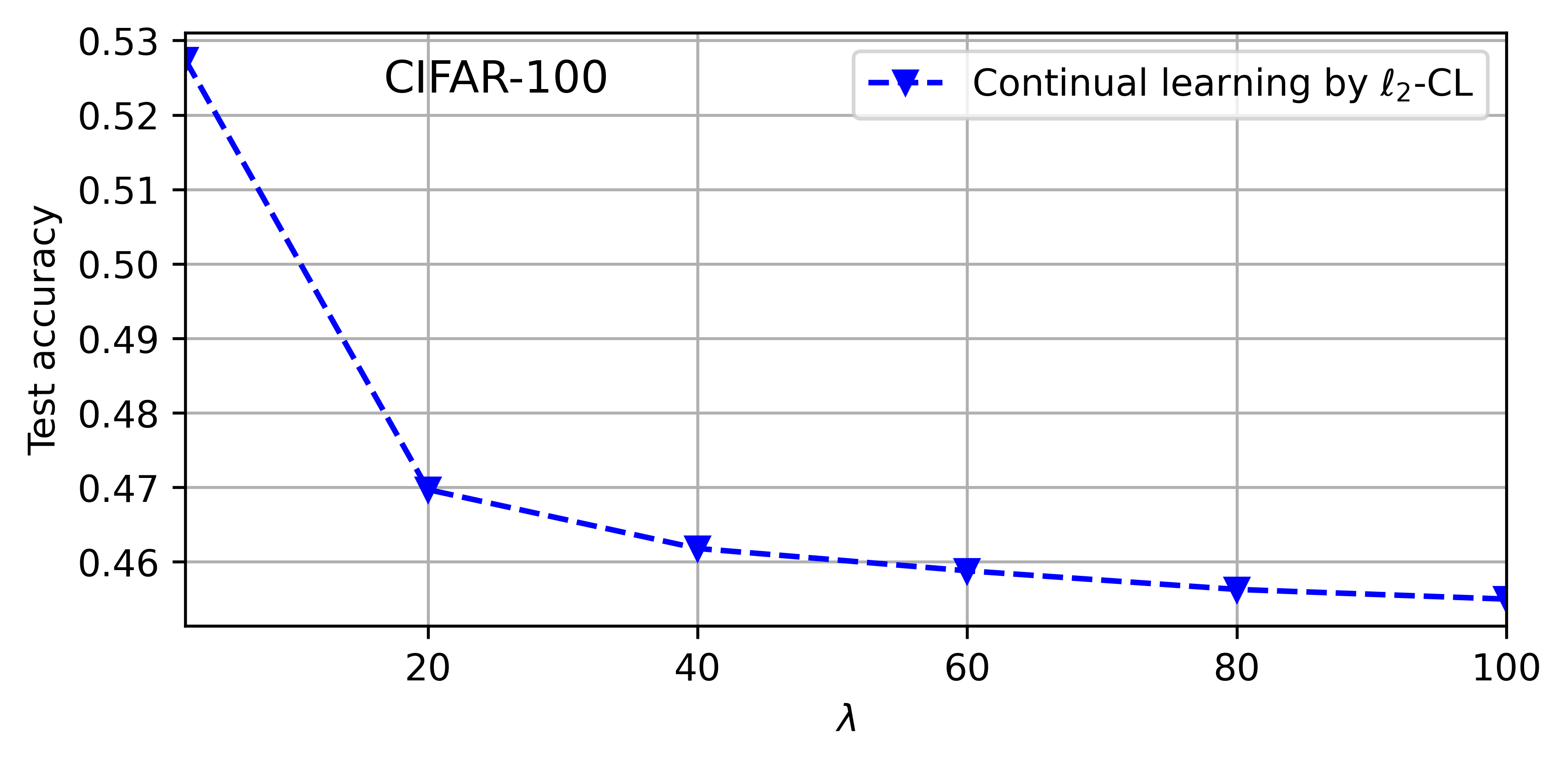}
        \caption{Final test accuracy after CL without unlearning}
        \label{fig:lambda_acc}
    \end{subfigure}
    \caption{(a) Final approximation error $\|w_T^{-\mathbf{S}_{1:T}}-w_T^{-{S}_{\leq T}}\|$ versus the regularization parameter $\lambda$, after unlearning by Alg. 1 and Alg. 2. (b) Final test accuracy versus the regularization parameter $\lambda$, after training by the $\ell_2$-CL algorithm without unlearning.}
    \label{fig:lambda_results}
\end{figure}

\end{document}